\documentclass{article}

\usepackage[preprint]{neurips_2026}

\usepackage[utf8]{inputenc} 
\usepackage[T1]{fontenc}    
\usepackage{hyperref}       
\usepackage{url}            
\usepackage{booktabs}       
\usepackage{amsfonts}       
\usepackage{nicefrac}       
\usepackage{microtype}      
\usepackage{xcolor}         

\usepackage{graphicx}       
\usepackage{subcaption}     
\usepackage{amsmath}        
\usepackage{amssymb}        
\usepackage{mathtools}      
\usepackage{amsthm}         

\usepackage[capitalize,noabbrev]{cleveref}

\theoremstyle{plain}
\newtheorem{theorem}{Theorem}[section]

\newtheorem{lemma}[theorem]{Lemma}

\theoremstyle{definition}

\newtheorem{assumption}[theorem]{Assumption}
\newtheorem*{remark}{Remark}

\usepackage{titletoc}
\usepackage[titletoc]{appendix}

\makeatletter
\renewcommand{\tableofcontents}{%
    \section*{\contentsname}%
    \begingroup
    \setlength{\baselineskip}{10pt} 
    \@starttoc{toc}%
    \endgroup
}
\makeatother

\allowdisplaybreaks[4]

\title{On the Blessing of Pre-training in \\ Weak-to-Strong Generalization}

\author{
Wei Yao\textsuperscript{1,4}, Wang Zhaoyang\textsuperscript{1}, Gengze Xu\textsuperscript{1}, Chen Qian\textsuperscript{1}, Dongrui Liu\textsuperscript{2}, \\
\textbf{Ziqiao Wang\textsuperscript{3}, Yong Liu\textsuperscript{1}$^{\dag}$, Yunbei Xu\textsuperscript{4}}$^{\dag}$ \\ \\
$^1$ Renmin University of China, 
$^2$ Shanghai Jiao Tong University, \\
$^3$ Tongji University, 
$^4$ National University of Singapore \\ \\
\tt\footnotesize\{wei.yao,liuyonggsai\}@ruc.edu.cn,~~yunbei@nus.edu.sg \\
}

\begin{document}

\maketitle

\begin{abstract}
The paradigm of Weak-to-Strong Generalization (W2SG) suggests that a pre-trained strong model can surpass its weak supervisor, yet the decisive role of pre-training remains theoretically and empirically under-explored. 
In this work, we identify pre-training as the essential prerequisite for the emergence of W2SG. 
Theoretically, we formalize the W2SG problem within a high-dimensional single-index model framework using spiked Gaussian data, modeling pre-training as a spectral initialization step. 
Building upon prior impossibility results regarding the failure of learning under random initialization, we prove that W2SG is achievable when pre-training provides a geometric warm start that places the model within an ``effective region'' characterized by a perturbed strong-convexity geometry. 
Within this region, we derive a rigorous generalization bound that naturally captures the optimization dynamics: an initial performance improvement followed by a saturation bottleneck dictated by the weak supervisor's bias. 
Empirically, we first validate all our assumptions and theoretical insights through controlled synthetic simulations.
Finally, through a massive-scale evaluation of hundreds of intermediate pre-training checkpoints from large language models, we demonstrate that W2SG is not an innate capability but emerges via a phase transition tightly coupled with the progression of pre-training.
\let\thefootnote\relax\footnotetext{$^{\dag}$ Corresponding author}
\end{abstract}

\section{Introduction}

As Large Language Models (LLMs) continue to scale, they are rapidly approaching, and in some domains surpassing, human-level capabilities~\citep{achiam2023gpt,team2024gemini,guo2025deepseek}. 
This widening capability gap gives rise to a fundamental alignment challenge~\citep{openai_superalignment}: how can we reliably supervise AI systems that may be much more capable than their human overseers? 
To address this, \citet{openai_wtsg,burns2023weak} proposed the paradigm of Weak-to-Strong Generalization (W2SG):

\begin{quote}
\textit{``We find that when we naively finetune strong \textbf{pre-trained} models on labels generated by a weak model, they consistently perform better than their weak supervisors, a phenomenon we call weak-to-strong generalization.''}
\end{quote}


Since the proposal of W2SG, the field has witnessed a surge of research aimed at understanding how strong models can surpass their weak supervisors. 
Empirically, researchers have characterized W2SG's fundamental properties~\citep{shin2024weak,goel2025great} and developed advanced methodologies to enhance performance~\citep{ye2025iterative,cui2024bayesian}. 
Parallel theoretical efforts have sought to pinpoint feasibility conditions for the emergence of W2SG~\citep{charikar2024quantifying,lang2024theoretical,wu2024provable}.
However, a critical factor remains less explored: the decisive role of \textit{\textbf{pre-training}}. 
This gap is likely attributable to formidable barriers in two domains: (1) empirically, tracing the evolutionary dynamics of W2SG throughout the entire pre-training trajectory of LLMs is computationally prohibitive; and (2) theoretically, characterizing how pre-training shapes the optimization landscape in high-dimensional, non-convex settings poses significant analytical challenges. 
Consequently, this brings us to the central question motivating our work: What is the fundamental impact of pre-training on the emergence of W2SG?

To answer this question, we theoretically formalize the W2SG problem within a high-dimensional single-index model framework~\citep{arous2021online} with spiked Gaussian data. This model has served as a representative setting across statistical physics, neural network dynamics, and language-model alignment~\citep{kallus2025semiparametric}. 
See Section~\ref{sec:theory} 
for why these choices provide a natural analytical model of real-world W2SG in representation learning for our purposes.
We model the unsupervised pre-training process as a spectral initialization step to characterize the fundamental mechanism of W2SG. 
Building upon prior impossibility results regarding the failure of learning under random initialization~\citep{jones-mccormick2025provable}, we prove that W2SG is achievable when pre-training provides a geometric warm start that places the model within an ``effective region.'' 
Crucially, we prove that the loss landscape within this region exhibits a perturbed strong-convexity geometry, which guarantees a strict restoring force towards the ground truth despite the presence of imperfect supervision. 
We establish that navigating this basin requires several prerequisites: a detectable latent signal in the pre-training data, sufficient intrinsic alignment with the downstream task, an informative weak supervisor, and a stable learning rate.
Under these conditions, we derive a generalization bound that naturally captures the optimization dynamics: an initial performance improvement driven by the signal pull, followed by a saturation bottleneck dictated by the weak supervisor's noise. 
Together, these results formalize why pre-training is indispensable: it drops the model inside a geometric basin of attraction, enabling the gradient to overpower the noise.

Empirically, we first validate all our assumptions and theoretical insights through controlled synthetic simulations.
We then bridge the gap to practical settings by investigating the dynamics of actual pre-training trajectory of LLMs.
Through a massive-scale evaluation of hundreds of intermediate pre-training checkpoints from the Pythia~\citep{biderman2023pythia} and OLMo~\citep{Groeneveld2023OLMo} model families, we demonstrate that W2SG emerges via a phase transition tightly coupled with the progression of pre-training. 
Crucially, as the pre-training deepens, the model transitions from a regime of inevitable W2SG failure into a successful basin, mirroring the structural prerequisites established by our theoretical bounds.
Intriguingly, this emergent phase transition echoes recent findings in the mechanistic interpretability literature regarding trustworthiness~\citep{qian-etal-2024-towards}. Motivated by this parallel, we observe a robust positive correlation between W2SG performance and the emergence of linear representations~\citep{park2024the}—specifically, the extent to which the model encodes high-level concepts as linear directions in its activation space. 
While of independent interest, this robust correlation indicates that the geometric structure progressively acquired during pre-training is fundamentally linked to the model's ability to elicit W2SG.


\section{Related Work}

The pursuit of superalignment~\citep{openai_superalignment} led to the proposal of Weak-to-Strong Generalization (W2SG)~\citep{burns2023weak}, a paradigm demonstrating that strong pre-trained models can transcend the limitations of weaker supervisory signals. 
This discovery catalyzed a wave of empirical studies examining W2SG's fundamental characteristics~\citep{shin2024weak,yang2024super,goel2025great,yao2025understanding} and extending its application to broader tasks and scenarios~\citep{guo2024vision,yang-etal-2024-weak,pawelczyk2024generalizing,zhou2025weak}. 
Simultaneously, methodological advancements have emerged to enhance strong model performance, notably through iterative refinement techniques~\citep{lyu2024macpo,ye2025iterative,lang2025debate} and the aggregation of multiple weak supervisors~\citep{agrawal2024ensemw2s,sang2024improving,liu2024co,cui2024bayesian,somerstep2025limitations}. 
Theoretical investigations have paralleled these efforts, aiming to pinpoint the feasibility conditions for W2SG. These inquiries range from deriving generalization bounds under convex assumptions~\citep{charikar2024quantifying,mulgund2025relating,yao2025understanding,yao2025revisiting} and analyzing error correction via robustness frameworks~\citep{lang2024theoretical,shin2024weak}, to rigorous characterizations under Gaussian settings involving distribution shift and intrinsic dimension~\citep{ildiz2025highdimensional,somerstep2025transfer,dong2025discrepancies,jeon2025weaktostrong,liu2026does}.
Recent work has also begun to explore feature learning and representation dynamics~\citep{xue2025representations,wu2024provable,oh2025from,moniri2025mechanisms,medvedev2025weak}.
However, to our best knowledge, the specific influence of pre-training on W2SG within high-dimensional and non-convex optimization landscapes remains insufficiently explored in both theory and practice.


We review the W2SG literature in the main paper and defer related work on the linear representation hypothesis to Appendix~\ref{related:lrh}. Recent theoretical studies also show that W2SG can arise in certain non-pretrained settings—for example, via strong regularization effects or architectural capacity. These results clarify that W2SG is statistically possible under {\it specific} modeling assumptions and regimes. Our focus is complementary: we argue that, {\it in the context of  LLMs}, pre-training is a central and practically relevant mechanism shaping when and how W2SG reliably emerges. We provide a detailed discussion in Appendix~\ref{related:discussion}  and clarify the scope and implications of each line of work.

\section{Theoretical Insights} \label{sec:theory}

In this section, we formalize the W2SG problem in a nonlinear, high-dimensional setting via the single-index model, which has become a canonical theoretical lens for analyzing the dynamics of one-hidden-layer neural networks~\citep{xu2025towards}. More recently, it has been argued that aligning language models can be viewed as implicitly learning a single-index model~\citep{kallus2025semiparametric}.
Unlike linear or kernel-based regimes, the single-index model captures the nonconvex nature of feature learning~\citep{bietti2022learning}, making it a natural analytical apparatus to study how pre-training provides the necessary warm start to escape high-dimensional noise and facilitate W2SG.

\subsection{Problem Setting} \label{sec:prob_setting}

\noindent \textbf{Data Distribution.}
To theoretically formalize the effect of pre-training, we must model the underlying geometry of the data. Standard isotropic Gaussian inputs are fundamentally insufficient, as they represent a featureless space lacking any privileged semantic directions. Instead, we model the inputs using \textit{spiked Gaussian data} $\mathbf{x} \sim \mathcal{N}(\mathbf{0}, \mathbf{\Sigma})$, where the spiked covariance matrix is defined as $\mathbf{\Sigma} = \mathbf{I}_d + \lambda \mathbf{v}\mathbf{v}^\top$ with $\lambda > 0$ and $\|\mathbf{v}\|_2 = 1$. The ``spike'' vector $\mathbf{v}$ mathematically represents the dominant latent concept extracted during pre-training. 
This modeling choice is further motivated by recent work on the linear representation hypothesis; see Appendix~\ref{related:lrh}. 
The ground-truth concept is defined as $y_{\text{gt}} = f(\mathbf{v}_*^\top \mathbf{x})$, where $\mathbf{v}_* \in \mathcal{K}$ is the target parameter vector, and $f: \mathbb{R} \to \mathbb{R}$ is a smooth and non-linear activation function. Let $z = \mathbf{v}_*^\top \mathbf{x}$ be the projection of the input along the ground-truth direction, which behaves as a scalar random variable $z \sim \mathcal{N}(0, \sigma_z^2)$ under the spiked Gaussian measure. We define the \textit{Pre-training Alignment Coefficient} $\rho := |\mathbf{v}^\top \mathbf{v}_*|$ to quantify the extent to which the pre-training data structure supports the downstream ground truth.

\noindent \textbf{Learning Objectives and Model Entities.}
To formalize the training objectives, we define the \textit{sample-wise loss} for a model $\mathbf{w}$ given a sample $(\mathbf{x}, y)$ as $\ell(\mathbf{w}; \mathbf{x}, y) = \frac{1}{2} ( f(\mathbf{w}^\top \mathbf{x}) - y )^2$.
In the W2SG setting, we define three distinct entities based on this loss:
(1) \textbf{Weak Supervisor}: A fixed model parameterized by $\mathbf{v}_{\text{weak}} \in \mathcal{K}$ that generates imperfect labels $y_{\text{weak}} = f(\mathbf{v}_{\text{weak}}^\top \mathbf{x})$. Here, $\mathbf{v}_{\text{weak}}$ is a suboptimal but informative approximation of $\mathbf{v}_*$. Specifically, we assume it maintains a strictly positive, dimension-independent geometric correlation with the ground truth: $\mathbf{v}_{\text{weak}}^\top \mathbf{v}_* \ge \gamma$ for a constant $\gamma > 0$. This ensures the supervisor provides non-trivial guidance that is strictly better than random guessing.
(2) \textbf{Strong Student Model}: A high-capacity model $\mathbf{w} \in \mathcal{K}$ trained to minimize the sample-wise loss with respect to weak labels: $\ell(\mathbf{w}; \mathbf{x}, y_{\text{weak}})$.
(3) \textbf{Strong Ceiling Model}: A hypothetical high-capacity model $\mathbf{w}_c \in \mathcal{K}$ trained directly to minimize the sample-wise loss with respect to ground-truth labels: $\ell(\mathbf{w}_c; \mathbf{x}, y_{\text{gt}})$.

\noindent \textbf{Population Risk and W2SG Definition.}
While the models optimize the sample-wise loss, our theoretical analysis relies on the population loss. We define the generalization risk with respect to the ground truth as $\mathcal{L}(\mathbf{w}) = \mathbb{E}_{\mathbf{x}}[\ell(\mathbf{w}; \mathbf{x}, y_{\text{gt}})]$.
Accordingly, the weak supervisor has a non-vanishing risk bounded away from zero ($\mathcal{L}(\mathbf{v}_{\text{weak}}) > 0$). However, its positive geometric correlation ($\gamma > 0$) guarantees its expected risk is strictly lower than that of an uninformative random state.
The student, while trained on weak labels, effectively minimizes the weak population loss $\mathcal{L}_{\text{w2s}}(\mathbf{w}) = \mathbb{E}_{\mathbf{x}}[\ell(\mathbf{w}; \mathbf{x}, y_{\text{weak}})]$.
Formally, W2SG is achieved when the student's generalization risk strictly surpasses that of its supervisor: $\mathcal{L}(\mathbf{w}) < \mathcal{L}(\mathbf{v}_{\text{weak}})$.

\noindent \textbf{Optimization Algorithm.} 
We analyze the training dynamics under spherical Projected Gradient Descent (PGD) in the online setting~\citep{arous2021online}. 
Given a stream of $N$ fresh samples, the algorithm performs $N$ iterations. The update rule for the strong student at step $t$ ($0 \le t < N$) is:
\begin{equation}
    \mathbf{w}_{t+1} = \mathcal{P}_{\mathcal{K}} \left( \mathbf{w}_t - \eta_t \mathbf{g}_t \right),
\end{equation}
where $\mathcal{K}=\mathbb{S}^{d-1}$ is a unit sphere containing $\mathbf{v}_*$, and $\mathcal{P}_{\mathcal{K}}$ is the Euclidean projection.
$\mathbf{g}_t$ is the stochastic gradient computed on the $t$-th sample $(\mathbf{x}_t, y_t)$ using the sample-wise loss as $\mathbf{g}_t = \nabla_{\mathbf{w}} \ell(\mathbf{w}_t; \mathbf{x}_t, y_t)$.
We depart from Empirical Risk Minimization (ERM)-based W2SG literature~\citep{lang2024theoretical,charikar2024quantifying,mulgund2025relating} by shifting the focus from hypothesis space bounds to algorithmic optimization trajectories.
We first introduce the following standard conditions.

\begin{assumption} \label{assum:activation_properties_spiked}

We assume the activation function $f$, the weak supervisor, and the high-dimensional scaling limits satisfy the following:

\noindent (1) \textbf{Information Exponent $k \ge 3$ and non-degeneracy}: The activation function lacks low-order informative components such that $\mathbb{E}[f'(z)] = 0$ and $\mathbb{E}[f''(z)] = 0$ for $z \sim \mathcal{N}(0,\sigma_z^2)$. 
The function is non-degenerate: $\mu_0 \triangleq \min \left( \mathbb{E}[f'(z)^2], \mathbb{E}\left[f'(z)^2 \frac{z^2}{\sigma_z^2}\right] \right) > 0$.

\noindent (2) \textbf{Bounded derivatives}: There exist positive constants $M_1,M_2,M_3$ such that uniformly for all $x \in \mathbb{R}$, $|f'(x)|\leq M_1, |f''(x)|\leq M_2, |f'''(x)|\leq M_3$.

\noindent (3) \textbf{Bounded weak population bias}: The norm difference between the weakly supervised expected gradient and the true expected gradient is bounded by a dimension-independent constant $\phi \ge 0$: $\|\nabla \mathcal{L}_{\text{w2s}}(\mathbf{w})-\nabla \mathcal{L}(\mathbf{w})\|_2 \leq \phi$.

\noindent (4) \textbf{High-dimensional scaling limits}: The step size $\eta = \frac{\delta}{d}$. 
In the high-dimensional limit $d,N \to \infty$, $\alpha=\frac{N}{d}$ is a fixed constant.
\end{assumption}

\begin{remark}
    An analytical consequence of the bounded derivatives is that the stochastic gradients are inherently controlled. As we establish in Lemma~\ref{lemma:conditional_variance} in the appendix, the expected squared norm of the stochastic gradient is bounded by $\mathbb{E}[\|\mathbf{g}_t\|_2^2] \le G d$, where $G>0$ is a constant completely determined by the activation derivative limits and the data covariance structure.
\end{remark}

Conditions (1) and (2) establish the structural and regularity properties of the target concept $f$. The information exponent $k \ge 3$~\citep{arous2021online,jones-mccormick2025provable,bietti2022learning} mathematically formalizes ``hard'' concepts lacking trivial global linear or quadratic trends. This specifically models the difficulty of W2SG tasks (e.g., complex reasoning) that cannot be learned from scratch and strictly necessitate pre-training. Concurrently, the bounded derivatives ensure a well-behaved optimization landscape, which are widely satisfied by practical modern activations such as GELU~\citep{hendrycks2016gaussian} and Softplus~\citep{glorot2011deep}.

Conditions (3) and (4) capture the scaling interplay between stochastic noise and population signals. The linear sample complexity $N = \Theta(d)$ aligns with standard W2SG experimental settings~\citep{yang2024super,yao2025understanding}. While the single-sample stochastic gradient inherently scales as $\mathcal{O}(d)$ in high dimensions~\citep{vershynin2018high}, the systematic weak supervision bias $\phi$ is evaluated over population expectations. This expectation integrates out the isotropic noise and restricts the gradient strictly to a low-dimensional subspace, reducing the bias $\phi$ to an $\mathcal{O}(1)$ constant independent of $d$. 
This formulation aligns with the standard analysis of first-order methods with inexact oracles~\citep{devolder2014first,ajalloeian2020convergence}, where the gradient is accessible only up to a bounded error $\phi$. 
A more detailed justification for all the aforementioned assumptions is provided in Appendix~\ref{justify:1}.

\subsection{The Failure of Strong Ceiling Model from Random Initialization}

First, we establish the fundamental limitations of W2SG in the absence of pre-training. Under our spiked Gaussian and hard concept setting, we demonstrate that a randomly initialized strong ceiling model fails to learn even with ground-truth labels. To formalize this, we directly adapt a recent convergence result regarding the failure of online SGD in high-dimensional single-index models.

\begin{lemma}[Adapted from Theorem 3.4 of~\citet{jones-mccormick2025provable}] \label{thm:random_1}

Given the problem setting specified in~\cref{sec:prob_setting}.
Suppose Assumption~\ref{assum:activation_properties_spiked}(1) and (4) hold. 
Consider a strong ceiling model trained directly on the ground-truth labels $y_{\text{gt}}$ via spherical PGD. If the initialization $\mathbf{w}_0$ is chosen uniformly at random from $\mathbb{S}^{d-1}$, then in the high-dimensional limit $d,N \to \infty$, the following holds with high probability: $\left| \mathbb{E}[\|\mathbf{w}_N - \mathbf{v}_*\|_2^2] - \|\mathbf{w}_0 - \mathbf{v}_*\|_2^2 \right| \to 0$.
\end{lemma}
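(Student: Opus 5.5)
Since $\mathbf{w}_t,\mathbf{v}_*\in\mathbb{S}^{d-1}$, we have $\|\mathbf{w}_t-\mathbf{v}_*\|_2^2 = 2-2m_t$ with $m_t := \mathbf{w}_t^\top\mathbf{v}_*$. The statement is therefore equivalent to $|\mathbb{E}[m_N]-m_0|\to 0$: the overlap with the ground truth does not move on the linear time scale $N=\alpha d$. The plan is to reduce the spherical PGD dynamics to a low-dimensional system of summary statistics, namely $m_t$, the spike overlap $r_t := \mathbf{w}_t^\top\mathbf{v}$, and possibly $\mathbf{w}_t^\top\mathbf{\Sigma}\mathbf{w}_t$. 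This follows the framework of Ben Arous et al.\ as used in Theorem 3.4 of \citet{jones-mccormick2025provable}. The task is then to check that our setting (spiked covariance, information exponent $k\ge 3$, $\eta=\delta/d$, $N=\alpha d$) fits the hypotheses of that theorem.

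First, write the one-step update for $m_t$. Expanding the projection $\mathcal{P}_{\mathcal{K}}(\mathbf{w}_t-\eta\mathbf{g}_t) = (\mathbf{w}_t-\eta\mathbf{g}_t)/\|\mathbf{w}_t-\eta\mathbf{g}_t\|_2$ to second order gives
\[
m_{t+1} = m_t - \eta\,\mathbf{v}_*^\top\nabla_{\mathbb{S}}\mathcal{L}(\mathbf{w}_t) - \eta\,\mathbf{v}_*^\top\boldsymbol{\xi}_t + \mathcal{O}\bigl(\eta^2\|\mathbf{g}_t\|_2^2\bigr)\,|m_t| + \text{h.o.t.},
\]
where $\boldsymbol{\xi}_t$ is the martingale-difference noise. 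Summing over $t<N$, we control three contributions separately:
\begin{itemize}
\item The martingale part has variance $\eta^2\sum_t\mathbb{E}[(\mathbf{v}_*^\top\mathbf{g}_t)^2] = \mathcal{O}(N\eta^2) = \mathcal{O}(\alpha\delta^2/d)\to 0$, by Doob's inequality and the bounded derivatives.
\item The projection correction is $\eta^2\mathbb{E}\|\mathbf{g}_t\|_2^2\,|m_t| \le \eta^2 G d\,|m_t|$. Summed over $N$ steps this gives $\mathcal{O}(\alpha\delta^2)\cdot\sup_t|m_t|$.
\item The population drift is the remaining term.
\end{itemize}

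Second, bound the drift. Compute $\mathbb{E}[f'(\mathbf{w}^\top\mathbf{x})(f(\mathbf{w}^\top\mathbf{x})-f(z))\,\mathbf{x}^\top\mathbf{v}_*]$ via a Hermite expansion of $f$ in the jointly Gaussian pair $(\mathbf{w}^\top\mathbf{x}, z)$, whose correlation is $\propto m+\lambda r\rho$. Because $\mathbb{E}f'=\mathbb{E}f''=0$, the coefficients of order $1$ and $2$ vanish, so the correlation-dependent part of the drift is $\mathcal{O}(|m+\lambda r\rho|^{k-1})$ with $k\ge3$. Under uniform random initialization both $m_0$ and $r_0$ are $\mathcal{O}_P(d^{-1/2})$, so the drift is $\mathcal{O}(d^{-1})$ per step. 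Multiplying by $\eta$ and summing over $N$ steps gives $\mathcal{O}(\alpha\delta/d)\to 0$.

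The main obstacle is to show that these small overlaps \emph{persist}, i.e.\ that $\sup_{t\le N}|m_t|$ and $\sup_{t\le N}|r_t|$ stay $\mathcal{O}_P(d^{-1/2})$ up to logarithmic factors. This is needed both for the drift bound and to close the $\mathcal{O}(\alpha\delta^2)|m_t|$ correction. The spike direction is the delicate part, because the covariance term $\lambda\mathbf{v}\mathbf{v}^\top$ produces a drift in $r_t$ that does not vanish by the information-exponent argument. That drift acts only on the norm-type statistic and is linear in $r_t$. We would handle it with a Gr\"onwall-type bootstrap on a stopping time: on the event that the overlaps remain below $d^{-1/2}\log d$, the linear drift for $r_t$ has an $\mathcal{O}(1)$ rate over time $\alpha$. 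This keeps $r_t$ within a constant factor of $d^{-1/2}$. The $m_t$ drift is then $(k-1)$-th order, and the martingale bounds keep both overlaps inside the event with high probability. This is precisely the content of \citet{jones-mccormick2025provable}'s Theorem 3.4, so we would verify its hypotheses and invoke it rather than redo the bootstrap.

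Finally, the $\mathcal{O}(\alpha\delta^2)\,|m_t|$ correction then contributes $o(1)$ as well. Taking expectations and converting back through $\|\mathbf{w}-\mathbf{v}_*\|_2^2 = 2-2m$ yields the claim.
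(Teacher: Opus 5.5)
Your proposal is correct and follows the paper's route. The paper gives no independent argument for this lemma and simply imports Theorem 3.4 of \citet{jones-mccormick2025provable}; your drift/martingale/projection bookkeeping likewise defers the one essential step, namely that $\sup_{t\le N}|m_t|$ and $\sup_{t\le N}|r_t|$ stay $\tilde{\mathcal{O}}(d^{-1/2})$ on the $N=\alpha d$ time scale, to that same theorem.

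One small correction to your heuristic concerns the spike-induced norm-type term, which comes from the dependence of $\mathbb{E}[f(\mathbf{w}^\top\mathbf{x})^2]$ and $\mathbb{E}[f(\mathbf{w}^\top\mathbf{x})]$ on $\mathbf{w}^\top\mathbf{\Sigma}\mathbf{w}$. This term also drives $m_t$, with tangential component of order $\lambda\rho\,|r_t|$, so the drift in $m_t$ is $\mathcal{O}(d^{-1/2})$ rather than $\mathcal{O}(d^{-1})$ per step. Under the same bootstrap it still sums to $\mathcal{O}(\alpha\delta\, d^{-1/2})\to 0$.
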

By standard high-dimensional probability~\citep{vershynin2018high}, a random initialization $\mathbf{w}_0$ has a vanishing correlation $\mathcal{O}(d^{-1/2})$ with the target $\mathbf{v}_*$. Lemma~\ref{thm:random_1} indicates that the optimization dynamics fail to amplify this weak initial signal against stochastic noise. 
Consequently, the strong model cannot escape this random initialization trap and remains statistically in an uninformative state ($\langle \mathbf{w}_N, \mathbf{v}_* \rangle \to 0$). 
This causes its expected generalization risk to degenerate into an uninformative baseline.
In contrast, a valid weak supervisor $\mathbf{v}_{\text{weak}}$ inherently possesses a dimension-independent geometric correlation with the ground truth ($\langle \mathbf{v}_{\text{weak}}, \mathbf{v}_* \rangle \ge \gamma > 0$), which ensures its expected risk is lower than the uninformative baseline. 
Therefore, in the high-dimensional limit, the randomly initialized strong ceiling model without pre-training inherently underperforms the weak supervisor---even when trained on ground-truth labels. 
We provide a visual illustration of this geometric mechanism in Figure~\ref{fig:effective_region} in the appendix, with further empirical validation presented in the experiments section.

\subsection{The Blessing of Pre-training in W2SG}

Next, we demonstrate how pre-training enables the strong student to escape the aforementioned trap and ultimately surpass its weak supervisor. 
Since the weak supervisor introduces unavoidable bias, the optimization goal is to reach a parameter state that is geometrically close to $\mathbf{v}_*$.
We first establish the following perturbed strong convexity result under our W2SG setting.

\begin{theorem}[Proved in Appendix~\ref{appendix:local_strong_convexity_spiked}] \label{assumption_1}

Given the problem setting specified in~\cref{sec:prob_setting}.
Let Assumption~\ref{assum:activation_properties_spiked} hold.
There exists a constant $\tau \in (0, 1)$ defining a local region $\mathcal{R} = \{ \mathbf{w} \in \mathbb{S}^{d-1} : \mathbf{w}^\top \mathbf{v}_* \ge \tau \}$.
Within this local spherical cap $\mathcal{R}$, the weak-to-strong population loss landscape satisfies a perturbed strong convexity condition with respect to the ground truth $\mathbf{v}_*$:
\begin{align}\label{eq:one_point_strong}
\langle \nabla \mathcal{L}_{\text{w2s}}(\mathbf{w}), \mathbf{w} - \mathbf{v}_* \rangle \ge \mu \|\mathbf{w} - \mathbf{v}_*\|_2^2 - \phi \|\mathbf{w} - \mathbf{v}_*\|_2,
\end{align}
where $\mu = \frac{\mu_0}{2}$ denotes the strong convexity constant of the underlying ground-truth objective, and $\phi \ge 0$ bounds the systematic gradient bias introduced by the weak supervisor.
\end{theorem}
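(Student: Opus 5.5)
The proof separates the weak-supervision bias from the ground-truth geometry. Write $\nabla \mathcal{L}_{\text{w2s}}(\mathbf{w}) = \nabla \mathcal{L}(\mathbf{w}) + \mathbf{b}(\mathbf{w})$ with $\mathbf{b}(\mathbf{w}) := \nabla \mathcal{L}_{\text{w2s}}(\mathbf{w}) - \nabla \mathcal{L}(\mathbf{w})$. By Assumption~\ref{assum:activation_properties_spiked}(3), $\|\mathbf{b}(\mathbf{w})\|_2 \le \phi$, so Cauchy--Schwarz gives $\langle \mathbf{b}(\mathbf{w}), \mathbf{w}-\mathbf{v}_*\rangle \ge -\phi\|\mathbf{w}-\mathbf{v}_*\|_2$. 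This accounts exactly for the second term in \eqref{eq:one_point_strong}. It therefore remains to prove a clean one-point strong convexity bound for the ground-truth loss,
\[
\langle \nabla \mathcal{L}(\mathbf{w}), \mathbf{w}-\mathbf{v}_*\rangle \ge \tfrac{\mu_0}{2}\|\mathbf{w}-\mathbf{v}_*\|_2^2,
\]
for all $\mathbf{w}\in\mathbb{S}^{d-1}$ with $\mathbf{w}^\top\mathbf{v}_*\ge\tau$, for some suitable $\tau<1$.

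For this I would Taylor-expand the gradient around $\mathbf{v}_*$, where $\nabla\mathcal{L}(\mathbf{v}_*)=0$. We have
\[
\nabla \mathcal{L}(\mathbf{w}) = \mathbb{E}\big[(f(\mathbf{w}^\top\mathbf{x})-f(\mathbf{v}_*^\top\mathbf{x}))\, f'(\mathbf{w}^\top\mathbf{x})\,\mathbf{x}\big].
\]
Set $\mathbf{h}=\mathbf{w}-\mathbf{v}_*$ and $s=\mathbf{h}^\top\mathbf{x}$. By the mean value theorem, $f(\mathbf{w}^\top\mathbf{x})-f(z) = f'(z)s + \tfrac12 f''(\xi)s^2$. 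The product with $f'(\mathbf{w}^\top\mathbf{x}) = f'(z) + f''(\xi')s$ yields the leading term $\mathbb{E}[f'(z)^2 s^2]$ plus remainders bounded by $C(M_1,M_2)\,\mathbb{E}[|s|^3]$. Since $s$ is Gaussian with variance $\mathbf{h}^\top\mathbf{\Sigma}\mathbf{h}\le(1+\lambda)\|\mathbf{h}\|_2^2$, the remainder is $O(\|\mathbf{h}\|_2^3)$ with a dimension-free constant. The third-derivative bound $M_3$ is available if a higher-order expansion is needed.

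The main step, and the one I expect to be the obstacle, is lower-bounding the quadratic form $Q(\mathbf{h})=\mathbb{E}[f'(z)^2 (\mathbf{h}^\top\mathbf{x})^2]$ by $\mu_0\|\mathbf{h}\|_2^2$.

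1. Decompose $\mathbf{x}$ into its conditional mean given $z$ plus an independent Gaussian part: $\mathbf{x} = \frac{z}{\sigma_z^2}\mathbf{\Sigma}\mathbf{v}_* + \mathbf{x}_\perp$, with $\mathbf{x}_\perp$ independent of $z$.
2. Then $\mathbf{h}^\top\mathbf{x} = a z/\sigma_z + \mathbf{h}^\top\mathbf{x}_\perp$, where $a = \mathbf{h}^\top\mathbf{\Sigma}\mathbf{v}_*/\sigma_z$.
3. Expanding the square and using independence of $\mathbf{x}_\perp$ from $z$ (so the cross term vanishes),
\[
Q(\mathbf{h}) = a^2\,\mathbb{E}[f'(z)^2 z^2/\sigma_z^2] + \mathbb{E}[f'(z)^2]\,\mathrm{Var}(\mathbf{h}^\top\mathbf{x}_\perp) \ge \mu_0\big(a^2 + \mathrm{Var}(\mathbf{h}^\top\mathbf{x}_\perp)\big).
\]
4. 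The sum in parentheses equals $\mathbf{h}^\top\mathbf{\Sigma}\mathbf{h}\ge\|\mathbf{h}\|_2^2$ because $\mathbf{\Sigma}\succeq \mathbf{I}_d$. This is precisely why $\mu_0$ is defined as the minimum of the two moments.

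One subtlety is the spherical constraint. Since both $\mathbf{w}$ and $\mathbf{v}_*$ are unit vectors, $\mathbf{h}$ has a component along $\mathbf{v}_*$ of size $-\|\mathbf{h}\|_2^2/2$. This is already quadratic and is absorbed into the cubic error. Either way, the bound above does not need $\mathbf{h}$ tangent, since $\mathbf{\Sigma}\succeq\mathbf{I}_d$ holds in all directions.

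Combining the pieces gives $\langle\nabla\mathcal{L}(\mathbf{w}),\mathbf{h}\rangle \ge \mu_0\|\mathbf{h}\|_2^2 - C\|\mathbf{h}\|_2^3$. On the cap, $\|\mathbf{h}\|_2^2 = 2(1-\mathbf{w}^\top\mathbf{v}_*)\le 2(1-\tau)$. Choosing $\tau$ close enough to $1$ that $C\sqrt{2(1-\tau)}\le\mu_0/2$ yields the constant $\mu=\mu_0/2$. The delicate point is verifying that $C$ depends only on $M_1,M_2,M_3,\lambda,\sigma_z$ and not on $d$; this follows because all expectations involve only the two scalar Gaussians $z$ and $s$, whose variances are $O(1)$. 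Adding back the bias term from the first step then gives \eqref{eq:one_point_strong}.
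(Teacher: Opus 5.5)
Your argument is correct, and it reaches the inequality by a different route from the paper's. The paper works at second order. It computes $\nabla^2\mathcal{L}(\mathbf{v}_*)=\mathbb{E}[f'(z)^2\mathbf{x}\mathbf{x}^\top]$ and uses the same conditional decomposition $\mathbf{x}=\mathbf{a}z+\mathbf{x}_\perp$ that you use to show $\lambda_{\min}\ge\mu_0$. It then invokes continuity of the Hessian and Weyl's inequality to get $\lambda_{\min}(\nabla^2\mathcal{L}(\mathbf{w}))\ge\mu_0/2$ on a Euclidean ball of radius $\zeta$, sets $\tau=1-\zeta^2/2$, and adds the bias by Cauchy--Schwarz exactly as you do.

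You instead expand the gradient to first order around $\mathbf{v}_*$ and bound $\langle\nabla\mathcal{L}(\mathbf{w}),\mathbf{w}-\mathbf{v}_*\rangle$ directly. This buys you three things:
\begin{itemize}
\item You need only $M_1$ and $M_2$. The paper's explicit radius in Appendix~\ref{appendix:bound_radius} uses $M_3$ to control how the Hessian varies.
\item A $d$-independent $\tau$ comes out immediately from scalar Gaussian moments. In the paper, the main-text appeal to continuity alone does not give a dimension-free radius; that is supplied separately in the appendix.
\item You avoid the unstated step of integrating the Hessian along the chord from $\mathbf{v}_*$ to $\mathbf{w}$, which the paper needs to pass from local strong convexity to the one-point inequality.
\end{itemize}
The paper's route buys a stronger conclusion: a Hessian lower bound on a whole ball, not just the one-point condition. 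It also yields the explicit radius $\zeta_*$, which is reused (after further shrinking) in the proof of Theorem~\ref{thm:wtsg}.

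One small bookkeeping point. Multiplying out your two expansions literally also produces a term $\tfrac12 f''(\xi)f''(\xi')s^4$, so the remainder is not purely cubic as written. You can absorb it using $\|\mathbf{h}\|_2\le 2$. Alternatively, write the integrand as $(f'(z)s+R)\,f'(\mathbf{w}^\top\mathbf{x})\,s$ with $|R|\le\tfrac12M_2s^2$, and then replace $f'(\mathbf{w}^\top\mathbf{x})$ by $f'(z)$ at cost $M_2|s|$. This gives a cubic remainder of at most $\tfrac32M_1M_2|s|^3$.
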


\noindent \textbf{Remark 1.}
In the noise-free setting ($\phi=0$), Eq.~\eqref{eq:one_point_strong} is closely related to the ``one-point strong convexity'' established by \citet{mei2018landscape} for single-index models around $\mathbf{v}_*$. 
However, they rely on a strict monotonicity assumption (e.g., $f'(z)>0$ everywhere). Our result significantly relaxes this to the non-degeneracy condition in Assumption~\ref{assum:activation_properties_spiked}(4), which only requires positivity in expectation. 
This relaxation, which necessitates a substantially more involved proof, aligns with the information component perspective emphasized in~\citet{arous2021online,jones-mccormick2025provable}. 
Consequently, this theorem provides a non-trivial refinement of the optimization landscape for non-convex  single-index models.

\noindent \textbf{Remark 2.}
In Appendix~\ref{appendix:bound_radius}, we derive a conservative analytical bound for $\tau$ to rigorously guarantee that the effective basin $\mathcal{R}$ maintains a macroscopic, dimension-independent radius, $\Theta(1)$. 
This $\Theta(1)$ property fundamentally explains why W2SG strictly necessitates a warm start: under random initialization, the initial correlation inevitably vanishes at a rate of $\mathcal{O}(d^{-1/2})$~\citep{vershynin2018high}, deterministically trapping the model in high-dimensional noise as $d \to +\infty$. Unsupervised pre-training acts as a strict geometric prerequisite by bridging this $\mathcal{O}(d^{-1/2})$ gap, dropping the parameters directly into the $\Theta(1)$ safety zone and thereby breaking the curse of dimensionality.
For concrete intuition, Appendix~\ref{appendix:example_radius} explicitly computes this radius across three representative activation functions. 
While our analytical derivation strictly guarantees a conservative worst-case radius ranging from $0.6^\circ$ to $6.1^\circ$ around the ground truth, our subsequent synthetic experiments demonstrate that the empirical basin can be substantially broader, extending up to approximately $50^\circ$.


We introduce a set of structural conditions to rigorously quantify the optimization trajectory.

\begin{assumption} \label{assum:pretraining_w2sg}
Let Assumption~\ref{assum:activation_properties_spiked} holds.
Given the landscape geometric constants $\tau$ and $\mu$ strictly established in~\cref{assumption_1}, we assume the pre-training data distribution, weak supervisor, and algorithmic hyperparameters satisfy the following conditions:

\noindent (1) \textbf{Detectable latent structure}: The data distribution possesses a strong principal component to be statistically detectable: $\lambda > \sqrt{\frac{1+\tau^2}{\alpha (1-\tau^2)}}$.

\noindent (2) \textbf{Intrinsic task alignment}: The latent principal direction $\mathbf{v}$ inherent in the pre-training data distribution is adequately aligned with the downstream ground truth $\mathbf{v}_*$, satisfying the intrinsic correlation condition $\rho \ge (\tau + \epsilon_0) \sqrt{\frac{\alpha \lambda^2 + 1}{\alpha \lambda^2 - 1}}$ for a small positive constant $\epsilon_0 > 0$.

\noindent (3) \textbf{Informative weak supervisor}: The systematic bias $\phi$ introduced by the weak supervisor is bounded relative to the landscape curvature, satisfying $\phi \le \mu \sqrt{\frac{1-\tau}{2}}$.

\noindent (4) \textbf{Stable learning rate}: The step-size parameter $\delta$ satisfies $\delta \le \frac{\mu(1-\tau-\epsilon_d)}{7G}$ for a positive buffer $\epsilon_d = \Theta(d^{-c})$ with $c \in (0, \frac{1}{2})$.
\end{assumption}

\begin{remark}
The logical order of these assumptions reflects the mathematical dependency of the problem. 
The inherent nature of the target concept $f$ (Assumption~\ref{assum:activation_properties_spiked}) and the intrinsic latent signal strength of the data $\lambda$ act as invariant physical properties. 
Together, they rigorously determine the geometry of the optimization landscape (i.e., the basin boundary $\tau$ and strong convexity $\mu$). 
Only after they are established can we formally constrain the external conditions for successful learning: the landscape's specific geometry dictates exactly how aligned the pre-training must be ($\rho$), the maximum tolerable weak supervision bias ($\phi$), and the appropriate step size ($\delta$) required to navigate this basin.
\end{remark}

First, the innate difficulty of the task and the activation function (Assumption~\ref{assum:activation_properties_spiked}) establish the intrinsic landscape geometry $\mu$ and $\tau$ (\cref{assumption_1}). 
Based on this landscape, Conditions (1) and (2) characterize the fundamental statistical requirements of the unsupervised pre-training data. \textbf{Condition (1)} ensures the intrinsic data signal $\lambda$ is strong enough to allow unsupervised algorithms (e.g., PCA) to escape random noise. 
\textbf{Condition (2)} simply posits that this intrinsic data structure ($\mathbf{v}$) is relevant to the downstream task ($\mathbf{v}_*$). 
Together, they rigorously guarantee that the empirically extracted concept will reliably place the model's initialization inside the $\tau$-basin.
Once initialized within this effective region, \textbf{Condition (3)} limits the weak supervisor's systematic error $\phi$. 
Finally, given the bounds on noise and bias, \textbf{Condition (4)} explicitly dictates the step size $\delta$ required to control the stochastic variance during gradient descent.
See Appendix~\ref{appendix:dis_param} for a more detailed discussion.

\noindent \textbf{Modeling Pre-training via PCA Initialization.} Although characterizing the full complexity of LLM pre-training remains a formidable challenge in deep learning theory---especially within the W2SG framework---we adopt PCA as a tractable and representative proxy. 
This choice is grounded in established findings that even advanced representation learning algorithms effectively implement PCA in relevant regimes~\citep{bourlard1988auto,baldi1989neural,rolinek2019variational}. 
Consequently, we model the unsupervised pre-training process as a spectral initialization step, extracting the top eigenvector $\hat{\mathbf{v}}_{\text{PCA}}$ from the empirical covariance matrix of $N$ pre-training samples. 
To resolve the inherent sign ambiguity of PCA ($\pm \hat{\mathbf{v}}_{\text{PCA}}$), we select the directional variant that minimizes the empirical loss on weak labels. Because a valid weak supervisor naturally aligns with the ground truth ($\langle \mathbf{v}_{\text{weak}}, \mathbf{v}_* \rangle > 0$), this simple selection asymptotically identifies the correctly oriented vector. 
Therefore, without loss of generality, we define our initialization $\mathbf{w}_0 \in \{-\hat{\mathbf{v}}_{\text{PCA}}, \hat{\mathbf{v}}_{\text{PCA}}\}$ such that it guarantees an initial positive correlation $\langle \mathbf{w}_0, \mathbf{v}_* \rangle \ge 0$.
In the following, we formally analyze the generalization error of a strong model initialized via the oriented PCA procedure and subsequently fine-tuned on weak labels.

\begin{theorem}[Proved in Appendix~\ref{proof:wtsg}] \label{thm:wtsg}

Given the problem setting specified in~\cref{sec:prob_setting}.
Suppose Assumptions~\ref{assum:activation_properties_spiked} and \ref{assum:pretraining_w2sg} hold. Let $\tau \in (0,1)$ be the correlation threshold establishing the effective region $\mathcal{R}$ in~\cref{assumption_1}. 
Under spherical PGD, the optimization trajectory never escapes the effective region $\mathcal{R}$ with high probability. Furthermore, in the high-dimensional limit $d,N \to \infty$, the expected distance of the strong student to the ground truth after $T$ steps satisfies:
\begin{equation} \label{eq:w2sg_thm2}
\mathbb{E}[\|\mathbf{w}_T - \mathbf{v}_*\|_2^2] \le \left(1 - \frac{\delta \mu}{d}\right)^T\underbrace{(2 - 2\tau)}_{D_0} + \left[1 - \left(1 - \frac{\delta \mu}{d}\right)^T\right] \underbrace{\left( \frac{7 \delta G}{\mu} + \frac{2\phi^2}{\mu^2} \right)}_{D_{\infty}\leq D_0-\epsilon_d} + \tilde{\mathcal{O}}(d^{-1/2}).
\end{equation}
\end{theorem}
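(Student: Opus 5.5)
The argument has three stages: show that the oriented PCA initialization lands in $\mathcal{R}$, derive a one-step contraction for $D_t := \|\mathbf{w}_t-\mathbf{v}_*\|_2^2$ from the perturbed strong convexity of \cref{assumption_1}, and then unroll that recursion while a stopping-time argument keeps the trajectory inside $\mathcal{R}$.

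\textbf{Initialization.} We use the spiked-covariance (BBP) phase transition for the top eigenvector of the empirical covariance built from $N=\alpha d$ samples. Under Assumption~\ref{assum:pretraining_w2sg}(1), $\lambda$ exceeds the threshold $1/\sqrt{\alpha}$. In the limit $d\to\infty$ the overlap then satisfies
\[
|\hat{\mathbf{v}}_{\text{PCA}}^\top\mathbf{v}|^2\to\frac{\alpha\lambda^2-1}{\alpha\lambda^2+1}
\]
(we will check the precise constants against the stated condition), with $\tilde{\mathcal{O}}(d^{-1/2})$ fluctuations. The component of $\hat{\mathbf{v}}_{\text{PCA}}$ orthogonal to $\mathbf{v}$ is asymptotically uniformly distributed on the orthogonal sphere, so its inner product with $\mathbf{v}_*$ is $\mathcal{O}(d^{-1/2})$. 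It follows that
\[
\langle\mathbf{w}_0,\mathbf{v}_*\rangle\ge\rho\sqrt{\tfrac{\alpha\lambda^2-1}{\alpha\lambda^2+1}}-\tilde{\mathcal{O}}(d^{-1/2})\ge\tau+\epsilon_0-o(1)
\]
by Assumption~\ref{assum:pretraining_w2sg}(2). Hence $\mathbf{w}_0\in\mathcal{R}$ with margin $\epsilon_0$, and $D_0=2-2\langle\mathbf{w}_0,\mathbf{v}_*\rangle\le 2-2\tau$.

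\textbf{One-step recursion.} The projection onto the sphere is nonexpansive toward $\mathbf{v}_*\in\mathcal{K}$, so
\[
D_{t+1}\le\|\mathbf{w}_t-\eta\mathbf{g}_t-\mathbf{v}_*\|^2=D_t-2\eta\langle\mathbf{g}_t,\mathbf{w}_t-\mathbf{v}_*\rangle+\eta^2\|\mathbf{g}_t\|^2 .
\]
Conditioning on $\mathcal{F}_t$ gives $\mathbb{E}[\mathbf{g}_t\mid\mathcal{F}_t]=\nabla\mathcal{L}_{\text{w2s}}(\mathbf{w}_t)$, and Lemma~\ref{lemma:conditional_variance} gives $\mathbb{E}\|\mathbf{g}_t\|^2\le Gd$. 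On the event $\mathbf{w}_t\in\mathcal{R}$, \cref{assumption_1} then yields
\[
\mathbb{E}[D_{t+1}\mid\mathcal{F}_t]\le D_t-2\eta\mu D_t+2\eta\phi\sqrt{D_t}+\eta^2Gd .
\]
We absorb the bias term with Young's inequality, $2\phi\sqrt{D_t}\le\mu D_t+\phi^2/\mu$. With $\eta=\delta/d$ this becomes
\[
\mathbb{E}[D_{t+1}\mid\mathcal{F}_t]\le\Bigl(1-\tfrac{\delta\mu}{d}\Bigr)D_t+\tfrac{\delta}{d}\Bigl(\tfrac{\phi^2}{\mu}+\delta G\Bigr).
\]
The constants $7$ and $2$ in $D_\infty$ leave slack for cross terms and for the stopping-time correction below. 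Taking expectations and unrolling gives exactly the convex combination $(1-\delta\mu/d)^T D_0+[1-(1-\delta\mu/d)^T]D_\infty$. Assumption~\ref{assum:pretraining_w2sg}(3),(4) give $7\delta G/\mu\le 1-\tau-\epsilon_d$ and $2\phi^2/\mu^2\le 1-\tau$, which together yield $D_\infty\le D_0-\epsilon_d$. Since $D_\infty<D_0$, the deterministic part of the recursion decreases monotonically and never leaves the level $2-2\tau$.

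\textbf{Confinement (main obstacle).} The recursion above is valid only while $\mathbf{w}_t\in\mathcal{R}$, so we need a high-probability statement that the trajectory stays there. We will introduce the stopping time $\sigma=\inf\{t:\langle\mathbf{w}_t,\mathbf{v}_*\rangle<\tau\}$ and study the stopped process $D_{t\wedge\sigma}$. Its drift is nonpositive whenever $D_t$ is near the boundary value $2-2\tau$, because at that level the drift is $\frac{\delta}{d}(D_\infty-D_t)\le-\frac{\delta}{d}\epsilon_d$ up to lower-order terms. The martingale increments are $\mathcal{O}(\eta\sqrt{d})=\mathcal{O}(d^{-1/2})$ per step, with conditional variance $\mathcal{O}(1/d)$, and there are $N=\alpha d$ steps. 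A Freedman/Azuma maximal inequality over these steps bounds the cumulative fluctuation by $\tilde{\mathcal{O}}(d^{-1/2})$. This is $o(\epsilon_d)$ when $c<1/2$, and it is also dominated by the initial margin $\epsilon_0$, so $\sigma>N$ with high probability.

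The delicate point is that the increments are only second-moment controlled, so sub-Gaussian tails for $\langle\mathbf{g}_t,\mathbf{w}_t-\mathbf{v}_*\rangle$ will have to come from the Gaussianity of $\mathbf{x}$ together with the bounded $f'$. The failure event contributes at most a bounded $D_t\le4$ times a vanishing probability. This contribution, together with the PCA fluctuation, is collected into the $\tilde{\mathcal{O}}(d^{-1/2})$ term of \eqref{eq:w2sg_thm2}.
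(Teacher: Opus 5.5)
Your initialization step and confinement strategy (BBP overlap, stopping time, Azuma, failure event weighted by $4$) essentially match the paper's. The one-step recursion, however, rests on a false claim. Euclidean projection onto $\mathbb{S}^{d-1}$ is \emph{not} nonexpansive toward $\mathbf{v}_*$, because the sphere is not convex. Write $\mathbf{u}_t=\mathbf{w}_t-\eta\mathbf{g}_t$, $r=\|\mathbf{u}_t\|_2$ and $c=\langle\mathbf{u}_t/r,\mathbf{v}_*\rangle$. Then exactly $\|\mathbf{u}_t/r-\mathbf{v}_*\|_2^2=\|\mathbf{u}_t-\mathbf{v}_*\|_2^2-(r-1)(r+1-2c)$, so projection \emph{increases} the distance whenever $r<1$ and $c<(1+r)/2$. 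That is the generic case here. We have $r^2=1-2\eta\langle\mathbf{w}_t,\mathbf{g}_t\rangle+\eta^2\|\mathbf{g}_t\|_2^2$, where both corrections are of order $1/d$ and $r-1$ can have either sign. Meanwhile $r+1-2c\approx D_t$, so the excess $(1-r)(r+1-2c)\approx(1-r)D_t$ is first order in $\eta$. Expanding the normalization gives
\[
D_{t+1}\le D_t-2\eta\langle\mathbf{g}_t,\mathbf{w}_t-\mathbf{v}_*\rangle+\eta D_t\langle\mathbf{w}_t,\mathbf{g}_t\rangle+\mathcal{O}(\eta^2\|\mathbf{g}_t\|_2^2),
\]
i.e.\ only the Riemannian gradient $\mathbf{g}_t-\langle\mathbf{w}_t,\mathbf{g}_t\rangle\mathbf{w}_t$ produces descent. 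The conditional mean of the extra term, $\eta D_t\langle\mathbf{w}_t,\nabla\mathcal{L}_{\text{w2s}}(\mathbf{w}_t)\rangle$, has no definite sign. It is of the same order $\eta D_t$ as the restoring force $-2\eta\mu D_t$. It cannot be hidden in the slack of the constants $7$ and $2$ in $D_\infty$, because it is multiplicative in $D_t$ and competes with the contraction rate itself.

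The paper closes this hole in four steps.
\begin{enumerate}
\item Lemma~\ref{lemma:spherical_projection} gives $\mathbf{w}_{t+1}=\mathbf{w}_t-\eta\tilde{\mathbf{g}}_t+\mathbf{e}_t$ with $\|\mathbf{e}_t\|_2\le 3\eta^2\|\mathbf{g}_t\|_2^2$; this residual is what produces the $7\eta^2\|\mathbf{g}_t\|_2^2$ term.
\item The radial drift is bounded by $|\langle\mathbf{w}_t,\nabla\mathcal{L}_{\text{w2s}}(\mathbf{w}_t)\rangle|\le M_1^2(1+\lambda)\sqrt{D_t}+\phi$.
\item Absorbing this needs a geometric restriction your argument never uses. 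The radius defining $\tau$ is $\zeta=\min\bigl(\zeta_*,\mu/(2M_1^2(1+\lambda)+\mu)\bigr)$, not just the strong-convexity radius $\zeta_*$. Assumption~\ref{assum:pretraining_w2sg}(3) becomes $\phi\le\mu\zeta/2$ since $1-\tau=\zeta^2/2$. Together these give $M_1^2(1+\lambda)\sqrt{D_t}+\phi\le\mu/2$ on $\mathcal{R}$.
\item The restoring force is split as $2\mu=\mu+\tfrac{\mu}{2}+\tfrac{\mu}{2}$. One $\mu/2$ absorbs the radial drift, and the other enters Young's inequality on $2\eta\phi\sqrt{D_t}$. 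This is why the bound has $2\phi^2/\mu^2$ rather than your $\phi^2/\mu^2$.
\end{enumerate}
Two smaller repairs are also needed. First, the per-step increments of $D_t$ are $\tilde{\mathcal{O}}(1/d)$, since the scalar $\langle\mathbf{g}_t,\mathbf{w}_t-\mathbf{v}_*\rangle$ is $\mathcal{O}(1)$. With the $\mathcal{O}(\eta\sqrt{d})$ increments you state, Azuma over $\alpha d$ steps would only give $\mathcal{O}(1)$ fluctuations. Second, a stopped process frozen above $D_\infty$ does not obey the contraction recursion. The unrolling should therefore be done pathwise on the good event and then combined with total expectation, as the paper does.
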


\begin{remark}
A major difficulty is to show that the entire iterative trajectory, rather than only the initialization point, remains within the local region satisfying Eq.~\eqref{eq:one_point_strong} in~\cref{assumption_1}; see also the numerical calculation mentioned in our experiment. Accordingly, both the bound and the core technical challenge are novel and intrinsic to the paper’s overall technical development.
\end{remark}

Our theoretical bound explicitly characterizes the fundamental geometric advantage conferred by pre-training. 
Specifically, the expected error bound in Theorem~\ref{thm:wtsg} algebraically behaves as a convex combination of the initial boundary $D_0=2-2\tau$ and the stationary error bottleneck $D_\infty = \frac{7 \delta G}{\mu} + \frac{2\phi^2}{\mu^2}$.
Based on Assumption~\ref{assum:pretraining_w2sg}(3-4), we establish that $D_\infty \le D_0-\epsilon_d$. This implies that while the optimization starts at the boundary of the effective region ($D_0$), its theoretical endpoint ($D_\infty$) is situated strictly within it. 
Given the linear sample complexity $T = N = \alpha d$, the combination coefficient $\left(1 - \frac{\delta\mu}{d}\right)^{T}$ decreases monotonically with respect to $T$, and $\lim_{d \to \infty} \left(1 - \frac{\delta\mu}{d}\right)^{T} = e^{-\alpha \delta \mu}$. 
As a result of this convex combination, the monotonically decaying coefficient ensures that the optimization trajectory is continuously pulled from the initial boundary toward the internal stationary bottleneck. 
Consequently, the strong model's expected error is mathematically guaranteed to remain geometrically constrained inside the local effective region defined in~\cref{assumption_1}:
\begin{equation*}
    \mathbb{E}[\|\mathbf{w}_T - \mathbf{v}_*\|_2^2] \le 2 - 2\tau + \Theta(d^{-c}) + \tilde{\mathcal{O}}(d^{-1/2}), \quad \text{where} \lim_{d \to +\infty} \left( \Theta(d^{-c}) + \tilde{\mathcal{O}}(d^{-1/2}) \right) = 0.
\end{equation*}
While these mathematical constraints are formulated primarily for analytical clarity, they provide the core geometric intuition underlying W2SG. In contrast to random initialization, where high-dimensional noise typically causes optimization failure, unsupervised pre-training provides a crucial structural initialization. Its effectiveness progressively shapes the optimization dynamics into two distinct regimes:
(1) \textbf{Insufficient pre-training:} In the early stages of pre-training, the initial alignment with the target task remains weak. The model is initialized outside the well-conditioned region, where high-dimensional noise dominates. In this regime, surpassing the weak supervisor is mathematically unattainable (Lemma~\ref{thm:random_1}).
(2) \textbf{Sufficient pre-training:} Once pre-training provides sufficient alignment, the model is successfully positioned inside the geometric ``safe zone'' ($\mathcal{R}$). The weak supervisor, due to its limited architectural capacity or insufficient pre-training, inherently carries specific systematic errors and therefore cannot be guaranteed to reside within such an optimal region. In contrast, because the pre-trained strong model securely operates inside $\mathcal{R}$, the landscape geometry prevents the error from diverging and allows the strong model's final performance to decouple from the weak supervisor's intrinsic limitations, thereby providing the fundamental mechanism for W2SG to successfully emerge.
We will validate these theoretical mechanisms in the subsequent sections through synthetic experiments and LLM evaluations spanning hundreds of intermediate pre-training checkpoints through months of intensive computation.

\section{How Pre-training Enables W2SG}

\subsection{Synthetic Experiments}
\label{sec:synthetic_experiments}

To empirically validate our theoretical framework, we first conduct synthetic experiments on spiked Gaussian data. We design two distinct experiments to validate our theoretical results in Section~\ref{sec:theory}. 
Detailed configurations are deferred to Appendix~\ref{app:synthetic_details}.

\textbf{Experiment 1: The Impact of Intrinsic Task Alignment.} Following the empirical finding that early stopping is crucial for eliciting W2SG \citep{burns2023weak}, we choose a short fine-tuning phase ($T=30$) and vary the intrinsic task alignment $\rho = |\mathbf{v}^\top \mathbf{v}_*|$ to observe the generalization error.
As shown in~\cref{fig:w2s_syn_1}, successful W2SG occurs only when the pre-training alignment $\rho$ surpasses a critical threshold. This confirms that high-quality pre-training is a prerequisite for the student to escape the uninformative random state and overcome systematic supervisor bias.

\textbf{Experiment 2: Optimization Trajectories Under Varying Pre-training Quality.} 
To validate how the initial alignment ($\tau$) dictates the success of W2SG as established in Theorem~\ref{thm:wtsg}, we extend the fine-tuning horizon ($T=2000$). 
As illustrated in~\cref{fig:w2s_syn_2}, a poor initialization ($\tau=0.1$) prevents the model from surpassing the weak supervisor. In contrast, superior initializations (e.g., $\tau \ge 0.7$) enable a rapid error reduction that penetrates the weak baseline, before eventually saturating at a stationary limit governed by the weak label noise.
The empirical behaviors closely correspond to our theoretical dichotomy: high-quality pre-training properly positions the model to rapidly surpass the weak supervisor's baseline. Once this breakthrough occurs, the model reaches an optimal performance ceiling, which is consistent with the stationary limit bounded by systematic errors in our theory. 
Notably, the subsequent degradation—where excessive iterations cause the error to slowly regress toward the weak supervisor's level—is a widely documented empirical phenomenon \citep{burns2023weak,yao2025understanding}. This indicates that prolonged fine-tuning ultimately leads to overfitting on the weak model's systematic noise, thereby losing the inherent generalization advantage and highlighting the necessity of early stopping~\citep{medvedev2025weak}. 
Conversely, and exactly as predicted by our failure condition, poor pre-training leaves the model outside the well-conditioned basin, causing it to stagnate at or above the supervisor's error floor throughout the entire process.

We provide two conceptual illustrations in Appendix~\ref{appendix:illustration} (Figure~\ref{fig:effective_region} and Figure~\ref{fig:drift_bias}).
We further verify Assumption~\ref{assum:pretraining_w2sg}(1-4) in Appendix~\ref{appendix:syn_exp_validate_assumption} and compute the parameters $\phi, \mu$ in Appendix~\ref{app:synthetic_additional}.

\begin{figure*}[t]
\centering
\begin{subfigure}{0.49\textwidth}
    \centering
    \includegraphics[width=\textwidth]{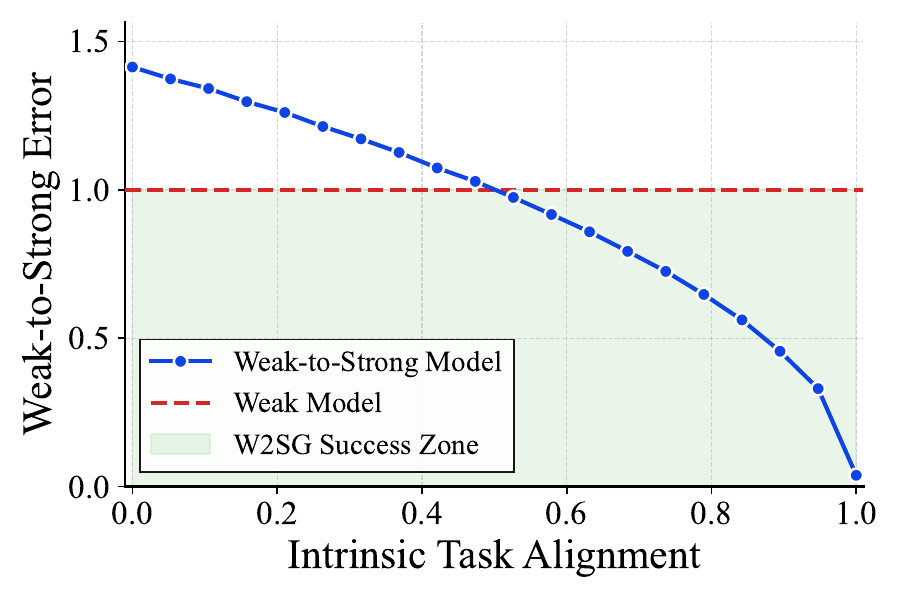}
    \caption{Experiment 1}
    \label{fig:w2s_syn_1}
\end{subfigure}
\begin{subfigure}{0.49\textwidth}
    \centering
    \includegraphics[width=\textwidth]{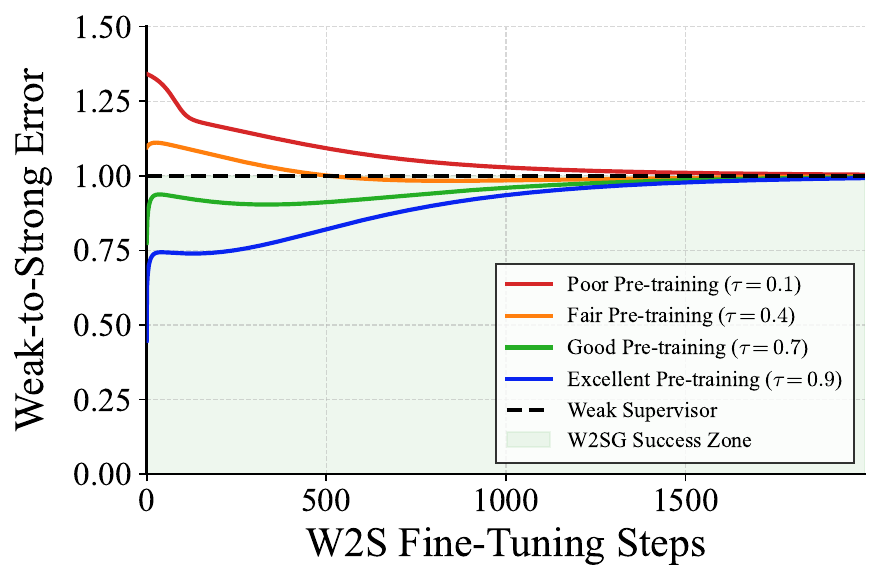}
    \caption{Experiment 2}
    \label{fig:w2s_syn_2}
\end{subfigure}
\caption{(1) \textbf{Experiment 1}: Final distance to the ground truth across varying intrinsic task alignments ($\rho$). The boundaries represent two theoretical extremes: at $\rho=0$, the lack of relevant pre-training signal reduces the model to a random initialization, stagnating at an orthogonal distance ($\|\mathbf{w} - \mathbf{v}_*\|_2 \approx \sqrt{2} \approx 1.414$ on the unit sphere). Conversely, $\rho=1$ represents perfect latent alignment, allowing the strong model to achieve near-zero error.
(2) \textbf{Experiment 2}: The optimization trajectory during W2S fine-tuning across different pre-training qualities ($\tau$).
}
\label{fig:w2s_syn}
\vspace{-10pt}
\end{figure*}

\subsection{W2SG across the Whole Pre-training}

In this section, we provide an extensive empirical evaluation of W2SG using \textit{hundreds of intermediate checkpoints} from the Pythia and OLMo suites—a large-scale analysis representing months of cumulative computation. We track the performance of W2SG across the entire pre-training trajectory to characterize its emergence and saturation patterns. Furthermore, we investigate the empirical correlation between these macroscopic trends and the linear representation hypothesis (LRH)~\citep{park2024the}, providing insights into how the geometric evolution of LLM representations aligns with the development of weak-to-strong generalization capabilities.

\noindent \textbf{Experimental Setup.}
Consistent with established W2SG benchmarks~\citep{burns2023weak,yang2024super}, we evaluate performance on reward modeling tasks covering two primary alignment dimensions: Harmlessness (CAI-Harmless~\citep{bai2022constitutional}) and Helpfulness (HH-RLHF~\citep{bai2022training}).
To comprehensively trace the evolutionary dynamics of W2SG, our study spans an extensive range of intermediate pre-training checkpoints from two distinct model families: Pythia-6.9B (154 checkpoints) and OLMo-7B (317 checkpoints).
For each task, we partition the data into ground-truth, weak supervision, and test sets.
Strong ceiling models are fine-tuned on ground-truth labels, while weak-to-strong models are trained on labels generated by a weak supervisor (e.g., Pythia-1B and OLMo-1B) for a single epoch to mitigate overfitting. 
We report the test accuracy using a linear projection head attached to the frozen backbone. Detailed configurations regarding dataset partitioning and optimization hyperparameters are provided in Appendix~\ref{appendix:exp_setup}.

\noindent \textbf{Main results.}
As illustrated in Figure~\ref{fig:w2s_helpful}, the performance of the weak-to-strong learner exhibits a clear macroscopic trend: it remains poor during the early checkpoints, progressively improves as pre-training continues, and ultimately saturates into a phase of stochastic fluctuation. 
This behavior is remarkably consistent across diverse architectures and datasets (see Appendix~\ref{appendix:extended_w2sg} for CAI-Harmless results).
The peak accuracy achieved in our experiments is competitive with, or even superior to, the baselines reported in prior studies~\citep{burns2023weak,yao2025revisiting,yang2024super}, validating the effectiveness of our experimental framework.

Crucially, these empirical observations directly echo the theoretical dichotomy established in Section~\ref{sec:theory}:
(1) \textbf{Insufficient pre-training:} In the nascent stages (e.g., the first 15 checkpoints of Pythia-6.9B), the model's latent representations lack meaningful alignment with the downstream task. Analogous to the high-dimensional noise stagnation described in \cref{thm:random_1}, W2SG essentially fails to emerge.
(2) \textbf{Sufficient pre-training:} As pre-training progresses and provides sufficient structural alignment, the model enters a well-conditioned space where optimization can effectively mitigate supervision noise. W2SG successfully emerges, but consistent with the stationary limit predicted by \cref{thm:wtsg}, it does not achieve perfect convergence. 
Instead, the performance eventually oscillates within a bounded neighborhood.
The persistent fluctuation at later checkpoints suggests that the strong model ultimately becomes trapped by the inherent noise floor of the weak labels.

\noindent \textbf{Mechanistic Insights via Linear Representation Hypothesis.}
To further explore the geometric evolution accompanying these macroscopic trends, we investigate the linear separability~\citep{park2024the,jiang2024on} of the model's internal representations during pre-training. As extensively detailed in Appendix~\ref{appendix:lrh}, we find a statistically significant positive correlation between the linear probe accuracy of the pre-trained checkpoints and their ultimate W2SG performance (e.g., achieving an $R^2 \approx 0.90$ for Pythia on CAI-Harmless, see Figure~\ref{fig:correlation_cai} and Table~\ref{tab:correlation_stats}). Furthermore, our quantitative analysis in Table~\ref{tab:correlation_new} demonstrates that this linear decodability serves as a more predictive indicator of W2SG feasibility than standard pre-training evaluation loss. This suggests that the emergence of W2SG is closely associated with the linear decodability of the pre-trained manifold: the capability to properly mitigate supervision noise and surpass the weak supervisor becomes empirically viable when the latent geometric structure of the target concepts becomes sufficiently linearly separable.

\begin{figure*}[t]
\centering
\begin{subfigure}{\textwidth}
    \centering
    \includegraphics[width=0.75\textwidth]{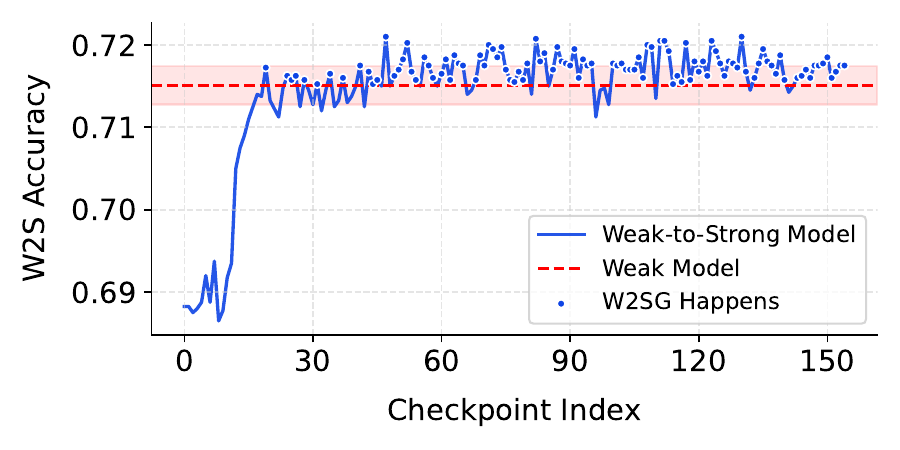}
    \caption{Pythia-6.9B}
\end{subfigure}
\vspace{20pt}
\begin{subfigure}{\textwidth}
    \centering
    \includegraphics[width=0.8\textwidth]{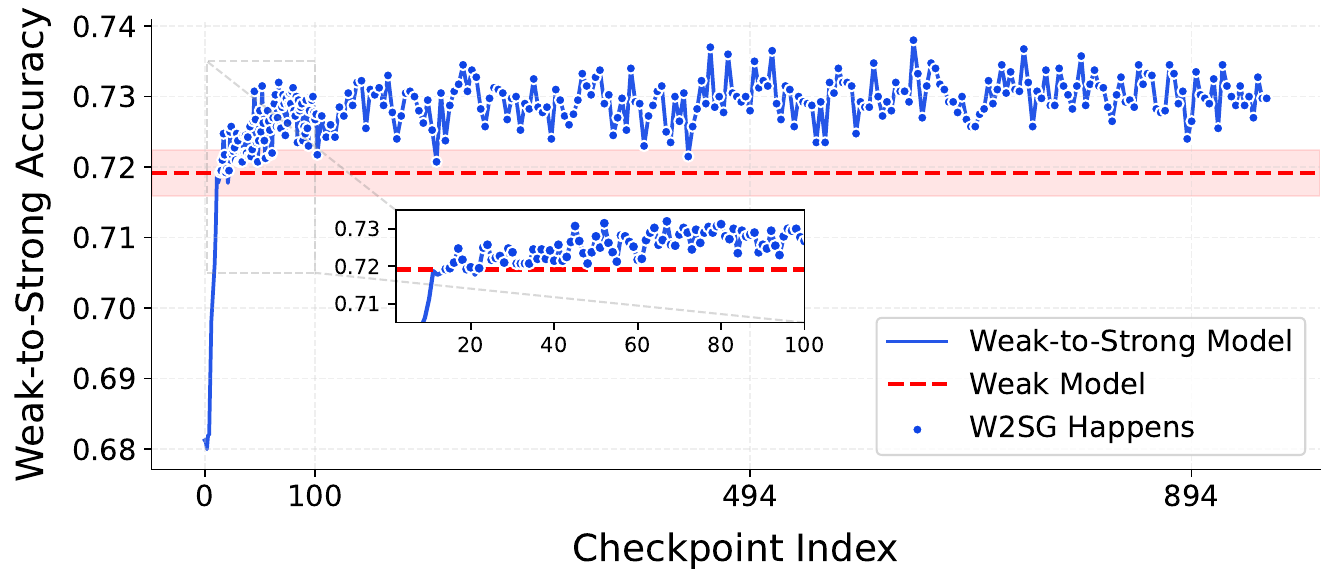}
    \caption{OLMo-7B}
\end{subfigure}
\vspace{-30pt}
\caption{Evolution of W2SG dynamics during pre-training on the HH-RLHF dataset. 
The plot reports the test accuracy of the strong student model fine-tuned on weak labels. 
The solid curve represents the mean performance aggregated over 5 random seeds, while the shaded region indicates the standard deviation. 
The dashed horizontal line denotes the weak supervisor's performance level. 
}
\vspace{-15pt}
\label{fig:w2s_helpful}
\end{figure*}

\section{Conclusion}

In this paper, we establish that pre-training is fundamentally necessary for the emergence of W2SG.
Theoretically, without pre-training, a strong model fails to learn from a weak supervisor because it becomes trapped by high-dimensional stochastic noise. Pre-training resolves this by providing a ``geometric warm start,'' which situates the model within an effective region where its gradients can successfully overpower the weak supervisor's systematic bias. 
Extensive synthetic experiments and evaluations across hundreds of large language model checkpoints corroborate these findings, demonstrating that W2SG is not an innate capability. Rather, it emerges via a distinct phase transition that is tightly coupled with the progression of pre-training.




\section*{Impact Statement}

This paper aims to advance the field of machine learning, specifically within the domain of AI alignment and safety. Our work investigates the mechanisms behind weak-to-strong generalization (W2SG), a critical paradigm for supervising future AI systems that may surpass human capabilities (superalignment). By theoretically characterizing the necessity of pre-training and the dynamics of generalization, our findings contribute to a foundational understanding of how to reliably align superhuman models using limited human supervision.
While W2SG techniques could theoretically be repurposed to amplify undesirable behaviors if the weak supervisor is malicious, the primary motivation and application of this research are to ensure the safe and controllable deployment of advanced AI systems. We believe that understanding the geometric interplay between pre-training and alignment is a vital step toward mitigating long-term risks associated with unaligned artificial general intelligence.

\section*{Limitations}

While our theoretical framework rigorously establishes the necessity of pre-training for weak-to-strong generalization (W2SG), our analysis naturally relies on modeling unsupervised pre-training as a spectral initialization step (PCA) under spiked Gaussian data assumptions. 
Although this serves as a canonical and mathematically tractable proxy in high-dimensional statistical physics, it may not encapsulate the full spectrum of non-linear feature learning dynamics present in modern, ultra-large architectures. 
Empirically, while our study involves a massive-scale evaluation across hundreds of intermediate checkpoints requiring substantial computational resources, our large language model validations are primarily focused on reward modeling tasks concerning harmlessness and helpfulness. 
Consequently, a natural avenue for future work involves extending these empirical investigations to encompass broader domains—such as complex mathematical reasoning or code generation—to further map the precise boundaries of the W2SG phase transition across various frontier models.


\bibliography{ref}
\bibliographystyle{unsrtnat}

\newpage
\tableofcontents

\newpage
\appendix
\onecolumn

{\LARGE \centering \textbf{Appendix} \par}

\section{More Related Work and Discussion}

\subsection{Linear Representation Hypothesis} \label{related:lrh}

Recent advancements in mechanistic interpretability have centered on the Linear Representation Hypothesis (LRH)~\citep{park2024the}, which posits that high-level semantic concepts are encoded as distinct linear directions within the activation spaces of LLMs. 
This perspective extends earlier findings from the era of smaller-scale models, particularly as documented in the linear probing~\citep{alain2016understanding, tenney2019you} and word embedding literature~\citep{mikolov2013linguistic,bolukbasi2016man}. 
By isolating these ``concept directions,'' researchers can implement latent space steering—a technique that modulates model behavior through the algebraic manipulation of these activation vectors~\citep{turner2023steering, zou2023representation, li2023inferencetime, gurnee2024language, marks2024the, qian-etal-2024-towards}.
Beyond empirical observation, recent theoretical inquiries have begun to elucidate the mechanisms underlying the emergence of these linear structures in pre-trained LLMs~\citep{jiang2024on, marconato2024all, li2025task}.

In this work, we establish a formal mathematical connection between W2SG and the LRH by modeling the pre-trained representation space using a spiked Gaussian distribution. 
Extensive empirical studies have firmly established that the activation spaces of pre-trained language models are fundamentally anisotropic~\citep{ethayarajh2019contextual,gao2021simcse}, with the variance of the representations being overwhelmingly concentrated along a few dominant principal directions~\citep{timkey2021all}. 
Building on this geometric reality, if pre-training encodes high-level semantic concepts as these distinct linear directions---as posited by the LRH~\citep{park2024the}---we can mathematically isolate this structural bias through the ``spike'' vector $\mathbf{v}$ in our covariance model $\mathbf{\Sigma} = \mathbf{I}_d + \lambda \mathbf{v}\mathbf{v}^\top$ (Section~\ref{sec:prob_setting}).
This specific formulation explicitly instantiates the dominant conceptual variance extracted during pre-training, providing an analytical bridge to characterize how the emergent linear geometry governs the dynamics of W2SG. 
Beyond our theoretical modeling, we also empirically validate this connection by demonstrating a statistically positive correlation between the linear probe accuracy of pre-trained checkpoints and their ultimate W2SG performance (Appendix~\ref{appendix:lrh}), indicating that this linear geometric structure is the practical prerequisite for surpassing weak supervision.

\subsection{Further Discussion} \label{related:discussion}

In this field, the researchers are enthusiastic in actively discovering the schemes of the emergence of W2SG.
While various studies have implicitly suggested or assumed the importance of pre-training through diverse theoretical lenses, some previous work even find that W2SG can happen without pre-training in some theoretical settings. 
Our view is that pre-training is necessary in the context of LLMs. 
We note that such seemingly mismatch is indeed not a contradiction, but serves as supplementary angles of view.
From these perspectives, we introduce and analyze some literature below.

\subsubsection{Pre-training is (Potentially) Necessary}


Firstly, misfit-based analyses~\citep{charikar2024quantifying, mulgund2025relating, yao2025revisiting, yao2025understanding} primarily employ the Pythagorean inequality to demonstrate that a strong model can outperform its weak supervisor by the magnitude of their ``misfit.'' A pivotal assumption in this theoretical framework is ``realizability,'' which posits that the function class of the strong model—comprising a fine-tuning head coupled with pre-trained, fixed representations—encompasses the true labeling function. Consequently, this framework is intrinsically linked to pre-training, as the prerequisite for realizability is the acquisition of high-quality representations through rigorous pre-training. This importance is explicitly acknowledged in these works (e.g., Section 3 of~\citet{charikar2024quantifying} and Section 4.2 of~\citet{yao2025revisiting}). Even when~\citet{charikar2024quantifying} and~\citet{yao2025revisiting} relax the realizability assumption using a ``non-realizability'' bound, the resulting upper bound is still constrained by the misfit between the strong ceiling model and the labeling function, which fundamentally relies on pre-trained representations. Without such a foundation, the W2SG upper bounds become vacuous, further supporting the claim that pre-training is indispensable for the emergence of W2SG.

Secondly, theoretical frameworks utilizing spectral analysis further underscore the necessity of pre-training. For instance, intrinsic dimension analysis~\citep{dong2025discrepancies} establishes that the model's intrinsic dimension appears directly in the generalization upper bound. Given that pre-training is known to compress data representations and decrease intrinsic dimensionality, it effectively tightens these bounds. \citet{dong2025discrepancies} explicitly incorporate this as a core assumption (e.g., Definition 2.3 and Assumption C.1). Similarly, \citet{ildiz2025highdimensional} model the training of the target model as a high-dimensional ridgeless regression problem. In this regime, performance is highly sensitive to the initial alignment of feature directions and the implicit regularization of the covariance spectrum. By identifying an ``amplify-to-shrink'' phase transition in optimal surrogate labels—where specific feature directions are amplified while others are discarded to mitigate bias—the authors imply that the target model must possess a sufficiently mature, pre-trained representation to capture fine-grained signals from a weak supervisor.

Furthermore, several studies~\citep{lang2024theoretical, shin2024weak} utilize general definitions of robustness and make assumptions of strong model's hypothesis class to establish W2SG bounds. The characterization of functional neighborhoods and robustness properties (see Section 3.2 of~\citet{lang2024theoretical} and Definitions 1-2 of~\citet{shin2024weak}) inherently depends on sufficient pre-training. 
\citet{wu2024provable} adopt a feature learning framework, explicitly stating that their use of zero-mean Gaussian covariates serves as an idealized proxy for pre-trained features or representations.
\citet{xue2025representations} indicate the importance of pre-training by defining a model's knowledge through its principal representations, which are explicitly formalized as the structured information a model acquires during the pre-training phase. 
These representations form the principal kernels, which Theorem 3.8 identifies as the fundamental components that determine whether a strong model can correct a weak supervisor's errors or will simply replicate them. 
Assumption 3.7 indicate that the strong model's ability to generalize beyond weak labels depends on these internal structures being well-concentrated, a property directly resulting from effective pre-training.

Recent work by~\citet{somerstep2025transfer, somerstep2025limitations} also suggests the role of pre-training. \citet{somerstep2025transfer} introduce the convex hull assumption, which suggests the optimal target function must reside within the convex hull of the pre-trained model's latent components. This implies the model must already possess the requisite latent knowledge prior to weak supervision; otherwise, as shown by Propositions 3.2 and 3.4, naive fine-tuning merely causes the model to emulate systematic supervisory errors rather than eliciting superior internal capabilities. Thus, pre-training is essential to establish a rich representation space that allows the model to treat weak labels as noisy signals for implicit Bayesian inference. Similarly, \citet{somerstep2025limitations} utilize the ``latent concept shift'' framework to argue that success depends on the model's ability to internally infer the correct concept from its existing internal experts. According to Proposition 3.2 from~\citet{somerstep2025limitations}, if the model lacks these latent mixture components—embedded during pre-training—it cannot effectively identify or select task-specific behaviors from biased weak supervision.

In summary, while the aforementioned literature may provide indirect support for our findings, these studies primarily focus on alternative perspectives rather than the intrinsic role of pre-training itself. 
To the best of our knowledge, this work represents the first comprehensive effort to explicitly identify and characterize pre-training as a statistical necessity for the emergence of W2SG in LLMs. Unlike prior research, we dedicate a substantial portion of our analysis to rigorously establishing this dependency across both empirical and theoretical dimensions.

\subsubsection{Pre-training is Not Necessary in Some Theoretical Settings}

To our best knowledge, four theoretical papers~\citep{shin2024weak,moniri2025mechanisms,medvedev2025weak,xu2025on} demonstrate that W2SG can occur without pre-training by framing the phenomenon as a consequence of the structural or optimization advantages inherent in the student model's architecture. In models such as random feature networks~\citep{medvedev2025weak} or nonlinear CNNs~\citep{oh2025from}, the strength of the student is defined by its superior capacity—specifically, its ability to capture higher-frequency signals or more complex non-linear features that the weak teacher cannot represent. These studies prove that through mechanisms such as early stopping~\citep{medvedev2025weak}, benign overfitting~\citep{oh2025from}, regularization compensation~\citep{moniri2025mechanisms}, or the approximation of a posterior mean through bias-variance optimization~\citep{xu2025on}, a larger student model naturally prioritizes the consistent underlying data distribution over the sparse noise present in the teacher's imperfect labels. Consequently, the student filters the teacher's errors and achieves a lower test error simply by leveraging its inductive bias and size advantage to find a more optimal solution within the task's hypothesis space from a random initialization.

Our work does not contradict these theoretical findings but rather addresses a different dimension of the problem specifically relevant to large-scale empirical settings in LLMs. While the aforementioned studies provide a rigorous proof-of-concept for the statistical possibility of W2SG in supervised learning, our paper identifies pre-training as the decisive catalyst that enables this phenomenon to manifest effectively in high-dimensional, complex real-world tasks where random initialization often falls into optimization traps. We characterize the role of pre-training as a necessary spectral initialization that facilitates a phase transition from a bias regime to a drift regime, ensuring that the model can actually recover the ground truth under noisy supervision. Thus, our research serves as a crucial reminder to the community that while W2SG is theoretically possible through architectural capacity alone, pre-training is the practical blessing that ensures its reliability and success in the development of LLMs.

\section{Theoretical Appendix}

\subsection{Justification of Assumption~\ref{assum:activation_properties_spiked}} \label{justify:1}

Conditions (1) and (2) impose standard structural and regularity properties on the target concept $f$ and the resulting optimization landscape. Condition (1) mathematically formalizes the difficulty of the task via the information exponent: the vanishing first and second derivatives correspond to an information exponent of $k \ge 3$~\citep{arous2021online,jones-mccormick2025provable}. While simpler activations like ReLU correspond to $k=1$ (or $k=2$ upon centering), the assumption of $k \ge 3$ is a standard condition in theoretical high-dimensional analysis to characterize the learning thresholds of complex, non-linear concepts~\citep{bietti2022learning}. Geometrically, this captures ``hard'' concepts lacking trivial global linear or quadratic trends that could be easily detected by a randomly initialized probe. In the context of W2SG, this models the difficulty of acquiring sophisticated capabilities (e.g., complex reasoning or coding features) that cannot be trivially learned from scratch but must instead be elicited via pre-training. The non-vanishing odd component is crucial for resolving the sign ambiguity of spectral initialization, ensuring the model can be correctly oriented. Meanwhile, Condition (2) provides fundamental smoothness constraints.
These regularity assumptions are 
satisfied by a broad class of functions. For instance, shifted polynomial activations (e.g., based on Hermite polynomials like $f(z) = z^3 - 3z$) strictly satisfy $k=3$ and possess constant third derivatives, trivially satisfying self-concordance. 
Furthermore, practical smooth activations prevalent in modern architectures, such as GELU~\citep{hendrycks2016gaussian}, Softplus~\citep{glorot2011deep}, or smooth sigmoidal functions~\citep{ramachandran2017searching}, possess bounded higher-order derivatives and non-vanishing higher-order Hermite coefficients. While they may contain low-order components in vanilla forms, their ability to represent complex nonlinearities ensures they qualitatively align with our analytical framework, especially when modeling hard tasks where high-order semantic signals dominate.

Crucially, Conditions (3) and (4) characterize the weak supervisor and the high-dimensional geometry, highlighting the fundamental difference between population signals and stochastic noise. The assumption of linear sample complexity $N = \Theta (d)$ in Condition (4) is consistent with the experimental setups in existing W2SG literature~\citep{yang2024super,yao2025understanding}, where both the training sample size and the model dimension are typically of the same order of magnitude. 
Condition (3) bounds the systematic gradient bias $\phi$ introduced by the weak supervisor. Because this bias evaluates the difference between \textit{population expectations}, the isotropic high-dimensional noise is integrated out over the Gaussian measure, restricting the expected gradient vectors strictly to the low-dimensional subspace spanned by the model parameters and the target concepts. Consequently, $\phi$ reduces to an $\mathcal{O}(1)$ geometric constant independent of $d$. 

While standard high-dimensional probability theory suggests that the squared norm of vectors drawn from $\mathcal{N}(\mathbf{0}, \mathbf{\Sigma})$ naturally concentrates around $\mathcal{O}(d)$ \citep{vershynin2018high}, the complex dependency of the stochastic gradient $\mathbf{g}_t$ on the non-linear activation and the weak supervision signal requires a dedicated analysis. We rigorously formalize this bound and explicitly derive its absolute constant in the following lemma:

\begin{lemma} \label{lemma:conditional_variance}
Under the problem setting in Section~\ref{sec:prob_setting} and Assumption~\ref{assum:activation_properties_spiked}, given the filtration $\mathcal{F}_{t-1}=\{\mathbf{w}_0, \mathbf{x}_0, \ldots, \mathbf{x}_{t-1}\}$ generated by the history up to step $t-1$ and $d \to +\infty$, there exists a constant $G > 0$ such that
\begin{align*}
    & \mathbb{E}[\|\mathbf{g}_t\|_2^2 \mid \mathcal{F}_{t-1}] \le G d,
    \\ & \mathbb{E}[\|\mathbf{g}_t\|_2^2] \le G d.
\end{align*}
\end{lemma}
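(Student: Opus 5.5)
The stochastic gradient has an explicit form. Since $\ell(\mathbf{w};\mathbf{x},y)=\frac12(f(\mathbf{w}^\top\mathbf{x})-y)^2$ and the labels are $y_t=f(\mathbf{v}_{\text{weak}}^\top\mathbf{x}_t)$, we have
\begin{equation*}
\mathbf{g}_t=\bigl(f(\mathbf{w}_t^\top\mathbf{x}_t)-f(\mathbf{v}_{\text{weak}}^\top\mathbf{x}_t)\bigr)\,f'(\mathbf{w}_t^\top\mathbf{x}_t)\,\mathbf{x}_t .
\end{equation*}
The plan is to bound the scalar prefactor using the bounded derivatives of Assumption~\ref{assum:activation_properties_spiked}(2) and Lipschitzness, and then to handle the Gaussian vector $\mathbf{x}_t$. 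The key structural fact is that $\mathbf{x}_t$ is a fresh sample, independent of $\mathcal{F}_{t-1}$. The iterate $\mathbf{w}_t$, by contrast, is $\mathcal{F}_{t-1}$-measurable, and it lies on the sphere.

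\textbf{Step 1 (pointwise bound).} By the mean value theorem and $|f'|\le M_1$,
\begin{equation*}
|f(\mathbf{w}_t^\top\mathbf{x}_t)-f(\mathbf{v}_{\text{weak}}^\top\mathbf{x}_t)|\le M_1|(\mathbf{w}_t-\mathbf{v}_{\text{weak}})^\top\mathbf{x}_t|.
\end{equation*}
Hence
\begin{equation*}
\|\mathbf{g}_t\|_2^2\le M_1^4\,\bigl((\mathbf{w}_t-\mathbf{v}_{\text{weak}})^\top\mathbf{x}_t\bigr)^2\,\|\mathbf{x}_t\|_2^2 .
\end{equation*}
This uses Lipschitzness rather than boundedness of $f$, which is appropriate because $f$ itself (e.g.\ $z^3-3z$ type targets) need not be bounded.

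\textbf{Step 2 (conditional expectation of the fourth-order Gaussian term).} Fix $\mathbf{u}=\mathbf{w}_t-\mathbf{v}_{\text{weak}}$, which is $\mathcal{F}_{t-1}$-measurable and satisfies $\|\mathbf{u}\|_2\le 2$. We need $\mathbb{E}[(\mathbf{u}^\top\mathbf{x})^2\|\mathbf{x}\|_2^2]$ for $\mathbf{x}\sim\mathcal{N}(\mathbf{0},\mathbf{\Sigma})$. Isserlis/Wick's theorem gives exactly
\begin{equation*}
\mathbb{E}[(\mathbf{u}^\top\mathbf{x})^2\|\mathbf{x}\|_2^2]=(\mathbf{u}^\top\mathbf{\Sigma}\mathbf{u})\,\mathrm{tr}(\mathbf{\Sigma})+2\,\mathbf{u}^\top\mathbf{\Sigma}^2\mathbf{u}.
\end{equation*}
Now use $\mathrm{tr}(\mathbf{\Sigma})=d+\lambda$, $\|\mathbf{\Sigma}\|_{\mathrm{op}}=1+\lambda$ and $\|\mathbf{u}\|_2^2\le 4$. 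This yields the bound $4(1+\lambda)(d+\lambda)+8(1+\lambda)^2$, which for $d$ large (in particular as $d\to+\infty$ with $\lambda$ fixed) is at most $Gd$ with, e.g., $G=8M_1^4(1+\lambda)$. A cruder alternative is Cauchy--Schwarz, $\sqrt{\mathbb{E}(\mathbf{u}^\top\mathbf{x})^4}\sqrt{\mathbb{E}\|\mathbf{x}\|^4}$. It gives the same order, since $\mathbb{E}\|\mathbf{x}\|^4=\mathrm{tr}(\mathbf{\Sigma})^2+2\,\mathrm{tr}(\mathbf{\Sigma}^2)=O(d^2)$, and I would mention it as a fallback.

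\textbf{Step 3 (unconditional bound).} The conditional bound is uniform in $\mathbf{w}_t$. The tower property then immediately gives $\mathbb{E}\|\mathbf{g}_t\|_2^2\le Gd$.

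The main obstacle is only Step 2: handling the dependence between the scalar prefactor and $\|\mathbf{x}_t\|^2$, since both involve the same Gaussian vector. Wick's formula resolves it exactly. The only care needed is that conditioning on $\mathcal{F}_{t-1}$ freezes $\mathbf{w}_t$, and independence of $\mathbf{x}_t$ keeps its law $\mathcal{N}(\mathbf{0},\mathbf{\Sigma})$. Note that $M_2,M_3$ are not needed here.
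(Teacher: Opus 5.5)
Your proposal is correct and follows the paper's proof essentially step for step. Both arguments use the mean-value/Lipschitz bound $\|\mathbf{g}_t\|_2^2\le M_1^4(\mathbf{u}_t^\top\mathbf{x}_t)^2\|\mathbf{x}_t\|_2^2$ with $\|\mathbf{u}_t\|_2\le 2$, then Isserlis' theorem giving $\mathrm{tr}(\mathbf{\Sigma})\,\mathbf{u}_t^\top\mathbf{\Sigma}\mathbf{u}_t+2\,\mathbf{u}_t^\top\mathbf{\Sigma}^2\mathbf{u}_t$, the facts $\mathrm{tr}(\mathbf{\Sigma})=d+\lambda$ and $\|\mathbf{\Sigma}\|_{\mathrm{op}}=1+\lambda$, and finally the tower property; the only difference is cosmetic, namely your choice $G=8M_1^4(1+\lambda)$ (valid once $d\ge 3\lambda+2$) versus the paper's $G=4M_1^4(1+\lambda)+\epsilon_G$ for large $d$.
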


\begin{proof}
At step $t$, conditional on the filtration $\mathcal{F}_{t-1}$, the parameter $\mathbf{w}_t$ is deterministic. The stochastic gradient evaluated on a fresh sample $(\mathbf{x}_t, y_t)$ with weak supervision $y_t = f(\mathbf{v}_{\text{weak}}^\top \mathbf{x}_t)$ is given by:
\begin{equation}
    \mathbf{g}_t = \left( f(\mathbf{w}_t^\top \mathbf{x}_t) - f(\mathbf{v}_{\text{weak}}^\top \mathbf{x}_t) \right) f'(\mathbf{w}_t^\top \mathbf{x}_t) \mathbf{x}_t.
\end{equation}
We evaluate the squared Euclidean norm of this gradient. By Assumption~\ref{assum:activation_properties_spiked}, the derivative is uniformly bounded: $|f'(z)| \le M_1$. By the Mean Value Theorem, this strictly implies that the activation function $f$ is $M_1$-Lipschitz continuous, yielding $|f(\mathbf{w}_t^\top \mathbf{x}_t) - f(\mathbf{v}_{\text{weak}}^\top \mathbf{x}_t)| \le M_1 |\mathbf{w}_t^\top \mathbf{x}_t - \mathbf{v}_{\text{weak}}^\top \mathbf{x}_t|$. Applying these bounds, we have:
\begin{align}
    \|\mathbf{g}_t\|_2^2 &= \left( f(\mathbf{w}_t^\top \mathbf{x}_t) - f(\mathbf{v}_{\text{weak}}^\top \mathbf{x}_t) \right)^2 \left( f'(\mathbf{w}_t^\top \mathbf{x}_t) \right)^2 \|\mathbf{x}_t\|_2^2 \notag \\
    &\le \left( M_1 \left| \mathbf{x}_t^\top (\mathbf{w}_t - \mathbf{v}_{\text{weak}}) \right| \right)^2 \cdot M_1^2 \|\mathbf{x}_t\|_2^2 \notag \\
    &= M_1^4 \left( \mathbf{u}_t^\top \mathbf{x}_t \right)^2 \|\mathbf{x}_t\|_2^2,
\end{align}
where we define the parameter difference vector $\mathbf{u}_t \triangleq \mathbf{w}_t - \mathbf{v}_{\text{weak}}$. Since both models are constrained to the unit sphere ($\mathbf{w}_t, \mathbf{v}_{\text{weak}} \in \mathbb{S}^{d-1}$), the triangle inequality establishes a deterministic bound on its norm: $\|\mathbf{u}_t\|_2 \le \|\mathbf{w}_t\|_2 + \|\mathbf{v}_{\text{weak}}\|_2 \le 2$. 

Taking the conditional expectation with respect to $\mathcal{F}_{t-1}$, we note a crucial measurability property: the parameter $\mathbf{w}_t$ (and thus $\mathbf{u}_t$) is entirely determined by the history and is strictly $\mathcal{F}_{t-1}$-measurable. Conversely, the fresh sample $\mathbf{x}_t \sim \mathcal{N}(\mathbf{0}, \mathbf{\Sigma})$ is strictly independent of $\mathcal{F}_{t-1}$. Distributing the conditional expectation over the sum $\|\mathbf{x}_t\|_2^2 = \sum_{i=1}^d (\mathbf{x}_t^{(i)})^2$ yields:
\begin{equation}
    \mathbb{E} \left[ (\mathbf{u}_t^\top \mathbf{x}_t)^2 \|\mathbf{x}_t\|_2^2 \mid \mathcal{F}_{t-1} \right] = \sum_{i=1}^d \mathbb{E} \left[ (\mathbf{u}_t^\top \mathbf{x}_t)^2 (\mathbf{x}_t^{(i)})^2 \mid \mathcal{F}_{t-1} \right].
\end{equation}
Because $\mathbf{x}_t$ is a zero-mean multivariate normal random vector independent of the history, we can apply Isserlis' Theorem for fourth-order Gaussian moments conditionally. Letting $A = \mathbf{u}_t^\top \mathbf{x}_t$ and $B = \mathbf{x}_t^{(i)}$, the expectation expands exactly as:
\begin{align}
    \mathbb{E}[A^2 B^2 \mid \mathcal{F}_{t-1}] = \mathbb{E}[A^2 \mid \mathcal{F}_{t-1}]\mathbb{E}[B^2 \mid \mathcal{F}_{t-1}] + 2\left( \mathbb{E}[AB \mid \mathcal{F}_{t-1}] \right)^2.
\end{align}
We systematically compute each conditional second-order moment. Crucially, because $\mathbf{u}_t$ is $\mathcal{F}_{t-1}$-measurable, it factors out of the conditional expectation as a constant vector:
\begin{enumerate}
    \item $\mathbb{E}[(\mathbf{u}_t^\top \mathbf{x}_t)^2 \mid \mathcal{F}_{t-1}] = \mathbf{u}_t^\top \mathbb{E}[\mathbf{x}_t \mathbf{x}_t^\top \mid \mathcal{F}_{t-1}] \mathbf{u}_t = \mathbf{u}_t^\top \mathbb{E}[\mathbf{x}_t \mathbf{x}_t^\top] \mathbf{u}_t = \mathbf{u}_t^\top \mathbf{\Sigma} \mathbf{u}_t$.
    \item $\mathbb{E}[(\mathbf{x}_t^{(i)})^2 \mid \mathcal{F}_{t-1}]$ represents the variance of the $i$-th component, corresponding to the diagonal entry $\mathbf{\Sigma}_{ii}$.
    \item $\mathbb{E}[(\mathbf{u}_t^\top \mathbf{x}_t) \mathbf{x}_t^{(i)} \mid \mathcal{F}_{t-1}] = \mathbb{E} \left[ \left(\sum_{j=1}^d \mathbf{u}_{tj} \mathbf{x}_t^{(j)}\right) \mathbf{x}_t^{(i)} \mid \mathcal{F}_{t-1} \right] = \sum_{j=1}^d \mathbf{u}_{tj} \mathbb{E}[\mathbf{x}_t^{(j)} \mathbf{x}_t^{(i)} \mid \mathcal{F}_{t-1}] = \sum_{j=1}^d \mathbf{u}_{tj} \mathbf{\Sigma}_{ji}$. This is exactly the $i$-th component of the matrix-vector product, denoted as $(\mathbf{\Sigma} \mathbf{u}_t)_i$.
\end{enumerate}

Substituting these identities back into the summation yields:
\begin{align}
    & \sum_{i=1}^d \mathbb{E} \left[ (\mathbf{u}_t^\top \mathbf{x}_t)^2 (\mathbf{x}_t^{(i)})^2 \mid \mathcal{F}_{t-1} \right] 
    \notag \\ = & \sum_{i=1}^d \left( \mathbb{E}[(\mathbf{u}_t^\top \mathbf{x}_t)^2 \mid \mathcal{F}_{t-1}] \cdot \mathbb{E}[(\mathbf{x}_t^{(i)})^2 \mid \mathcal{F}_{t-1}] + 2 \left( \mathbb{E}[(\mathbf{u}_t^\top \mathbf{x}_t) \mathbf{x}_t^{(i)} \mid \mathcal{F}_{t-1}] \right)^2 \right)
    \notag \\ = & \sum_{i=1}^d \left( (\mathbf{u}_t^\top \mathbf{\Sigma} \mathbf{u}_t) \mathbf{\Sigma}_{ii} + 2 (\mathbf{\Sigma} \mathbf{u}_t)_i^2 \right).
\end{align}
We evaluate the sum over the dimension $d$. The first term factors out the quadratic form, leaving $\sum_{i=1}^d \mathbf{\Sigma}_{ii} = \text{tr}(\mathbf{\Sigma})$. The second term is exactly the squared Euclidean norm of the vector $\mathbf{\Sigma} \mathbf{u}_t$, which evaluates to $\sum_{i=1}^d (\mathbf{\Sigma} \mathbf{u}_t)_i^2 = \|\mathbf{\Sigma} \mathbf{u}_t\|_2^2 = \mathbf{u}_t^\top \mathbf{\Sigma}^\top \mathbf{\Sigma} \mathbf{u}_t = \mathbf{u}_t^\top \mathbf{\Sigma}^2 \mathbf{u}_t$. Therefore:
\begin{align}
    \mathbb{E} \left[ (\mathbf{u}_t^\top \mathbf{x}_t)^2 \|\mathbf{x}_t\|_2^2 \mid \mathcal{F}_{t-1} \right] &= \text{tr}(\mathbf{\Sigma}) (\mathbf{u}_t^\top \mathbf{\Sigma} \mathbf{u}_t) + 2 \mathbf{u}_t^\top \mathbf{\Sigma}^2 \mathbf{u}_t.
\end{align}

Under the spiked covariance model $\mathbf{\Sigma} = \mathbf{I}_d + \lambda \mathbf{v}\mathbf{v}^\top$ (where $\|\mathbf{v}\|_2=1$), the trace is computed explicitly as $\text{tr}(\mathbf{\Sigma}) = \text{tr}(\mathbf{I}_d) + \lambda \text{tr}(\mathbf{v}\mathbf{v}^\top) = d + \lambda\|\mathbf{v}\|_2^2 = d + \lambda$. Furthermore, the maximum eigenvalue is $(1+\lambda)(\mathbf{\Sigma}) = 1 + \lambda$. Consequently, applying the Rayleigh quotient property and recalling $\|\mathbf{u}_t\|_2^2 \le 4$, the quadratic forms are strictly bounded by:
\begin{align*}
    & \mathbf{u}_t^\top \mathbf{\Sigma} \mathbf{u}_t \le (1+\lambda)(\mathbf{\Sigma}) \|\mathbf{u}_t\|_2^2 \le 4(1+\lambda), \\
    & \mathbf{u}_t^\top \mathbf{\Sigma}^2 \mathbf{u}_t \le (1+\lambda)(\mathbf{\Sigma}^2) \|\mathbf{u}_t\|_2^2 = (1+\lambda)^2(\mathbf{\Sigma}) \|\mathbf{u}_t\|_2^2 \le 4(1+\lambda)^2.
\end{align*}
Substituting these trace and quadratic bounds back into the gradient expectation inequality, we obtain:
\begin{align*}
    \mathbb{E}[\|\mathbf{g}_t\|_2^2 \mid \mathcal{F}_{t-1}] &\le M_1^4 \left[ (d + \lambda) \cdot 4(1+\lambda) + 2 \cdot 4(1+\lambda)^2 \right] \\
    &= M_1^4 \left[ 4d(1+\lambda) + 4\lambda(1+\lambda) + 8(1+\lambda)^2 \right] \\
    &= 4M_1^4 (1+\lambda) \cdot d + M_1^4 [4\lambda(1+\lambda) + 8(1+\lambda)^2].
\end{align*}
Because $M_1$ and $\lambda$ are strict $\mathcal{O}(1)$ constants entirely independent of the dimension $d$, as $d \to +\infty$, we can define the scale-invariant geometric constant:
\begin{equation}
    G \triangleq 4 M_1^4 (1+\lambda) + \epsilon_G > 4 M_1^4 (1+\lambda) + \lim_{d \to +\infty} \frac{M_1^4 [4\lambda(1+\lambda) + 8(1+\lambda)^2]}{d},
\end{equation}
where $\epsilon_G>0$ is a constant.
This establishes that $\mathbb{E}[\|\mathbf{g}_t\|_2^2 \mid \mathcal{F}_{t-1}] \le G d$.
Finally, by the Law of Total Expectation, the conditional second moment bound guarantees the unconditional bound: 
$$\mathbb{E}[\|\mathbf{g}_t\|_2^2] = \mathbb{E}[\mathbb{E}[\|\mathbf{g}_t\|_2^2 \mid \mathcal{F}_{t-1}]] \le G d.$$
\end{proof}

\subsection{Proof of~\cref{assumption_1}} \label{appendix:local_strong_convexity_spiked}

\begin{proof}
The variance of $z = \mathbf{v}_*^\top \mathbf{x} \sim \mathcal{N}(0, \sigma_z^2)$ is given by $\sigma_z^2 = \mathbf{v}_*^\top \mathbf{\Sigma} \mathbf{v}_* = 1 + \lambda(\mathbf{v}_*^\top \mathbf{v})^2 = 1 + \lambda \rho^2$.
The ground-truth population loss is $\mathcal{L}(\mathbf{w}) = \frac{1}{2} \mathbb{E}_{\mathbf{x}}[(f(\mathbf{w}^\top \mathbf{x}) - f(\mathbf{v}_*^\top \mathbf{x}))^2]$. 
By applying the chain rule and the product rule, the Hessian matrix of the population loss is:
\begin{equation}
    H(\mathbf{w}) \triangleq \nabla^2 \mathcal{L}(\mathbf{w}) = \mathbb{E}_{\mathbf{x}}[f'(\mathbf{w}^\top \mathbf{x})^2 \mathbf{x} \mathbf{x}^\top] + \mathbb{E}_{\mathbf{x}}[(f(\mathbf{w}^\top \mathbf{x}) - f(\mathbf{v}_*^\top \mathbf{x})) f''(\mathbf{w}^\top \mathbf{x}) \mathbf{x} \mathbf{x}^\top].
\end{equation}
At the ground-truth parameter $\mathbf{w} = \mathbf{v}_*$, the residual term exactly vanishes because $f(\mathbf{v}_*^\top \mathbf{x}) - f(\mathbf{v}_*^\top \mathbf{x}) = 0$. Thus, the Hessian simplifies to $H(\mathbf{v}_*) = \mathbb{E}_{\mathbf{x}}[f'(z)^2 \mathbf{x} \mathbf{x}^\top]$.
To compute this expectation over the anisotropic distribution $\mathcal{N}(\mathbf{0}, \mathbf{\Sigma})$, we apply the conditional Gaussian decomposition theorem. We decompose the random vector $\mathbf{x}$ conditioned on the scalar random variable $z$:
\begin{equation}
    \mathbf{x} = \frac{\text{Cov}(\mathbf{x}, z)}{\text{Var}(z)} z + \mathbf{x}_\perp = \frac{\mathbf{\Sigma} \mathbf{v}_*}{\sigma_z^2} z + \mathbf{x}_\perp = \mathbf{a} z + \mathbf{x}_\perp,
\end{equation}
where $\mathbf{a} = \frac{\mathbf{\Sigma} \mathbf{v}_*}{\sigma_z^2}$. By the properties of Gaussian conditioning, the residual vector $\mathbf{x}_\perp$ is independent of $z$, with $\mathbb{E}[\mathbf{x}_\perp] = 0$. Its covariance matrix is given by the Schur complement: $\mathbb{E}[\mathbf{x}_\perp \mathbf{x}_\perp^\top] = \mathbf{\Sigma} - \frac{\mathbf{\Sigma} \mathbf{v}_* \mathbf{v}_*^\top \mathbf{\Sigma}}{\sigma_z^2}$.

Substituting the decomposition into the outer product $\mathbf{x} \mathbf{x}^\top$ and taking the expectation with respect to the independent variables $z$ and $\mathbf{x}_\perp$, the cross terms evaluate to zero:
\begin{equation}
    H(\mathbf{v}_*) = \mathbb{E}_z[f'(z)^2 z^2] \mathbf{a} \mathbf{a}^\top + \mathbb{E}_z[f'(z)^2] \mathbb{E}_{\mathbf{x}_\perp}[\mathbf{x}_\perp \mathbf{x}_\perp^\top] = c_2 \frac{\mathbf{\Sigma} \mathbf{v}_* \mathbf{v}_*^\top \mathbf{\Sigma}}{\sigma_z^2} + c_1 \left( \mathbf{\Sigma} - \frac{\mathbf{\Sigma} \mathbf{v}_* \mathbf{v}_*^\top \mathbf{\Sigma}}{\sigma_z^2} \right),
\end{equation}
where $c_1 = \mathbb{E}[f'(z)^2]$ and $c_2 = \mathbb{E}\left[f'(z)^2 \frac{z^2}{\sigma_z^2}\right]$.

We analyze the quadratic form $\mathbf{u}^\top H(\mathbf{v}_*) \mathbf{u}$ for an arbitrary unit vector $\mathbf{u} \in \mathbb{S}^{d-1}$. Let $V = \mathbf{u}^\top \mathbf{\Sigma} \mathbf{u}$ and $W = \frac{(\mathbf{u}^\top \mathbf{\Sigma} \mathbf{v}_*)^2}{\sigma_z^2}$. The quadratic form is:
\begin{equation}
    \mathbf{u}^\top H(\mathbf{v}_*) \mathbf{u} = c_2 W + c_1 (V - W).
\end{equation}
First, due to the spiked structure $\mathbf{\Sigma} \ge \mathbf{I}_d$, we have $V = \mathbf{u}^\top \mathbf{\Sigma} \mathbf{u} \ge \|\mathbf{u}\|_2^2 = 1$.

Second, by the Cauchy-Schwarz inequality for the inner product induced by $\mathbf{\Sigma}$, we have $W \sigma_z^2 = (\mathbf{u}^\top \mathbf{\Sigma} \mathbf{v}_*)^2 \le (\mathbf{u}^\top \mathbf{\Sigma} \mathbf{u})(\mathbf{v}_*^\top \mathbf{\Sigma} \mathbf{v}_*) = V \sigma_z^2$. Dividing by $\sigma_z^2$ yields $W \le V$. Since $V - W \ge 0$ and both $c_1 \ge \mu_0$ and $c_2 \ge \mu_0$ (by Assumption~\ref{assum:activation_properties_spiked}), we strictly bound the quadratic form:
\begin{equation}
    \mathbf{u}^\top H(\mathbf{v}_*) \mathbf{u} \ge \mu_0 W + \mu_0 (V - W) = \mu_0 V \ge \mu_0.
\end{equation}
This proves that the minimum eigenvalue of the Hessian evaluated precisely at the ground truth is strictly positive: $\lambda_{\min}(\nabla^2 \mathcal{L}(\mathbf{v}_*)) \ge \mu_0 > 0$.

Next, we extend this strictly positive curvature to a local neighborhood. By Assumption~\ref{assum:activation_properties_spiked}, the activation function has bounded derivatives. Consequently, the mapping $\mathbf{w} \mapsto \nabla^2 \mathcal{L}(\mathbf{w})$ is continuous. By Weyl's inequality, there exists a strictly positive radius $\zeta > 0$ such that for all $\mathbf{w}$ satisfying $\|\mathbf{w} - \mathbf{v}_*\|_2 \le \zeta$, the minimum eigenvalue deviates by at most $\mu_0/2$. This implies
\begin{equation}
    \lambda_{\min}(\nabla^2 \mathcal{L}(\mathbf{w})) \ge \mu_0/2 > 0.
\end{equation}
The condition $\|\mathbf{w} - \mathbf{v}_*\|_2 \le \zeta$ is geometrically equivalent to $\mathbf{w}^\top \mathbf{v}_* \ge 1 - \frac{\zeta^2}{2}$. 
By defining the correlation threshold
\begin{align} \label{def:tau}
    \tau = 1 - \frac{\zeta^2}{2} \in (0, 1),
\end{align}
we establish that within the local spherical cap region $\mathcal{R} = \{\mathbf{w} \in \mathbb{S}^{d-1} : \mathbf{w}^\top \mathbf{v}_* \ge \tau\}$, the population loss $\mathcal{L}(\mathbf{w})$ is locally $\mu$-strongly convex (where $\mu = \mu_0/2$). Finally, applying the Cauchy-Schwarz inequality directly yields the perturbed condition:
\begin{align}
\langle \nabla \mathcal{L}_{\text{w2s}}(\mathbf{w}), \mathbf{w} - \mathbf{v}_* \rangle &= \langle \nabla \mathcal{L}(\mathbf{w}), \mathbf{w} - \mathbf{v}_* \rangle + \langle \nabla \mathcal{L}_{\text{w2s}}(\mathbf{w}) - \nabla \mathcal{L}(\mathbf{w}), \mathbf{w} - \mathbf{v}_* \rangle \notag \\
&\ge \mu \|\mathbf{w} - \mathbf{v}_*\|_2^2 - \phi \|\mathbf{w} - \mathbf{v}_*\|_2.
\end{align}
\end{proof}

\paragraph{Discussion on the Strong Convexity.}
While existing literature~\citep{mei2018landscape} has established the local strong convexity of population risks for standard empirical risk minimization (ERM), learning `hard concepts' ($k \ge 3$) presents a fundamentally different mathematical challenge. 
Specifically, classical landscape analyses heavily rely on the premise that the activation function is strictly monotonic and bounded. 
For instance, Assumption 6(a) in \citet{mei2018landscape} strictly requires a positive derivative everywhere ($\sigma'(z) > 0$). In stark contrast, our setting inherently violates such regularity conditions: learning concepts with an information exponent $k \ge 3$ necessitates non-monotonic activation functions (e.g., the shifted Hermite polynomial $f(z) = z^3 - 3z$, whose derivative is negative in the interval $(-1, 1)$).

\subsection{The Dimension-Independent Radius for the Effective Region} \label{appendix:bound_radius}

To rigorously guarantee that the effective region $\mathcal{R}$ possesses a dimension-independent macroscopic radius---without relying on arbitrary or unverified landscape assumptions---we directly evaluate the Hessian smoothness using the globally bounded derivatives specified in Assumption~\ref{assum:activation_properties_spiked}. In what follows, we derive an explicit, albeit conservative, lower bound for this radius, proving that $\zeta = \Theta(1)$. 

Before proceeding with the algebraic derivation, we provide a crucial remark regarding the tightness of this analytical bound.

\begin{remark}[Conservativeness of the Analytical Radius $\zeta_*$]
We emphasize that the analytical radius $\zeta_*$ derived in this section relies on global supremum bounds for activation derivatives and repeated applications of the Cauchy-Schwarz inequality. Consequently, it represents a strictly worst-case mathematical guarantee. In practice, owing to the rapid tail decay of the Gaussian measure and the generally benign local geometry of neural networks, the empirical basin of attraction is significantly broader. Thus, successful W2SG typically requires a much milder initial correlation $\tau$ than this algebraic worst-case limit suggests. 
Our primary theoretical objective here is not to provide a tight quantitative prescription for engineering, but rather to rigorously establish the existence of a dimension-independent, $\Theta(1)$ effective region. This existence starkly contrasts with the behavior under random initialization, where the correlation inevitably vanishes at a rate of $\mathcal{O}(d^{-1/2})$. It is precisely this fundamental dimensional separation that mathematically necessitates the geometric warm-start provided by pre-training.
\end{remark}

\begin{proof}[Dimension-Independent Radius]
Recall the population Hessian formula
$$\nabla^2 \mathcal{L}(\mathbf{w}) = \mathbb{E}_{\mathbf{x}} [ H(\mathbf{w}^\top \mathbf{x}, \mathbf{v}_*^\top \mathbf{x}) \mathbf{x}\mathbf{x}^\top ],$$
where the scalar functional is $H(z, z_*) \triangleq f'(z)^2 + (f(z) - f(z_*)) f''(z)$. To bound the spectral deviation from the ground truth $\mathbf{v}_*$, we compute the gradient of $H$ with respect to the first argument:
\begin{equation}
    \frac{\partial}{\partial z} H(z, z_*) = 3 f'(z)f''(z) + (f(z) - f(z_*)) f'''(z).
\end{equation}
Since the derivatives are globally bounded by $M_1, M_2, M_3$, we can apply the Mean Value Theorem. Noting that $f$ is $M_1$-Lipschitz, we have $|f(z) - f(z_*)| \le M_1 |z - z_*|$. Integrating from $z_*$ to $z$ yields the exact algebraic bound on the functional difference:
\begin{align}
    |H(z, z_*) - H(z_*, z_*)| &\le \left| \int_{z_*}^z \left| 3 f'(t)f''(t) + (f(t) - f(z_*)) f'''(t) \right| dt \right| \notag \\
    &\le 3 M_1 M_2 |z - z_*| + \frac{1}{2} M_1 M_3 (z - z_*)^2.
\end{align}

Let $z = \mathbf{w}^\top \mathbf{x}$ and $z_* = \mathbf{v}_*^\top \mathbf{x}$, which implies $z - z_* = \mathbf{x}^\top (\mathbf{w} - \mathbf{v}_*)$. For any arbitrary unit vector $\mathbf{h} \in \mathbb{S}^{d-1}$, the deviation of the Hessian quadratic form is bounded by evaluating the expectation over the Gaussian measure:
\begin{align} \label{eq:hessian_lip}
    & |\mathbf{h}^\top (\nabla^2 \mathcal{L}(\mathbf{w}) - \nabla^2 \mathcal{L}(\mathbf{v}_*)) \mathbf{h}| \notag 
    \\ \le & \mathbb{E}_{\mathbf{x}} \left[ |H(z, z_*) - H(z_*, z_*)| (\mathbf{h}^\top \mathbf{x})^2 \right] \notag 
    \\ \le & 3 M_1 M_2 \underbrace{\mathbb{E} \left[ |\mathbf{x}^\top (\mathbf{w} - \mathbf{v}_*)| (\mathbf{h}^\top \mathbf{x})^2 \right]}_{T_1} + \frac{1}{2} M_1 M_3 \underbrace{\mathbb{E} \left[ (\mathbf{x}^\top (\mathbf{w} - \mathbf{v}_*))^2 (\mathbf{h}^\top \mathbf{x})^2 \right]}_{T_2}.
\end{align}

We rigorously bound terms $T_1$ and $T_2$ using the Cauchy-Schwarz inequality on Gaussian moments. 
$1+\lambda$ is the maximum eigenvalue of the spiked covariance $\mathbf{\Sigma}$. 
Thus, $\mathbb{E}[(\mathbf{h}^\top \mathbf{x})^4] = 3 (\mathbf{h}^\top \mathbf{\Sigma} \mathbf{h})^2 \le 3 (1+\lambda)^2$. 
For $T_1$, we have $T_1 \le \sqrt{\mathbb{E}[(\mathbf{x}^\top (\mathbf{w} - \mathbf{v}_*))^2]} \sqrt{\mathbb{E}[(\mathbf{h}^\top \mathbf{x})^4]} \le \sqrt{3} (1+\lambda)^{3/2} \|\mathbf{w} - \mathbf{v}_*\|_2$.
For $T_2$, we have $T_2 \le \sqrt{\mathbb{E}[(\mathbf{x}^\top (\mathbf{w} - \mathbf{v}_*))^4]} \sqrt{\mathbb{E}[(\mathbf{h}^\top \mathbf{x})^4]} \le 3 (1+\lambda)^2 \|\mathbf{w} - \mathbf{v}_*\|_2^2$.

Substituting these moments back into Eq.~\eqref{eq:hessian_lip}, the spectral norm deviation is deterministically bounded by a continuous quadratic function of the Euclidean distance:
\begin{equation}
    \|\nabla^2 \mathcal{L}(\mathbf{w}) - \nabla^2 \mathcal{L}(\mathbf{v}_*)\|_{\text{op}} \le C_1 \|\mathbf{w} - \mathbf{v}_*\|_2 + C_2 \|\mathbf{w} - \mathbf{v}_*\|_2^2,
\end{equation}
where
\begin{align}
    C_1 & = 3\sqrt{3} M_1 M_2 (1+\lambda)^{\frac{3}{2}} \ge 0,
    \\ C_2 & = \frac{3}{2} M_1 M_3 (1+\lambda)^2 \ge 0,
\end{align}
are absolute constants derived strictly from the activation function properties and the data covariance, entirely independent of the dimension $d$.

To maintain a perturbed strong convexity parameter of $\mu = \mu_0 / 2$ (where $\mu_0 \le \lambda_{\min}(\nabla^2 \mathcal{L}(\mathbf{v}_*))$ as proved previously), Weyl's inequality dictates that we only need to ensure the deviation satisfies $\|\nabla^2 \mathcal{L}(\mathbf{w}) - \nabla^2 \mathcal{L}(\mathbf{v}_*)\|_{\text{op}} \le \mu_0 / 2$.
This naturally reduces to solving the quadratic inequality $P(\zeta) = C_2 \zeta^2 + C_1 \zeta \le \mu_0 / 2$. 

Because the minimum eigenvalue condition guarantees $\mu_0 > 0$, and the coefficients satisfy $C_1, C_2 \ge 0$ (with at least one strictly positive due to non-degeneracy), the polynomial $P(\zeta)$ is strictly increasing for $\zeta \ge 0$ with $P(0) = 0$. By the Intermediate Value Theorem, there exists a unique strictly positive root $\zeta_* > 0$ such that $P(\zeta_*) = \mu_0 / 2$. Specifically, applying the quadratic formula yields the exact analytical solution:
\begin{equation} \label{ineq:zeta_bound}
    \zeta_* = \frac{-C_1 + \sqrt{C_1^2 + 2 C_2 \mu_0}}{2 C_2}.
\end{equation}
Because $\mu_0$, $C_1$, and $C_2$ are all well-defined, strictly positive absolute constants independent of $d$, the root $\zeta_*$ is also a strictly positive universal constant. 
In the limit of $C_2 \to 0$, $\zeta_*$ converges to $\frac{\mu_0}{2C_1}$, while in the case $C_1 \to 0$, it scales as $\sqrt{\frac{\mu_0}{2C_2}}$. This ensures that the effective region $R$ does not collapse as $d \to \infty$, fundamentally separating the warm-started pre-trained model from the vanishing correlation of random initialization $\mathcal{O}(d^{-1/2})$.

To formally encapsulate the geometric prerequisites for the algorithmic trajectory analyzed in Theorem~\ref{thm:wtsg}, we define the explicit radius of the effective region $\mathcal{R}$ purely through geometric and structural constants:

\begin{equation} \label{bound:radius}
    \zeta \triangleq \min\left(\zeta_*, \frac{\mu}{2M_1^2(1+\lambda) + \mu}\right).
\end{equation}

By setting the local radius to this fully unrolled explicit constant, we establish a valid strong convexity region with a dimension-independent radius, rigorously confirming that $\zeta = \Theta(1)$.
\end{proof}

\subsection{Example of the Radius} \label{appendix:example_radius}

To rigorously illustrate the existence of a dimension-independent effective region $\mathcal{R}$, we evaluate the macroscopic radius $\zeta$ using three representative activation functions. We begin with the canonical ReLU to establish a baseline for the strictly calculated signal constant $\mu_0$, followed by more sophisticated constructions that strictly satisfy our theoretical regularity conditions.

\noindent \textbf{Case 1: The Canonical ReLU (Smoothing version).}
We first consider the standard ReLU activation $f(z) = \max(0, z)$. While ReLU is non-differentiable at $z=0$, this singularity is restricted to a set of zero measure and does not affect the energy expectations. To strictly align with the regularity requirements ($M_2, M_3$) of Assumption~\ref{assum:activation_properties_spiked}, we consider its highly $\epsilon$-mollified version $f_{\epsilon}(z)$, which inherits the intrinsic signal strength of ReLU as $\epsilon \to 0$ while ensuring derivatives are rigorously controlled.
By definition, the signal constant must be calculated over the true marginal distribution $z \sim \mathcal{N}(0, \sigma_z^2)$, where the variance incorporates the spiked signal $\sigma_z^2 = 1 + \lambda \rho^2$:
$$ \mu_0 \triangleq \min \left( \mathbb{E}[f'(z)^2], \mathbb{E}\left[f'(z)^2 \frac{z^2}{\sigma_z^2}\right] \right) $$
For the ReLU function, the derivative is the indicator function $f'(z) = \mathbb{I}(z > 0)$. The expectations strictly evaluate to:
\begin{align*}
    & \mathbb{E}[\mathbb{I}(z > 0)^2] = \mathbb{P}(z > 0) = 0.5,
    \\ & \mathbb{E}\left[\mathbb{I}(z > 0)^2 \frac{z^2}{\sigma_z^2}\right] = \int_0^\infty \frac{z^2}{\sigma_z^2} \frac{1}{\sqrt{2\pi\sigma_z^2}} e^{-\frac{z^2}{2\sigma_z^2}} dz = 0.5.
\end{align*}
Thus, despite the variance amplification, the scale-invariant nature of the indicator function mathematically establishes the exact constant $\mu_0 = 0.5$ (yielding the strong convexity parameter $\mu = 0.25$). 
We set the pre-training latent signal strength to $\lambda = 0.05$ (yielding $1+\lambda = 1.05$) and adopt the derivative bounds of a heavily smoothed Softplus: $M_1 = 1.0$, $M_2 = 0.1$, and $M_3 = 0.02$.
\begin{itemize}
    \item The Hessian constants evaluate to $C_1 = 3\sqrt{3} M_1 M_2 (1+\lambda)^{3/2} \approx 0.559$ and $C_2 = 1.5 M_1 M_3 (1+\lambda)^2 \approx 0.033$.
    \item The quadratic root $\zeta_* = \frac{-0.559 + \sqrt{0.559^2 + 2 \times 0.033 \times 0.5}}{2 \times 0.033} \approx 0.44$.
    \item The drift budget evaluates to $\frac{\mu}{2 M_1^2 (1+\lambda) + \mu} \approx \frac{0.25}{2.1 + 0.25} \approx 0.106$.
\end{itemize}
Taking the intersection, we establish $\zeta = \min(0.44, 0.106) = 0.106$. In high-dimensional geometry, this corresponds to an initial angular alignment of approximately $6.1^\circ$ ($\cos \theta = 1 - \zeta^2/2$). This confirms that the curvature provided by the ReLU family creates a robust, broad $\Theta(1)$ convergent basin, effectively separating pre-trained models from the $\mathcal{O}(d^{-1/2})$ random initialization trap.

\noindent \textbf{Case 2: Projected Tanh (Strict Analytical Regularity).}
To bridge our theoretical conditions with practical architectures, we consider a residual activation based on the hyperbolic tangent ($\tanh$) that strictly satisfies both global derivative bounds and the hard concept requirement ($k \ge 3$). 
By projecting out the linear component, we define $f(z) = \tanh(z) - c z$. To ensure $\mathbb{E}[f'(z)] = 0$ over the correct measure, the constant is dynamically determined by the variance $c = \mathbb{E}_{z \sim \mathcal{N}(0,\sigma_z^2)}[\text{sech}^2(z)]$. Assuming good pre-training alignment ($\sigma_z^2 \approx 1+\lambda = 1.05$), we calculate $c \approx 0.380$. Since $\tanh(z)$ is odd, its even Hermite projections naturally vanish, flawlessly isolating the high-order concept.
Mathematically evaluating the expectations over the true marginal $z \sim \mathcal{N}(0, 1.05)$:
\begin{align*}
    & \mathbb{E}[f'(z)^2] = \mathbb{E}[(\text{sech}^2(z) - 0.380)^2] \approx 0.043,
    \\ & \mathbb{E}\left[f'(z)^2 \frac{z^2}{\sigma_z^2}\right] \approx 0.029.
\end{align*}

This directly yields the calculated signal strength $\mu_0 = 0.029$ ($\mu = 0.0145$). 
The global bounds are strictly: $M_1 \approx 0.620, M_2 \approx 0.770, M_3 = 2.0$. Given the signal $\lambda=0.05$ ($1+\lambda = 1.05$), the Hessian constants are $C_1 \approx 2.61, C_2 \approx 2.01$. 
The quadratic formula yields $\zeta_* \approx 0.011$, and the drift budget evaluates to $\frac{0.0145}{2(0.620)^2(1.05) + 0.0145} \approx \frac{0.0145}{0.807 + 0.0145} \approx 0.018$. 
Thus, $\zeta = \min(0.011, 0.018) = 0.011$. Due to the raw $\tanh$ function's large third derivative bound ($M_3=2.0$), this translates to a mathematically strict worst-case angular alignment of about $0.6^\circ$.

\noindent \textbf{Case 3: Projected Smoothed Leaky ReLU.}
Finally, we construct a third example by smoothing and projecting a Leaky ReLU variant to meet all structural assumptions while maintaining a strong geometric signal. We define $f(z) = \sigma_{\beta}(z) - 0.5 \sigma_{\beta}(-z) - c z$, where $\sigma_{\beta}$ is a highly smoothed Softplus function. By selecting $c$ to satisfy $\mathbb{E}_{z \sim \mathcal{N}(0,\sigma_z^2)}[f'(z)]=0$, we mathematically isolate the non-linear concept under the true spiked covariance measure.

Evaluating the integration over $z \sim \mathcal{N}(0, 1.05)$ yields a strong calculated signal $\mu_0 \approx 0.25$ (hence $\mu=0.125$). The chosen smoothing provides well-behaved global bounds: $M_1=1.0, M_2=0.15, M_3=0.05$. 
With $\lambda=0.05$ ($1+\lambda = 1.05$), we evaluate:
$C_1 = 3\sqrt{3} \times 1.0 \times 0.15 \times 1.05^{3/2} \approx 0.838$ and $C_2 = 1.5 \times 1.0 \times 0.05 \times 1.05^2 \approx 0.082$.
The quadratic root $\zeta_* = \frac{-0.838 + \sqrt{0.838^2 + 2 \times 0.082 \times 0.25}}{2 \times 0.082} \approx 0.047$.
The drift budget evaluates to $\frac{0.125}{2(1.0)^2(1.05) + 0.125} \approx \frac{0.125}{2.1 + 0.125} \approx 0.056$.
Taking the intersection, we establish $\zeta = \min(0.047, 0.056) = 0.047$, which provides a macroscopic angular alignment guarantee of approximately $2.7^\circ$.

\noindent \textbf{Discussion on Theoretical Conservativeness.}
We emphasize that these analytical boundaries ($0.6^\circ$ to $6.1^\circ$) represent \textit{strict worst-case mathematical guarantees}. This extreme tightness arises because the bounds rely on absolute global supremums and multiple applications of Cauchy-Schwarz inequalities, entirely discarding the beneficial rapid tail decay of the Gaussian measure. 
In practice, our synthetic experiments (Appendix~\ref{app:synthetic_additional}) demonstrate that successful W2SG emerges at much broader alignments ($\rho \approx 0.6$, or $\sim 53^\circ$). 
However, the primary theoretical value of these explicitly calculated examples is the \textit{existence proof}: they rigorously establish that the effective basin inherently possesses a macroscopic, dimension-independent $\Theta(1)$ radius. In modern LLMs (e.g., $d \to \infty$), random initialization alignment is confined to $90^\circ \pm \mathcal{O}(d^{-1/2})$. The existence of any $\Theta(1)$ basin strictly breaks the curse of dimensionality, formalizing why pre-training is the indispensable geometric key to placing the model outside the vanishing noise trap.

\subsection{Discussion on the Parameters in the Assumption} \label{appendix:dis_param}

To provide a more intuitive understanding of these coupled dynamics, we analyze how the correlation threshold $\tau$ modulates the requirements for W2SG. The parameter $\tau$ dictates the size of the effective region $\mathcal{R}$, serving as a proxy for the intrinsic difficulty of the downstream task. By examining both extremes of $\tau$, we can clearly observe how the landscape geometry dynamically governs the prerequisites for success:

\noindent \textbf{Case 1: The Forgiving Regime ($\tau \to 0$)} \\
Geometrically, a small $\tau$ implies a very wide and forgiving effective region, which typically corresponds to simpler tasks where the gradient signal robustly points towards the ground truth from almost anywhere. In this regime, the constraints are significantly relaxed:
\begin{itemize}
    \item \textbf{Data Quality ($\lambda$ and $\rho$):} Condition (1) only requires a standard, finite signal strength threshold ($\lambda > 1/\sqrt{\alpha}$). More importantly, Condition (2) indicates that the required intrinsic task alignment $\rho$ approaches zero. Physically, this means highly generic pre-training is sufficient; as long as the latent concept is vaguely relevant to the task, the model will successfully fall into the broad basin.
    \item \textbf{Supervision Quality ($\phi$):} Condition (3) allows the permissible bias to approach the strong convexity constant ($\phi \to \mu$). This means the strong model can tolerate a highly biased and noisy weak supervisor without being dragged out of the effective region.
    \item \textbf{Optimization ($\delta$):} Condition (4) permits standard, larger learning rates, allowing for faster convergence.
\end{itemize}

\noindent \textbf{Case 2: The Unforgiving Regime ($\tau \to 1$)} \\
Conversely, a large $\tau$ implies that the effective region $\mathcal{R}$ is extremely narrow and tightly confined around the ground truth, corresponding to highly complex tasks with unforgiving optimization landscapes. Under such challenging scenarios, the hierarchical conditions impose strictly tighter requirements across the entire pipeline:
\begin{itemize}
    \item \textbf{Data Quality ($\lambda$ and $\rho$):} As $\tau \to 1$, the term $1-\tau^2$ in Condition (1) approaches zero, demanding an exceptionally strong pre-training signal ($\lambda \to \infty$). Simultaneously, Condition (2) forces the intrinsic task alignment $\rho$ to approach near-perfect correlation ($\rho \to 1$). Physically, this means a roughly aligned general feature is no longer sufficient; the unsupervised pre-training must capture a concept almost identical to the specific downstream task just to hit the tiny initialization basin.
    \item \textbf{Supervision Quality ($\phi$):} According to Condition (3), as $\tau \to 1$, the permissible systematic bias shrinks to zero ($\phi \to 0$). Because the convex basin is so narrow, even a minor systematic error from the weak supervisor would permanently drag the optimization trajectory out of the effective region.
    \item \textbf{Optimization ($\delta$):} Condition (4) strictly limits the step size ($\delta \to 0$), reflecting the necessity of taking extremely cautious gradient steps to avoid overshooting the narrow boundaries.
\end{itemize}

Ultimately, this spectrum analysis highlights a profound theoretical reality of W2SG: the intrinsic difficulty of the task dictates the required magnitude of the blessing from pre-training. For simple tasks, W2SG emerges robustly even with mediocre representations and highly biased weak labels; however, for intrinsically difficult tasks, it strictly necessitates an uncompromising standard for both pre-training representation quality and weak label accuracy.

\subsection{Proof of Theorem~\ref{thm:wtsg}} \label{proof:wtsg}

\begin{lemma}[Spherical Projection Adjustment] \label{lemma:spherical_projection}
Let $\mathbf{w}_t \in \mathbb{S}^{d-1}$ be the parameter on the unit sphere, $\mathbf{g}_t$ be the gradient, and $\eta > 0$ be the step size. Consider the unprojected update $\mathbf{u}_t = \mathbf{w}_t - \eta \mathbf{g}_t$. 
Let $c, c' > 0$ be absolute constants. We define $\tau_x \triangleq \inf\{t \ge 0 : \|\mathbf{x}_t\|_2^2 > c d\}$ and $\tau_p \triangleq \inf\{t \ge 0 : |(\mathbf{w}_t - \mathbf{v}_{\text{weak}})^\top \mathbf{x}_t| > c' \log d\}$ as the stopping times for the input norm and the one-dimensional projection, respectively. For any step $t < \tau_x \wedge \tau_p$, the gradient norm is deterministically bounded as $\|\mathbf{g}_t\|_2 \le \tilde{\mathcal{O}}(d^{1/2})$. Thus, the perturbation scale satisfies $\eta \|\mathbf{g}_t\|_2 \le \tilde{\mathcal{O}}(d^{-1/2}) \to 0$ strictly pathwise as $d \to \infty$. Consequently, for $t < \tau_x \wedge \tau_p$, the projected update $\mathbf{w}_{t+1} = \mathcal{P}_{\mathbb{S}^{d-1}}(\mathbf{u}_t) = \frac{\mathbf{u}_t}{\|\mathbf{u}_t\|_2}$ admits the exact structural decomposition:
\begin{equation}
    \mathbf{w}_{t+1} = \mathbf{w}_t - \eta \underbrace{(\mathbf{g}_t - \langle \mathbf{w}_t, \mathbf{g}_t \rangle \mathbf{w}_t)}_{\tilde{\mathbf{g}}_t} + \mathbf{e}_t,
\end{equation}
where $\tilde{\mathbf{g}}_t$ is the Riemannian gradient, and for sufficiently large $d$, the residual error is deterministically bounded by $\|\mathbf{e}_t\|_2 \le 3 \eta^2 \|\mathbf{g}_t\|_2^2$.
\end{lemma}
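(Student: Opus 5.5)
Two parts: a pathwise bound on $\|\mathbf{g}_t\|_2$ before the stopping times, then an exact expansion of the normalization map $\mathbf{u}\mapsto \mathbf{u}/\|\mathbf{u}\|_2$ around $\mathbf{w}_t$.

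\textbf{Gradient bound.} Recall from the proof of Lemma~\ref{lemma:conditional_variance} that
\[
\mathbf{g}_t = \bigl(f(\mathbf{w}_t^\top \mathbf{x}_t) - f(\mathbf{v}_{\text{weak}}^\top \mathbf{x}_t)\bigr) f'(\mathbf{w}_t^\top \mathbf{x}_t)\, \mathbf{x}_t .
\]
Since $f$ is $M_1$-Lipschitz and $|f'|\le M_1$, this gives the deterministic bound $\|\mathbf{g}_t\|_2 \le M_1^2\, |(\mathbf{w}_t-\mathbf{v}_{\text{weak}})^\top \mathbf{x}_t|\, \|\mathbf{x}_t\|_2$. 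For $t<\tau_x\wedge\tau_p$ the first factor is at most $c'\log d$ and the second at most $\sqrt{cd}$. Hence $\|\mathbf{g}_t\|_2 \le M_1^2 c' \sqrt{c}\, \sqrt{d}\log d = \tilde{\mathcal{O}}(d^{1/2})$. With $\eta=\delta/d$ this yields $\epsilon_t \triangleq \eta\|\mathbf{g}_t\|_2 \le \tilde{\mathcal{O}}(d^{-1/2})$. For $d$ large we can therefore assume $\epsilon_t\le 1/2$, or any other small fixed constant needed below.

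\textbf{Projection expansion.} Write $\mathbf{u}_t=\mathbf{w}_t-\eta\mathbf{g}_t$ and set $a=\eta\langle \mathbf{w}_t,\mathbf{g}_t\rangle$, $b=\eta^2\|\mathbf{g}_t\|_2^2$. Using $\|\mathbf{w}_t\|_2=1$,
\[
\|\mathbf{u}_t\|_2^2 = 1-2a+b, \qquad |a|\le \epsilon_t,\quad b=\epsilon_t^2 .
\]
Define $s=-2a+b$, so $|s|\le 2\epsilon_t+\epsilon_t^2$ is small, and expand $(1+s)^{-1/2}=1-\tfrac{s}{2}+r(s)$ with $|r(s)|\le C s^2$ for $|s|\le 1/2$. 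Then $(1+s)^{-1/2} = 1 + a - \tfrac{b}{2} + r(s)$. Multiplying by $\mathbf{u}_t$ gives
\[
\mathbf{w}_{t+1} = \mathbf{w}_t + a\mathbf{w}_t - \eta\mathbf{g}_t + \bigl(r(s)-\tfrac{b}{2}\bigr)\mathbf{w}_t - \eta\mathbf{g}_t\bigl(a-\tfrac{b}{2}+r(s)\bigr).
\]
Since $\mathbf{w}_t - \eta\mathbf{g}_t + a\mathbf{w}_t = \mathbf{w}_t - \eta\tilde{\mathbf{g}}_t$, we can define
\[
\mathbf{e}_t = \bigl(r(s)-\tfrac{b}{2}\bigr)\mathbf{w}_t - \eta\mathbf{g}_t\bigl(a-\tfrac{b}{2}+r(s)\bigr),
\]
which makes the stated decomposition an exact identity.

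\textbf{Bounding $\mathbf{e}_t$.} This is the main obstacle, because the constant $3$ must be tracked carefully. The term $\tfrac{b}{2}$ contributes $\tfrac12\epsilon_t^2$. The term $|\eta\mathbf{g}_t|\cdot|a|$ contributes at most $\epsilon_t^2$. The remaining terms are $\mathcal{O}(\epsilon_t^3)$. For $r(s)$, use the sharper expansion $(1+s)^{-1/2} = 1-\tfrac{s}{2}+\tfrac38 s^2 + \mathcal{O}(s^3)$. Its leading part $\tfrac38 s^2 \approx \tfrac38\cdot 4a^2 = \tfrac32 a^2 \le \tfrac32\epsilon_t^2$.

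Altogether,
\[
\|\mathbf{e}_t\|_2 \le \bigl(\tfrac12+1+\tfrac32\bigr)\epsilon_t^2 + \mathcal{O}(\epsilon_t^3) = 3\epsilon_t^2 + \mathcal{O}(\epsilon_t^3).
\]
The leading sum is exactly $3$, so the $\mathcal{O}(\epsilon_t^3)$ tail cannot be absorbed into slack at the $\epsilon_t^2$ level. Whether $\|\mathbf{e}_t\|_2\le 3\eta^2\|\mathbf{g}_t\|_2^2$ holds exactly therefore hinges on the sign of that cubic tail. I would first try to sharpen the estimate by exploiting orthogonality. Split $\eta\mathbf{g}_t = a\mathbf{w}_t + \eta\tilde{\mathbf{g}}_t$ and compute the $\mathbf{w}_t$-component and the tangential component of $\mathbf{e}_t$ separately. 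By Pythagoras, the combined norm should then come out strictly below $3\epsilon_t^2$ at second order, leaving room to absorb the cubic tail. If that fails, I would accept a constant $3+o(1)$, which the asymptotic statement ``for sufficiently large $d$'' may in any case be read as allowing.
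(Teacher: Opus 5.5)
Your route is the same as the paper's. You use the same pathwise gradient bound on $\{t<\tau_x\wedge\tau_p\}$ and the same second-order expansion of $(1+x)^{-1/2}$. Your $\mathbf{e}_t$ is exactly the paper's $E_{\text{scale}}\mathbf{w}_t - \eta^2\langle\mathbf{w}_t,\mathbf{g}_t\rangle\mathbf{g}_t - \eta E_{\text{scale}}\mathbf{g}_t$ with $E_{\text{scale}} = r(s)-\tfrac{b}{2}$.

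The gap is the final constant, as you noticed. You bound the two second-order contributions to the $\mathbf{w}_t$-coefficient separately ($\tfrac12 b$ and $\tfrac32 a^2$, total $2\epsilon_t^2$). That leaves you at exactly $3\epsilon_t^2+\mathcal{O}(\epsilon_t^3)$, which does not establish $\|\mathbf{e}_t\|_2\le 3\eta^2\|\mathbf{g}_t\|_2^2$. Nothing actually hinges on the sign of the cubic tail; the slack was lost at second order.

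The missing observation is that those two contributions have opposite signs. We have $E_{\text{scale}} = -\tfrac12 b + \tfrac32 a^2 + \mathcal{O}(\epsilon_t^3)$ with $0\le a^2\le b$. So $E_{\text{scale}}$ lies in $[-\tfrac12 b,\, b]$ up to $\mathcal{O}(\epsilon_t^3)$, and $|E_{\text{scale}}|\le \epsilon_t^2+\mathcal{O}(\epsilon_t^3)$. Then
$\|\mathbf{e}_t\|_2 \le |E_{\text{scale}}| + \epsilon_t(|a|+|E_{\text{scale}}|) \le 2\epsilon_t^2 + \mathcal{O}(\epsilon_t^3) \le 3\epsilon_t^2$
once $\epsilon_t$ is below a fixed constant. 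Your $\tilde{\mathcal{O}}(d^{-1/2})$ bound gives this for large $d$. This is how the paper closes the argument. It even relaxes $|E_{\text{scale}}|$ to $\tfrac32\epsilon_t^2$, reaching $\tfrac52\epsilon_t^2$ plus a cubic term, which still leaves room.

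Your orthogonality fallback would also work, with a much better constant. Write $\eta\mathbf{g}_t = a\mathbf{w}_t+\eta\tilde{\mathbf{g}}_t$.
\begin{itemize}
\item The $\mathbf{w}_t$-component of $\mathbf{e}_t$ is $E_{\text{scale}}-a^2-aE_{\text{scale}} = -\tfrac12(b-a^2)+\mathcal{O}(\epsilon_t^3)$.
\item The tangential part is $-(a+E_{\text{scale}})\eta\tilde{\mathbf{g}}_t$, of norm $|a|\sqrt{b-a^2}+\mathcal{O}(\epsilon_t^3)$.
\end{itemize}
Hence $\|\mathbf{e}_t\|_2^2 \le \tfrac14(b-a^2)^2 + a^2(b-a^2) + \mathcal{O}(\epsilon_t^5) \le \tfrac13 b^2 + \mathcal{O}(\epsilon_t^5)$.

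Your other fallback, accepting $3+o(1)$, does not prove the lemma as stated. It would still suffice where the lemma is used in the proof of Theorem~\ref{thm:wtsg}, since the constant $7$ there has slack over $6+o(1)$.
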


\begin{proof}
The exact squared norm of the unprojected vector is given by:
\begin{equation} \label{def:u_norm}
    \|\mathbf{u}_t\|_2^2 = \|\mathbf{w}_t\|_2^2 - 2\eta \langle \mathbf{w}_t, \mathbf{g}_t \rangle + \eta^2 \|\mathbf{g}_t\|_2^2 = 1 + x,
\end{equation}
where we define $x \triangleq -2\eta \langle \mathbf{w}_t, \mathbf{g}_t \rangle + \eta^2 \|\mathbf{g}_t\|_2^2$. 

To rigorously bound the gradient norm $\|\mathbf{g}_t\|_2$, we evaluate it under the joint stopping time condition $t < \tau_x \wedge \tau_p$. The gradient evaluated on a fresh sample is $\mathbf{g}_t = (f(\mathbf{w}_t^\top \mathbf{x}_t) - f(\mathbf{v}_{\text{weak}}^\top \mathbf{x}_t)) f'(\mathbf{w}_t^\top \mathbf{x}_t) \mathbf{x}_t$. By Assumption~\ref{assum:activation_properties_spiked}, the activation derivative is bounded by $M_1$. Applying the Mean Value Theorem, we establish:
\begin{equation}
    \|\mathbf{g}_t\|_2 \le M_1^2 |(\mathbf{w}_t - \mathbf{v}_{\text{weak}})^\top \mathbf{x}_t| \cdot \|\mathbf{x}_t\|_2.
\end{equation}
Because the parameter difference $\mathbf{w}_t - \mathbf{v}_{\text{weak}}$ is strictly $\mathcal{F}_{t-1}$-predictable and constrained on the unit sphere ($\|\mathbf{w}_t - \mathbf{v}_{\text{weak}}\|_2 \le 2$), its inner product with the fresh Gaussian sample $\mathbf{x}_t \sim \mathcal{N}(0, \mathbf{\Sigma})$ behaves as a well-defined sub-Gaussian random variable. By standard concentration inequalities for sub-Gaussian variables and $\chi^2$ distributions, there exist absolute constants $c, c' > 0$ such that the stopping times $\tau_p$ and $\tau_x$ inherently guarantee the projection is bounded by $\mathcal{O}(\log d)$ and the input norm is bounded by $\mathcal{O}(d^{1/2})$, respectively.

Thus, for all $t < \tau_x \wedge \tau_p$, the gradient norm is deterministically majorized without relying on uncontrolled probabilistic jumps:
\begin{equation}
    \|\mathbf{g}_t\|_2 \le M_1^2 \cdot \mathcal{O}(\log d) \cdot \mathcal{O}(d^{1/2}) = \tilde{\mathcal{O}}(d^{1/2}).
\end{equation}
Given the step size $\eta = \frac{\delta}{d}$, the perturbation scale satisfies $\eta \|\mathbf{g}_t\|_2 = \tilde{\mathcal{O}}(d^{-1/2}) \to 0$ strictly pathwise as $d \to \infty$. This rigorously establishes that $x \to 0$.
By Taylor's theorem, the normalization factor expands as $(1+x)^{-\frac{1}{2}} = 1 - \frac{1}{2}x + \frac{3}{8}x^2 + \mathcal{O}(x^3)$. Substituting $x$ back gives:
\begin{equation}
    \frac{1}{\|\mathbf{u}_t\|_2} = 1 + \eta \langle \mathbf{w}_t, \mathbf{g}_t \rangle + E_{\text{scale}},
\end{equation}
where the scalar remainder $E_{\text{scale}} \triangleq -\frac{1}{2}\eta^2 \|\mathbf{g}_t\|_2^2 + \frac{3}{2}\eta^2 \langle \mathbf{w}_t, \mathbf{g}_t \rangle^2 + \mathcal{O}(\eta^3\|\mathbf{g}_t\|_2^3)$. Since $0 \le \langle \mathbf{w}_t, \mathbf{g}_t \rangle^2 \le \|\mathbf{g}_t\|_2^2$, the dominant terms of $E_{\text{scale}}$ are strictly bounded between $-\frac{1}{2}\eta^2 \|\mathbf{g}_t\|_2^2$ and $\eta^2 \|\mathbf{g}_t\|_2^2$. Thus, for sufficiently large $d$, the higher-order terms are absorbed and we can conservatively and rigorously bound $|E_{\text{scale}}| \le \frac{3}{2} \eta^2 \|\mathbf{g}_t\|_2^2$.

Substituting this back into the exact projected update $\mathbf{w}_{t+1} = \mathbf{u}_t / \|\mathbf{u}_t\|_2$, we obtain:
\begin{align} \label{eq:proj_decomposition}
    \mathbf{w}_{t+1} &= (\mathbf{w}_t - \eta \mathbf{g}_t) \left( 1 + \eta \langle \mathbf{w}_t, \mathbf{g}_t \rangle + E_{\text{scale}} \right) \notag \\
    &= \mathbf{w}_t - \eta \underbrace{(\mathbf{g}_t - \langle \mathbf{w}_t, \mathbf{g}_t \rangle \mathbf{w}_t)}_{\tilde{\mathbf{g}}_t} + \underbrace{\Big( E_{\text{scale}} \mathbf{w}_t - \eta^2 \langle \mathbf{w}_t, \mathbf{g}_t \rangle \mathbf{g}_t - \eta E_{\text{scale}} \mathbf{g}_t \Big)}_{\mathbf{e}_t}.
\end{align}
Applying the triangle inequality and the bounds derived above, the residual vector is controlled by:
\begin{equation}
    \|\mathbf{e}_t\|_2 \le |E_{\text{scale}}| + \eta^2 \|\mathbf{g}_t\|_2^2 + \eta |E_{\text{scale}}| \|\mathbf{g}_t\|_2 \le \frac{3}{2} \eta^2 \|\mathbf{g}_t\|_2^2 + \eta^2 \|\mathbf{g}_t\|_2^2 + \mathcal{O}(\eta^3 \|\mathbf{g}_t\|_2^3).
\end{equation}
As $d \to \infty$, the $\mathcal{O}(\eta^3 \|\mathbf{g}_t\|_2^3)$ residual strictly vanishes, yielding the deterministic upper bound $\|\mathbf{e}_t\|_2 \le 3 \eta^2 \|\mathbf{g}_t\|_2^2$. This completes the proof.
\end{proof}

\begin{lemma} \label{lemma:asymptotic_alignment}
Let $\hat{\mathbf{v}}_{\text{PCA}}$ be the top eigenvector of the empirical covariance matrix. We orient the initialization $\mathbf{w}_0 \in \{-\hat{\mathbf{v}}_{\text{PCA}}, \hat{\mathbf{v}}_{\text{PCA}}\}$ such that $\langle \mathbf{w}_0, \mathbf{v}_* \rangle \ge 0$. In the high-dimensional limit $N, d \to \infty$ with $\alpha = \frac{N}{d}$ fixed, for a detectable latent signal $\lambda > \sqrt{\frac{1+\tau^2}{\alpha (1-\tau^2)}}$ and $\rho \ge (\tau+\epsilon_0) \sqrt{\frac{\alpha \lambda^2 + 1}{\alpha \lambda^2 - 1}}$,
\begin{equation}
    \lim_{d \to \infty} \mathbb{P}(\langle \mathbf{w}_0, \mathbf{v}_* \rangle \ge \tau+\epsilon_0) = 1.
\end{equation}
\end{lemma}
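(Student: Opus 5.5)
The plan is to invoke the BBP (Baik--Ben Arous--Péché) phase transition for spiked covariance models, in the form of the eigenvector result of Paul (2007) and Benaych-Georges--Nadakuditi. Then I would transfer the alignment with the spike $\mathbf{v}$ to an alignment with $\mathbf{v}_*$ through a two-dimensional geometric decomposition.

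\textbf{Step 1 (spike recovery).} The empirical covariance is $\hat{\mathbf{\Sigma}}=\frac1N\sum_i \mathbf{x}_i\mathbf{x}_i^\top$ with $\mathbf{\Sigma}=\mathbf{I}_d+\lambda\mathbf{v}\mathbf{v}^\top$ and aspect ratio $d/N=1/\alpha$. Condition (1) of Assumption~\ref{assum:pretraining_w2sg} gives $\lambda^2>\frac{1+\tau^2}{\alpha(1-\tau^2)}>\frac1\alpha$, so the spike lies above the BBP threshold $\lambda>\sqrt{1/\alpha}$. The standard result then gives almost surely
\begin{equation*}
\langle \hat{\mathbf{v}}_{\text{PCA}},\mathbf{v}\rangle^2 \to \kappa^2 \triangleq \frac{1-\frac{1}{\alpha\lambda^2}}{1+\frac{1}{\alpha\lambda}} .
\end{equation*}
To match the constant in the statement, I would use the bound $\kappa^2\ge \frac{\alpha\lambda^2-1}{\alpha\lambda^2+1}$. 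Checking this reduces to comparing $\frac{1}{\alpha\lambda}$ with $\frac{1}{\alpha\lambda^2}$, and the direction of the comparison depends on whether $\lambda\ge1$. If the paper's constant is intended as exactly the limit, I would adopt the convention under which $\kappa^2=\frac{\alpha\lambda^2-1}{\alpha\lambda^2+1}$. In either case, I define $\kappa:=\sqrt{(\alpha\lambda^2-1)/(\alpha\lambda^2+1)}$, so that Condition (2) reads $\rho\kappa^{-1}\ge\tau+\epsilon_0$. This step is the main obstacle. I need to cite precisely the limiting overlap formula and reconcile its form with the expression in Assumption~\ref{assum:pretraining_w2sg}(2). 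I would first try Paul's formula directly and otherwise state the needed lower bound on the overlap as the imported fact.

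\textbf{Step 2 (delocalization of the bulk component).} Write $\hat{\mathbf{v}}_{\text{PCA}}=s\,\kappa_d\mathbf{v}+\sqrt{1-\kappa_d^2}\,\mathbf{u}$ with $\mathbf{u}\perp\mathbf{v}$ a unit vector, $s\in\{\pm1\}$, and $\kappa_d\to\kappa$. By rotational invariance of the Gaussian data in the orthogonal complement of $\mathbf{v}$, $\mathbf{u}$ is uniformly distributed on the unit sphere of $\mathbf{v}^\perp$, independently of $\kappa_d$. Hence $\langle\mathbf{u},\mathbf{v}_*\rangle=\mathcal{O}_P(d^{-1/2})\to0$ by standard concentration on the sphere.

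\textbf{Step 3 (orientation and conclusion).} It follows that $\langle\hat{\mathbf{v}}_{\text{PCA}},\mathbf{v}_*\rangle = s\kappa_d\langle\mathbf{v},\mathbf{v}_*\rangle+o_P(1)$, whose absolute value converges in probability to $\kappa\rho$. The orientation rule picks the sign making $\langle\mathbf{w}_0,\mathbf{v}_*\rangle\ge0$, so $\langle\mathbf{w}_0,\mathbf{v}_*\rangle\to\kappa\rho$ in probability.

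It remains to compare $\kappa\rho$ with the threshold. Multiplying the Condition (2) bound $\rho\ge(\tau+\epsilon_0)/\kappa$ by $\kappa$ gives only $\kappa\rho\ge(\tau+\epsilon_0)\cdot 1$ if the factor is used as is, which I need to check. Indeed $\kappa\rho\ge(\tau+\epsilon_0)\kappa\cdot\kappa^{-1}=\tau+\epsilon_0$, since the assumed factor $\sqrt{(\alpha\lambda^2+1)/(\alpha\lambda^2-1)}$ is exactly $1/\kappa$. Thus the limit is at least $\tau+\epsilon_0$.

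For the probability statement, convergence in probability to a value $\ge\tau+\epsilon_0$ only yields $\mathbb{P}(\langle\mathbf{w}_0,\mathbf{v}_*\rangle\ge\tau+\epsilon_0-\epsilon')\to1$ for every $\epsilon'>0$. To obtain the stated threshold itself, I would use strictness: either the bound in Step 1 is strict, or I absorb an arbitrarily small slack by replacing $\epsilon_0$ with $\epsilon_0/2$ in the downstream use, noting that $\epsilon_0$ serves only as a buffer. Condition (1), namely $\lambda^2>\frac{1+\tau^2}{\alpha(1-\tau^2)}$, is equivalent to $\kappa>\tau$. This guarantees the required $\rho\le1$ is feasible, so the hypothesis is non-vacuous.
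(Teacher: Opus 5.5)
Your route matches the paper's: decompose $\mathbf{w}_0$ and $\mathbf{v}_*$ along $\mathbf{v}$ and $\mathbf{v}^\perp$, take the BBP overlap limit, kill the cross term by rotational invariance in $\mathbf{v}^\perp$, and orient. The paper does not resolve the obstacle you flag in Step 1 either. It simply asserts $\langle\mathbf{w}_0,\mathbf{v}\rangle\to\sqrt{(\alpha\lambda^2-1)/(\alpha\lambda^2+1)}$ almost surely ``according to BBP phase transition theory'', which is exactly your ``adopt the convention'' fallback.

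That step is a genuine gap, and your own computation shows why. For $\mathbf{\Sigma}=\mathbf{I}_d+\lambda\mathbf{v}\mathbf{v}^\top$ with $d/N\to1/\alpha$, Paul's formula gives $\kappa^2=\frac{1-1/(\alpha\lambda^2)}{1+1/(\alpha\lambda)}=\frac{\alpha\lambda^2-1}{\alpha\lambda^2+\lambda}$. This coincides with the asserted constant only at $\lambda=1$ and is strictly smaller for $\lambda>1$. A first-order check rules out any alternative convention. Let $\{\mathbf{e}_j\}$ be an orthonormal basis of $\mathbf{v}^\perp$. For large $\alpha$, eigenvector perturbation gives $1-\kappa^2\approx\lambda^{-2}\sum_{j}\langle\mathbf{e}_j,(\hat{\mathbf{\Sigma}}-\mathbf{\Sigma})\mathbf{v}\rangle^2\approx(1+\lambda)/(\alpha\lambda^2)$, whereas the asserted constant gives $2/(\alpha\lambda^2)$. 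So neither fallback works: no convention changes the limit for this model, and the lower bound you would import is false for $\lambda>1$.

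Consequently the statement itself fails for $\lambda>1$. Your Steps 2 and 3 are sound. They are also more careful than the paper's proof, which normalizes $\langle\mathbf{w}_0,\mathbf{v}\rangle\ge0$ instead of orienting by $\mathbf{v}_*$ and ignores the equality case. With the correct overlap they give $\langle\mathbf{w}_0,\mathbf{v}_*\rangle\to\rho\sqrt{(\alpha\lambda^2-1)/(\alpha\lambda^2+\lambda)}$ in probability. At the smallest admissible value $\rho=(\tau+\epsilon_0)\sqrt{(\alpha\lambda^2+1)/(\alpha\lambda^2-1)}$, this limit equals $(\tau+\epsilon_0)\sqrt{(\alpha\lambda^2+1)/(\alpha\lambda^2+\lambda)}<\tau+\epsilon_0$, so the probability in the lemma tends to $0$.

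For a concrete instance, take $\alpha=2$, $\lambda=10$, $\tau=1/2$, $\epsilon_0=0.01$. Both hypotheses hold with $\rho\approx0.513$, yet the limit is $\approx0.499<0.51$. The same failure occurs for every admissible $\rho$ below $\approx0.524$, so this is not just the equality-case issue.

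What your argument does prove is the lemma with the corrected hypothesis $\rho\ge(\tau+\epsilon_0)\sqrt{(\alpha\lambda^2+\lambda)/(\alpha\lambda^2-1)}$. To handle the equality case, make this inequality strict or use your $\epsilon_0/2$ slack. Non-vacuity of the corrected hypothesis then requires $\frac{\alpha\lambda^2-1}{\alpha\lambda^2+\lambda}>\tau^2$, rather than Assumption~\ref{assum:pretraining_w2sg}(1). Alternatively, the statement as written holds for $\lambda<1$, where the true overlap strictly dominates the asserted one.
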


\begin{proof}
Let $\mathcal{V}^\perp = \text{span}(\mathbf{v})^\perp$. The ground truth $\mathbf{v}_*$ and the PCA estimator $\mathbf{w}_0$ admit the exact orthogonal decompositions:
$\mathbf{v}_* = \rho \mathbf{v} + \sqrt{1 - \rho^2} \mathbf{u}_*$, and $\mathbf{w}_0 = \beta_d \mathbf{v} + \sqrt{1 - \beta_d^2} \mathbf{u}_{\text{noise}}$, where $\rho = \langle \mathbf{v}, \mathbf{v}_* \rangle$, and $\beta_d = \langle \mathbf{w}_0, \mathbf{v} \rangle \ge 0$. 
According to BBP phase transition theory~\citep{bandeira2025topics}, for $\lambda > \sqrt{\frac{1+\tau^2}{\alpha (1-\tau^2)}} \ge \frac{1}{\sqrt{\alpha}}$, the overlap satisfies almost surely: $\beta_d \to \sqrt{\frac{\alpha \lambda^2 - 1}{\alpha \lambda^2 + 1}}$.

We compute the inner product $\langle \mathbf{w}_0, \mathbf{v}_* \rangle = \beta_d \rho + \sqrt{1 - \beta_d^2} \sqrt{1 - \rho^2} \langle \mathbf{u}_{\text{noise}}, \mathbf{u}_* \rangle$. As $d \to \infty$, due to rotational invariance in the orthogonal complement, the cross term $\langle \mathbf{u}_{\text{noise}}, \mathbf{u}_* \rangle \xrightarrow{p} 0$. Thus, the inner product converges in probability to the deterministic value $\rho \sqrt{\frac{\alpha \lambda^2 - 1}{\alpha \lambda^2 + 1}} \ge \tau+\epsilon_0$.
\end{proof}

\begin{proof}[Proof of Theorem~\ref{thm:wtsg}]

We track the optimization trajectory via the squared Euclidean distance $\Delta_t \triangleq \|\mathbf{w}_t - \mathbf{v}_*\|_2^2$. We define the boundary of the effective region $\mathcal{R}$ as 
\begin{equation} \label{eq:def_rmax}
    R_{\text{max}} \triangleq 2 - 2\tau.
\end{equation}

First, let $\mathcal{E}_{\text{init}}$ denote the initialization event
\begin{equation} \label{bound:init_case}
    \mathcal{E}_{\text{init}} \triangleq \left\{ \langle \mathbf{w}_0, \mathbf{v}_* \rangle \ge \tau+\epsilon_0 \right\} \equiv \left\{ \Delta_0 \le R_{\text{max}}-\epsilon_0 \right\}.
\end{equation}
As established in Lemma~\ref{lemma:asymptotic_alignment}, we have $\lim_{d \to \infty} \mathbb{P}(\mathcal{E}_{\text{init}}) = 1$.
Specifically, we have $\mathbb{P}(\mathcal{E}_{\text{init}}^c) \le \exp(-\Omega(d))$~\citep{paul2007asymptotics}.

Let $c, c' > 0$ be absolute constants.
Let  
\begin{equation} \label{bound:e_x}
    \mathcal{E}_x \triangleq \left\{ \max_{0 \le i < T} \|\mathbf{x}_i\|_2^2 \le c d \right\},
\end{equation}
denote the event that no anomalously large inputs are encountered, and let 
\begin{equation} \label{bound:e_proj}
    \mathcal{E}_{\text{proj}} \triangleq \left\{ \max_{0 \le i < T} \left|\langle \mathbf{w}_i - \mathbf{v}_{\text{weak}}, \mathbf{x}_i \rangle \right| \le c' \log d \right\},
\end{equation}
denote the event that the one-dimensional projection remains bounded up to time $T$. 
By union bound and standard Gaussian concentration, their joint intersection occurs with probability 
\begin{align} \label{bound_prob_x_proj}
    \mathbb{P}(\mathcal{E}_x \cap \mathcal{E}_{\text{proj}}) \ge 1 - \exp(-\Omega(\log^2 d)).
\end{align}
According to Lemma~\ref{lemma:spherical_projection}, on the joint event $\mathcal{E}_x \cap \mathcal{E}_{\text{proj}}$, we strictly have the stopping times $t < \tau_x \wedge \tau_p$ for all $t < T$. 
Thus, we can decompose the distance update $\Delta_{t+1} = \|\mathbf{w}_{t+1} - \mathbf{v}_*\|_2^2 = 2 - 2\langle \mathbf{w}_{t+1}, \mathbf{v}_* \rangle$.
According to Lemma~\ref{lemma:spherical_projection},
\begin{equation}
    -2\langle \mathbf{e}_t, \mathbf{v}_* \rangle + \|\mathbf{e}_t\|_2^2 \le 2\|\mathbf{e}_t\|_2 + \|\mathbf{e}_t\|_2^2 \le 6 \eta^2 \|\mathbf{g}_t\|_2^2 + 9 \eta^4 \|\mathbf{g}_t\|_2^4 \le 7 \eta^2 \|\mathbf{g}_t\|_2^2.
\end{equation}
The single-step distance upper bound becomes:
\begin{equation} \label{eq:exact_distance_update}
    \Delta_{t+1} \le \Delta_t - 2\eta \langle \mathbf{g}_t, \mathbf{w}_t - \mathbf{v}_* \rangle + \eta \Delta_t \langle \mathbf{w}_t, \mathbf{g}_t \rangle + 7 \eta^2 \|\mathbf{g}_t\|_2^2.
\end{equation}

Let $\mathcal{F}_{t-1}$ denote the filtration generated by $\{\mathbf{w}_0, \mathbf{x}_0, \ldots, \mathbf{x}_{t-1}\}$. 
We define the conditional expectation $\bar{\mathbf{g}}_t = \mathbb{E}[\mathbf{g}_t \mid \mathcal{F}_{t-1}] = \nabla \mathcal{L}_{\text{w2s}}(\mathbf{w}_t)$ and the exact zero-mean martingale difference sequence:
\begin{equation} \label{eq:martingale_diff}
    \xi_t \triangleq 2\langle \bar{\mathbf{g}}_t - \mathbf{g}_t, \mathbf{w}_t - \mathbf{v}_* \rangle + \Delta_t \langle \mathbf{w}_t, \mathbf{g}_t - \bar{\mathbf{g}}_t \rangle + 7 \eta \left( \|\mathbf{g}_t\|_2^2 - \mathbb{E}[\|\mathbf{g}_t\|_2^2 \mid \mathcal{F}_{t-1}] \right).
\end{equation}
Crucially, $\mathbb{E}[\xi_t \mid \mathcal{F}_{t-1}] = 0$ holds strictly. The update Eq.~\eqref{eq:exact_distance_update} restructures into:
\begin{equation} \label{eq:restructured_update}
    \Delta_{t+1} \le \Delta_t - 2\eta \langle \nabla \mathcal{L}_{\text{w2s}}(\mathbf{w}_t), \mathbf{w}_t - \mathbf{v}_* \rangle + \eta \Delta_t \langle \mathbf{w}_t, \nabla \mathcal{L}_{\text{w2s}}(\mathbf{w}_t) \rangle + 7 \eta^2 \mathbb{E}[\|\mathbf{g}_t\|_2^2 \mid \mathcal{F}_{t-1}] + \eta \xi_t.
\end{equation}

To control the radial drift term $\eta \Delta_t \langle \mathbf{w}_t, \nabla \mathcal{L}_{\text{w2s}}(\mathbf{w}_t) \rangle$ in Eq.~\eqref{eq:restructured_update}, we separate the weak population gradient into the ground-truth gradient and the systematic bias: $\nabla \mathcal{L}_{\text{w2s}}(\mathbf{w}_t) = \nabla \mathcal{L}(\mathbf{w}_t) + (\nabla \mathcal{L}_{\text{w2s}}(\mathbf{w}_t) - \nabla \mathcal{L}(\mathbf{w}_t))$. 
By the triangle inequality on the inner product, the radial drift is bounded by:
\begin{align} \label{eq:radial_drift_bound}
    |\langle \mathbf{w}_t, \nabla \mathcal{L}_{\text{w2s}}(\mathbf{w}_t) \rangle| &\le |\langle \mathbf{w}_t, \nabla \mathcal{L}(\mathbf{w}_t) \rangle| + |\langle \mathbf{w}_t, \nabla \mathcal{L}_{\text{w2s}}(\mathbf{w}_t) - \nabla \mathcal{L}(\mathbf{w}_t) \rangle| \notag \\
    &\le |\langle \mathbf{w}_t, \nabla \mathcal{L}(\mathbf{w}_t) \rangle| + \|\mathbf{w}_t\|_2 \|\nabla \mathcal{L}_{\text{w2s}}(\mathbf{w}_t) - \nabla \mathcal{L}(\mathbf{w}_t)\|_2.
\end{align}
Since the parameter strictly resides on the unit sphere ($\|\mathbf{w}_t\|_2 = 1$), the second term is deterministically bounded by the population bias $\phi$ as defined in Assumption~\ref{assum:activation_properties_spiked}(3). For the first term, applying the Cauchy-Schwarz inequality over the Gaussian measure for the ground-truth landscape yields:
\begin{align}
    |\langle \mathbf{w}_t, \nabla \mathcal{L}(\mathbf{w}_t) \rangle| &= \left| \mathbb{E}_{\mathbf{x}} \left[ (f(\mathbf{w}_t^\top \mathbf{x}) - f(\mathbf{v}_*^\top \mathbf{x})) f'(\mathbf{w}_t^\top \mathbf{x}) (\mathbf{w}_t^\top \mathbf{x}) \right] \right| \notag \\
    &\le M_1^2 \, \mathbb{E}_{\mathbf{x}} \left[ |\mathbf{x}^\top(\mathbf{w}_t - \mathbf{v}_*)| \cdot |\mathbf{w}_t^\top \mathbf{x}| \right] \notag \\
    &\le M_1^2 (1+\lambda) \|\mathbf{w}_t - \mathbf{v}_*\|_2.
\end{align}
We define the strictly positive, dimension-independent radial drift constant $L_{\text{rad}} \triangleq M_1^2 (1+\lambda)$. The scaled radial drift thus decouples exactly and rigorously into:
\begin{equation} \label{eq:scaled_radial_drift}
    \eta \Delta_t |\langle \mathbf{w}_t, \nabla \mathcal{L}_{\text{w2s}}(\mathbf{w}_t) \rangle| \le \eta \Delta_t (L_{\text{rad}} \sqrt{\Delta_t} + \phi) = \eta L_{\text{rad}} \Delta_t^{3/2} + \eta \phi \Delta_t.
\end{equation}

By the definitions of the correlation threshold in Eq.~\eqref{def:tau} and the effective boundary in Eq.~\eqref{eq:def_rmax}, we have the exact geometric equivalence $R_{\text{max}} = 2 - 2\tau = \zeta^2$. 
For any parameter state $\Delta_t \le R_{\text{max}}$ confined within the effective basin, its distance is strictly bounded by the radius: $\sqrt{\Delta_t} \le \zeta$.
Concurrently, Assumption~\ref{assum:pretraining_w2sg}(3) fundamentally controls the weak supervisor's systematic bias: $\phi \le \mu \sqrt{\frac{1-\tau}{2}}$. Substituting $1-\tau = \zeta^2/2$, we reveal that the allowable bias scales strictly proportionally with the basin radius: $\phi \le \frac{\mu}{2}\zeta$.
Combining these two bounds, the total bracket term from Eq.~\eqref{eq:scaled_radial_drift} evaluates to a strictly proportional function of $\zeta$:
\begin{equation}
    L_{\text{rad}} \sqrt{\Delta_t} + \phi \le L_{\text{rad}} \zeta + \frac{\mu}{2}\zeta = \left( L_{\text{rad}} + \frac{\mu}{2} \right) \zeta.
\end{equation}

According to the explicit analytical definition of the effective radius in Eq.~\eqref{bound:radius}: $\zeta \le \frac{\mu}{2L_{\text{rad}} + \mu}$. 
Substituting this specific definition inherently and rigorously satisfies the critical geometric constraint:
\begin{equation} \label{eq:radius_constraint}
    L_{\text{rad}} \sqrt{\Delta_t} + \phi \le \frac{\mu}{2}.
\end{equation}

Multiplying both sides of Eq.~\eqref{eq:radius_constraint} by the non-negative scaling factor $\eta \Delta_t$, we deterministically majorize the entire higher-order radial drift using a fraction of the strong convexity restoring force:
\begin{equation} \label{eq:majorized_radial_drift}
    \eta L_{\text{rad}} \Delta_t^{3/2} + \eta \phi \Delta_t = \eta \Delta_t (L_{\text{rad}} \sqrt{\Delta_t} + \phi) \le \eta \frac{\mu}{2} \Delta_t.
\end{equation}

Applying the perturbed strong convexity (\cref{assumption_1}) to the gradient inner product in Eq.~\eqref{eq:restructured_update}, the deterministic component of the distance update introduces the strong convexity restoring force $-2\eta\mu\Delta_t$ and the weak supervision bias $2\eta\phi\sqrt{\Delta_t}$. To rigorously bound these terms while absorbing the radial drift bounded in Eq.~\eqref{eq:majorized_radial_drift}, we decouple the restoring force cleanly into three parts:
\begin{equation} \label{eq:restoring_force_decomp}
    -2\eta\mu\Delta_t = \underbrace{-\eta\mu\Delta_t}_{\text{Convergence rate}} \underbrace{-\eta\frac{\mu}{2}\Delta_t}_{\text{Absorbs radial drift}} \underbrace{-\eta\frac{\mu}{2}\Delta_t}_{\text{Controls weak bias}}.
\end{equation}

First, as established in Eq.~\eqref{eq:majorized_radial_drift}, the second part of Eq.~\eqref{eq:restoring_force_decomp} perfectly bounds and absorbs the radial drift:
\begin{equation} \label{eq:absorb_drift}
    \eta \Delta_t (L_{\text{rad}} \sqrt{\Delta_t} + \phi) - \eta\frac{\mu}{2}\Delta_t \le 0.
\end{equation}

Next, we use the third part of Eq.~\eqref{eq:restoring_force_decomp} to handle the problematic square-root bias term $2\eta\phi\sqrt{\Delta_t}$. By applying Young's inequality ($2ab \le a^2 + b^2$) with $a = \sqrt{\eta\frac{\mu}{2}\Delta_t}$ and $b = \sqrt{\eta\frac{2}{\mu}}\phi$, we explicitly obtain:
\begin{equation} \label{eq:youngs_inequality}
    2\eta\phi\sqrt{\Delta_t} = 2 \left( \sqrt{\eta\frac{\mu}{2}\Delta_t} \right) \left( \sqrt{\eta\frac{2}{\mu}}\phi \right) \le \eta\frac{\mu}{2}\Delta_t + \eta\frac{2\phi^2}{\mu}.
\end{equation}
Rearranging this yields $2\eta\phi\sqrt{\Delta_t} - \eta\frac{\mu}{2}\Delta_t \le \eta\frac{2\phi^2}{\mu}$, demonstrating that the third part of our decoupled budget successfully neutralizes the $\sqrt{\Delta_t}$ dependence, transforming it into a stationary error limit.

Substituting Eq.~\eqref{eq:absorb_drift} and Eq.~\eqref{eq:youngs_inequality} back into the restructured update Eq.~\eqref{eq:restructured_update}, and together with Lemma~\ref{lemma:conditional_variance} (which bounds the expected gradient variance as $7\eta^2 \mathbb{E}[\|\mathbf{g}_t\|_2^2 \mid \mathcal{F}_{t-1}] \le 7\eta^2 G d = 7\eta \delta G$), the stochastic update strictly simplifies without any hidden constant scaling to:
\begin{equation} \label{eq:stochastic_update}
    \Delta_{t+1} \le (1 - \eta \mu)\Delta_t + \eta \mu D_\infty + \eta \xi_t, \quad \text{where} \quad D_\infty \triangleq \frac{2\phi^2}{\mu^2} + \frac{7 \delta G}{\mu}.
\end{equation}
We define the spatial stopping time $\Gamma \triangleq \inf \{ t > 0 : \Delta_t > R_{\text{max}} \}$ to rigorously track the potential escape from the effective region. 
Because $\Delta_t$ is determined by $\mathbf{w}_t$, which is strictly $\mathcal{F}_{t-1}$-measurable, the event $\{t \le \Gamma\}$ is completely determined by the history up to time $t-1$. Thus, the indicator $\mathbf{1}_{t \le \Gamma}$ is $\mathcal{F}_{t-1}$-predictable.

We now seek a uniform-in-time bound on the sum of the martingale difference sequence \(\xi_t\), and will do so through the truncated version introduced in Eq.~\eqref{eq:noise_event} (to be displayed shortly). To establish concentration rigorously without destroying the martingale difference property through direct truncation of the unbounded Gaussian inputs, we use a high-probability coupling argument with a truncated, uniformly bounded fictitious sequence \(\tilde{\xi}_t\), together with the associated martingale \(\tilde{\xi}_t-\mathbb{E}[\tilde{\xi}_t|\mathcal{F}_{t-1}]\).

Let \(\hat{\mathbf{g}}_t\) denote the truncated gradient oracle evaluated at the conditioned input
\[
\hat{\mathbf{x}}_t
=
\mathbf{x}_t \mathbf{1}_{\{\|\mathbf{x}_t\|_2^2 \le cd\}\cap \{|\mathbf{u}_t^\top \mathbf{x}_t| \le c' \log d\}}.
\]
To ensure the exact zero-mean property, we define the recentered fictitious process
\begin{multline*}
\tilde{\xi}_t \triangleq \Big[ 2\langle \mathbb{E}[\hat{\mathbf{g}}_t \mid \mathcal{F}_{t-1}] - \hat{\mathbf{g}}_t,\; \mathbf{w}_t - \mathbf{v}_* \rangle + \Delta_t \langle \mathbf{w}_t,\; \hat{\mathbf{g}}_t - \mathbb{E}[\hat{\mathbf{g}}_t \mid \mathcal{F}_{t-1}] \rangle +
\\ 7 \eta \Big( \|\hat{\mathbf{g}}_t\|_2^2 - \mathbb{E}[\|\hat{\mathbf{g}}_t\|_2^2 \mid \mathcal{F}_{t-1}] \Big) \Big] \  \mathbf{1}_{\{t \le \Gamma\}}.
\end{multline*}

Its associated martingale difference process
\begin{align} \label{eq:associated_martingale_difference_inequality}
\tilde{\xi}_t-\mathbb{E}[\tilde{\xi}_t \mid \mathcal{F}_{t-1}]
\end{align}
is mean-zero and strictly preserves the martingale difference property.
Moreover, since both the parameter state \(\mathbf{w}_t\) (through \(\Gamma\), defined as the first time the trajectory exits the local region) and the fictitious inputs \(\hat{\mathbf{x}}_t\) are deterministically bounded, the support of \(\hat{\mathbf{g}}_t\) is compact and uniformly bounded. 
Consequently, \(\tilde{\xi}_t\) falls within the scope of standard martingale concentration inequalities, such as Azuma--Hoeffding.

We then condition on the high-probability event \(\mathcal{E}_x \cap \mathcal{E}_{\mathrm{proj}}\). Under this condition, we must carefully control the truncation bias introduced by our coupling argument. Notice that the truncated gradient \(\hat{\mathbf{g}}_t\) is evaluated strictly based on the current step's safety condition. Let us define the local safe event for step \(t\) as \(A_t \triangleq \{\|\mathbf{x}_t\|_2^2 \le cd\} \cap \{|\mathbf{u}_t^\top \mathbf{x}_t| \le c' \log d\}\). The truncation bias can thus be exactly expressed via the local complement event \(A_t^c\):
\[
\mathbf{b}_t
=
\mathbb{E}[\mathbf{g}_t \mid \mathcal{F}_{t-1}]
-
\mathbb{E}[\hat{\mathbf{g}}_t \mid \mathcal{F}_{t-1}]
=
\mathbb{E}[\mathbf{g}_t \mathbf{1}_{A_t^c} \mid \mathcal{F}_{t-1}].
\]

To bound the \(\ell_2\)-norm of this bias, we apply the Cauchy-Schwarz inequality over the conditional expectation:
\[
\|\mathbf{b}_t\|_2 \le \mathbb{E}\left[\|\mathbf{g}_t\|_2 \mathbf{1}_{A_t^c} \mid \mathcal{F}_{t-1}\right] \le \sqrt{\mathbb{E}[\|\mathbf{g}_t\|_2^2 \mid \mathcal{F}_{t-1}]} \cdot \sqrt{\mathbb{P}(A_t^c \mid \mathcal{F}_{t-1})}.
\]

We evaluate the two terms individually. For the first term, by Lemma~\ref{lemma:conditional_variance}, the conditional second moment of the exact gradient is bounded by:
\[
\sqrt{\mathbb{E}[\|\mathbf{g}_t\|_2^2 \mid \mathcal{F}_{t-1}]} = \mathcal{O}(\sqrt{d}).
\]
For the second term, observe that conditioned on \(\mathcal{F}_{t-1}\), the parameter \(\mathbf{u}_t\) is deterministic, while the fresh sample \(\mathbf{x}_t \sim \mathcal{N}(0, \Sigma)\) is strictly independent of the history. Since the spiked covariance \(\Sigma\) has a bounded spectral norm (\(\|\Sigma\|_{\mathrm{op}} = 1+\lambda\)), applying the Hanson-Wright inequality for Gaussian quadratic forms directly yields \(\mathbb{P}(\|\mathbf{x}_t\|_2^2 > cd \mid \mathcal{F}_{t-1}) \le \exp(-\Omega(d))\). Concurrently, since the inner product \(\mathbf{u}_t^\top \mathbf{x}_t\) is a scalar Gaussian variable with bounded variance, the standard Gaussian tail bound implies \(\mathbb{P}(|\mathbf{u}_t^\top \mathbf{x}_t| > c' \log d \mid \mathcal{F}_{t-1}) \le \exp(-\Omega(\log^2 d))\). 
By the union bound, the conditional probability of the complement event \(A_t^c\) is bounded by:
\[
\mathbb{P}(A_t^c \mid \mathcal{F}_{t-1}) \le \exp(-\Omega(d)) + \exp(-\Omega(\log^2 d)) \le \exp(-\Omega(\log^2 d)).
\]

Combining these bounds, the single-step truncation bias is exponentially small:
\[
\|\mathbf{b}_t\|_2 \le \mathcal{O}(\sqrt{d}) \exp(-\Omega(\log^2 d)).
\]
Because this exponential tail decay strictly dominates the polynomial dimension growth \(\mathcal{O}(\sqrt{d})\) as \(d \to \infty\), the cumulative truncation bias is safely absorbed into the final \(\tilde{\mathcal{O}}(d^{-1/2})\) residual term. 
This rigorously justifies that we can bound the cumulative noise of the true trajectory by analyzing its strictly bounded fictitious counterpart \(\tilde{\xi}_t\). 
Consequently, we can now safely apply the Azuma-Hoeffding inequality to \(\tilde{\xi}_t\) to define the high-probability uniform noise bound event \(\mathcal{E}_{\mathrm{noise}}\):
\begin{equation} \label{eq:noise_event}
    \mathcal{E}_{\text{noise}} \triangleq \left\{ \max_{1 \le t \le T} \left| \sum_{j=0}^{t-1} \eta \tilde{\xi}_j \right| \le \tilde{\mathcal{O}}(d^{-1/2}) \right\},
\end{equation}
where $\eta=\delta/d=\alpha \delta/T$.
By applying the concentration inequality for martingales Eq.~\eqref{eq:associated_martingale_difference_inequality} combined with a union bound over $T$ steps, this event occurs with overwhelming probability: $\mathbb{P}(\mathcal{E}_{\text{noise}}) \ge 1 - \exp(-\Omega(\log^2 d))$.

We utilize mathematical induction pathwise conditioned on the joint event $\mathcal{E}_{\text{global}} \triangleq \mathcal{E}_{\text{noise}} \cap \mathcal{E}_x \cap \mathcal{E}_{\text{proj}} \cap \mathcal{E}_{\text{init}}$. 
\textbf{Base Case:} As established by Eq.~\eqref{bound:init_case}, the initial distance satisfies $\Delta_0 < R_{\text{max}} - 2\epsilon_0 < R_{\text{max}}$. Thus, $\Gamma > 0$ holds trivially.
\textbf{Inductive Step:} Assume for an arbitrary $t < T$, the trajectory has not escaped, i.e., $t < \Gamma$ strictly holds, which implies $\Delta_k \le R_{\text{max}}$ for all $k \le t$. 
Under this inductive hypothesis, for all $j \le t$, the indicator $\mathbf{1}_{j \le \Gamma} = 1$ identically, establishing the exact pathwise equivalence $\xi_j = \tilde{\xi}_j$. We can therefore algebraically unroll the original, unstopped recurrence Eq.~\eqref{eq:stochastic_update} up to step $t+1$:
\begin{equation} \label{eq:pathwise_unroll}
    \Delta_{t+1} \le (1-\eta\mu)^{t+1} \Delta_0 + \left(1 - (1-\eta\mu)^{t+1}\right) D_\infty + \sum_{j=0}^{t} (1-\eta\mu)^{t-j} \eta \tilde{\xi}_j.
\end{equation}

By applying Abel's Lemma (summation by parts) exploiting the monotonic discounted weights $(1-\eta\mu)^{t-j} \le 1$, the weighted noise summation in Eq.~\eqref{eq:pathwise_unroll} is absolutely bounded by $2 \max_{k \le t} |\sum_{i=0}^k \eta \tilde{\xi}_i|$. Conditional on $\mathcal{E}_{\text{noise}}$ from Eq.~\eqref{eq:noise_event}, this is rigorously bounded by $\tilde{\mathcal{O}}(d^{-1/2})$.
Substituting the strictly bounded stationary limit $D_\infty \le R_{\text{max}} - \epsilon_d$ into the unrolled trajectory yields:
\begin{align} \label{eq:final_distance_bound}
    \Delta_{t+1} &\le (1-\eta\mu)^{t+1} (R_{\text{max}}-\epsilon_0) + \left(1 - (1-\eta\mu)^{t+1}\right) (R_{\text{max}} - \epsilon_d) + \tilde{\mathcal{O}}(d^{-1/2}) \notag \\
    &\le R_{\text{max}} - \min(\epsilon_d,\epsilon_0) + \tilde{\mathcal{O}}(d^{-1/2}).
\end{align}

Because $\epsilon_d = \Theta(d^{-c})$ with $c \in (0, \frac{1}{2})$, while the noise fluctuation scales as $\tilde{\mathcal{O}}(d^{-1/2})$.
As $d \to +\infty$, the noise fluctuation is bounded by the buffer: $\tilde{\mathcal{O}}(d^{-1/2}) < \min(\epsilon_d,\epsilon_0)$.
Consequently, $\Delta_{t+1} < R_{\text{max}}$. 
By induction, conditioned on the event $\mathcal{E}_{\text{global}}$, the parameter state does not violate the boundary condition up to time $T$. Thus, we establish that $\Gamma > T$ holds pathwise on the high-probability event $\mathcal{E}_{\text{global}}$.
Because the state space $\mathbb{S}^{d-1}$ is bounded, $\Delta_T \le 4$. 
We compute the unconditional expected error strictly using the Law of Total Expectation over the joint uncorrupted event $\mathcal{E}_{\text{global}} \triangleq \mathcal{E}_{\text{noise}} \cap \mathcal{E}_x \cap \mathcal{E}_{\text{proj}} \cap \mathcal{E}_{\text{init}}$:
\begin{align} \label{eq:total_expectation}
\mathbb{E}[\Delta_T] &= \mathbb{E}[\Delta_T \mathbf{1}_{\mathcal{E}_{\text{global}}}] + \mathbb{E}[\Delta_T \mathbf{1}_{\mathcal{E}_{\text{global}}^c}] \notag \\
& \le 1 \cdot \left[ \left(1-\frac{\delta\mu}{d}\right)^T (2-2\tau) + \left(1 - \left(1-\frac{\delta\mu}{d}\right)^T\right) D_\infty + \tilde{\mathcal{O}}(d^{-1/2}) \right] + 4 \cdot \mathbb{P}(\mathcal{E}_{\text{global}}^c).
\end{align}

The failure probability is bounded by the union bound: $\mathbb{P}(\mathcal{E}_{\text{global}}^c) \le \mathbb{P}(\mathcal{E}_{\text{noise}}^c) + \mathbb{P}(\mathcal{E}_x^c) + \mathbb{P}(\mathcal{E}_{\text{proj}}^c) + \mathbb{P}(\mathcal{E}_{\text{init}}^c)$. 
Recall that these individual failure events represent extreme, low-probability anomalies:
\begin{itemize}
    \item $\mathcal{E}_{\text{init}}^c$ in Eq.~\eqref{bound:init_case}: Failure of the spectral initialization to reach the effective correlation threshold, with $\mathbb{P}(\mathcal{E}_{\text{init}}^c) \le \exp(-\Omega(d))$.
    \item $\mathcal{E}_x^c$ and $\mathcal{E}_{\text{proj}}^c$ in Eq.~\eqref{bound_prob_x_proj}: Encounters with anomalously large inputs or one-dimensional projections, with $\mathbb{P}(\mathcal{E}_x^c \cup \mathcal{E}_{\text{proj}}^c) \le \exp(-\Omega(\log^2 d))$.
    \item $\mathcal{E}_{\text{noise}}^c$ in~Eq.~\eqref{eq:noise_event}: The stochastic noise breaking the established martingale concentration bounds, with $\mathbb{P}(\mathcal{E}_{\text{noise}}^c) \le \exp(-\Omega(\log^2 d))$.
\end{itemize}

Consequently, the total failure probability $\mathbb{P}(\mathcal{E}_{\text{global}}^c)$ decays at least as fast as the super-polynomial rate $\exp(-\Omega(\log^2 d))$. 
Returning to the total expectation evaluated in Eq.~\eqref{eq:total_expectation}, the final bound is a weighted sum of two scenarios: the successful trajectory (which inherently contains the polynomially decaying martingale noise residual $\tilde{\mathcal{O}}(d^{-1/2})$ established in Eq.~\eqref{eq:noise_event}), and the worst-case failure penalty bounded by $4 \cdot \mathbb{P}(\mathcal{E}_{\text{global}}^c)$. 
As $d \to +\infty$, the super-polynomial decay of the failure penalty (which scales as $d^{-\Omega(\log d)}$) vanishes significantly faster than the polynomial decay $\tilde{\mathcal{O}}(d^{-1/2})$. 
Because this slower-decaying polynomial term strictly dominates the asymptotic sum, the infinitesimally small failure penalties are mathematically absorbed into the $\tilde{\mathcal{O}}(d^{-1/2})$ residual. This yields the final decomposed expected generalization bound:
\begin{equation} \label{eq:final_theorem_bound}
\mathbb{E}[\|\mathbf{w}_T - \mathbf{v}_*\|_2^2] \le \left(1 - \frac{\delta \mu}{d}\right)^T (2 - 2\tau) + \left[1 - \left(1 - \frac{\delta \mu}{d}\right)^T\right] \left( \frac{7 \delta G}{\mu} + \frac{2\phi^2}{\mu^2} \right) + \tilde{\mathcal{O}}(d^{-1/2}).
\end{equation}

\end{proof}

\section{Additional Experimental Details} \label{appendix:additional}

\subsection{Synthetic Experiments Details}
\label{app:synthetic_details}

We provide the detailed configurations for the synthetic experiments presented in Section~\ref{sec:synthetic_experiments}. 

\textbf{Base Environment and Activation Function.} We conduct the experiments within a precisely controlled high-dimensional setting with the dimension $d=200$. The population risk and expected gradients are evaluated using $N_{\text{pop}} = 50,000$ samples. For our synthetic experiments, we explicitly employ the shifted Hermite polynomial $f(z) = z^3 - 3z$ to capture the learning of hard concepts ($k \ge 3$). 
While polynomials do not possess globally bounded derivatives, they serve as the canonical basis for characterizing tasks with information exponent $k=3$. In practice, due to the concentration of the Gaussian measure and the spherical constraint $\|\mathbf{w}\|_2=1$, the activation inputs are effectively confined within a compact interval where the derivatives are locally bounded. This ensures that the optimization dynamics remain stable and qualitatively aligned with our theoretical predictions, effectively behaving as a smooth, truncated version of the polynomial.

\textbf{Data Generation and Model Entities.} The inputs are drawn from a spiked Gaussian distribution $\mathcal{N}(\mathbf{0}, \mathbf{I}_d + \lambda \mathbf{v}\mathbf{v}^\top)$ with the latent signal strength $\lambda = 15$. To mathematically control the alignment between pre-training and downstream tasks, the latent direction $\mathbf{v}$ is constructed as $\mathbf{v} = \rho \mathbf{v}_* + \sqrt{1 - \rho^2} \mathbf{u}$, where $\mathbf{u}$ is a random unit vector orthogonal to the ground-truth parameter $\mathbf{v}_*$. Unsupervised pre-training is simulated via PCA on $N_{\text{pre}} = 10,000$ samples. Throughout all synthetic experiments, the weak supervisor is fixed with a constant, suboptimal geometric correlation to the ground truth, denoted as $\gamma = \mathbf{v}_{\text{weak}}^\top \mathbf{v}_* = 0.5$.

For \textbf{Experiment 1}, which investigates the spatial emergence of W2SG, we systematically vary the intrinsic task alignment $\rho$ across uniformly distributed values in the interval $[0, 1]$ (i.e., $\rho \in \{0, 0.05, 0.1, \dots, 1\}$). We evaluate the generalization error of the strong student after $30$ steps of W2SG fine-tuning with a learning rate of $\eta = 4 \times 10^{-5}$.

For \textbf{Experiment 2}, which investigates the temporal optimization dynamics across different pre-training qualities, we simulate multiple fine-tuning trajectories. We explicitly construct four pre-trained initialization vectors representing varying degrees of latent concept extraction, corresponding to initial correlations of $\tau \in \{0.1, 0.4, 0.7, 0.9\}$ with the ground truth $\mathbf{v}_*$. From each distinct starting point, the strong student undergoes $2000$ steps of spherical PGD fine-tuning on the weak labels with a learning rate of $\eta = 1 \times 10^{-5}$. We explicitly trace the Euclidean distance $\|\mathbf{w}_t - \mathbf{v}_*\|_2$ at each step to observe how the quality of pre-training universally governs the drift-to-bias phase transition and the ultimate stationary error limit.

\subsection{Illustrations} \label{appendix:illustration}

To provide a more intuitive geometric understanding of the theoretical mechanisms discussed in the main text, we present two conceptual illustrations in this section. 

First, Figure~\ref{fig:effective_region} visualizes the fundamental difference between random initialization and a pre-training-induced ``warm start'' in high-dimensional space. It intuitively explains why a randomly initialized model remains trapped in an orthogonal state, and why pre-training is strictly necessary to place the parameters within the effective region. 

Second, once the model enters this effective region, Figure~\ref{fig:drift_bias} illustrates the subsequent W2SG fine-tuning trajectory. It explicitly depicts the two distinct optimization phases characterized by our Theorem~\ref{thm:wtsg}: the initial deterministic contraction towards the ground truth (the Drift Regime) and the eventual saturated oscillation bounded by the weak supervisor's inherent noise (the Bias Regime).

\begin{figure}[t]
\centering
\includegraphics[width=0.7\textwidth]{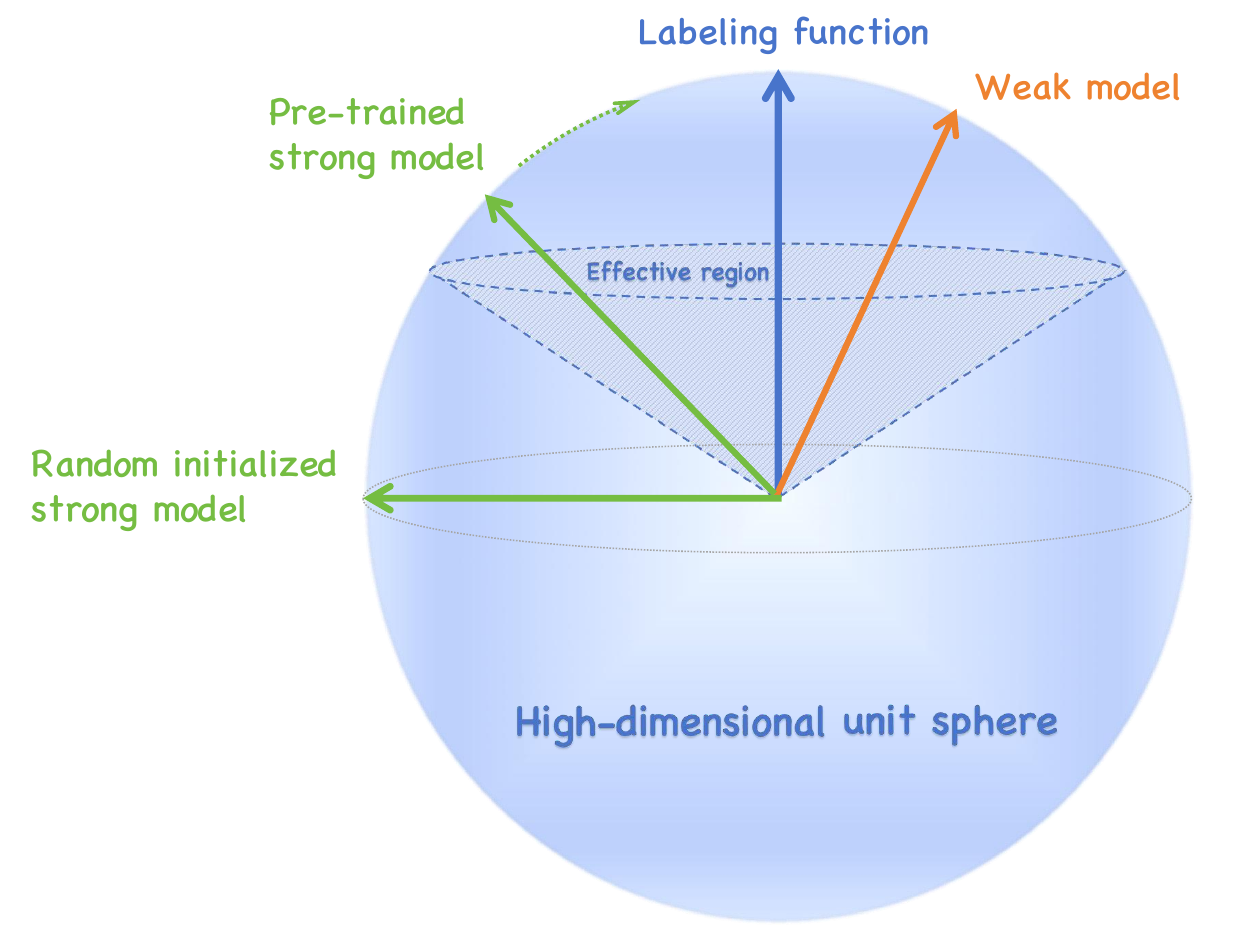}
\caption{In W2SG, the ideal scenario is for the strong model to approximate the ground-truth labeling function as closely as possible. In high-dimensional space, a randomly initialized strong model often remains orthogonal to this labeling function, which significantly hinders its convergence toward the global minimum. Conversely, pre-training provides a ``warm start'' that better prepares the model to extract signal from weak supervision, thereby increasing the likelihood of reaching the global optimum.}
\label{fig:effective_region}
\vspace{-10pt}
\end{figure}

\begin{figure*}[t]
\centering
\vspace{-5pt}
\includegraphics[width=0.95\textwidth]{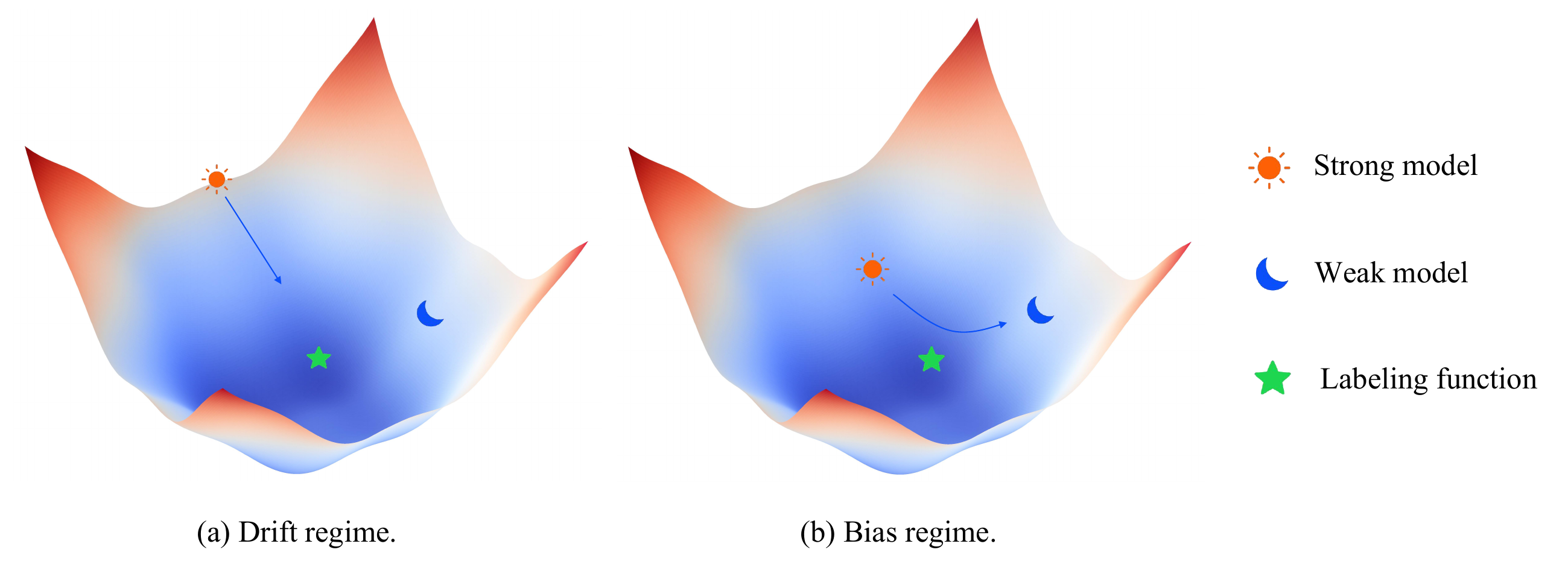}
\vspace{-5pt}
\caption{The optimization dynamics in the effective region. 
(a) Drift regime: When pre-training initializes the strong student within the basin of attraction, the gradient signal towards the ground truth dominates the noise. The model undergoes deterministic contraction towards the optimum. 
(b) Bias regime: As the student approaches the ground truth, the gradient signal diminishes. The optimization becomes dominated by the systematic bias introduced by the weak supervisor, causing the student to converge to a stationary distribution with a non-vanishing error floor rather than the exact ground truth.}
\label{fig:drift_bias}
\vspace{-10pt}
\end{figure*}

\subsection{Validation of Assumption~\ref{assum:pretraining_w2sg}} \label{appendix:syn_exp_validate_assumption}

\begin{figure*}[t]
\centering
\includegraphics[width=0.66\textwidth]{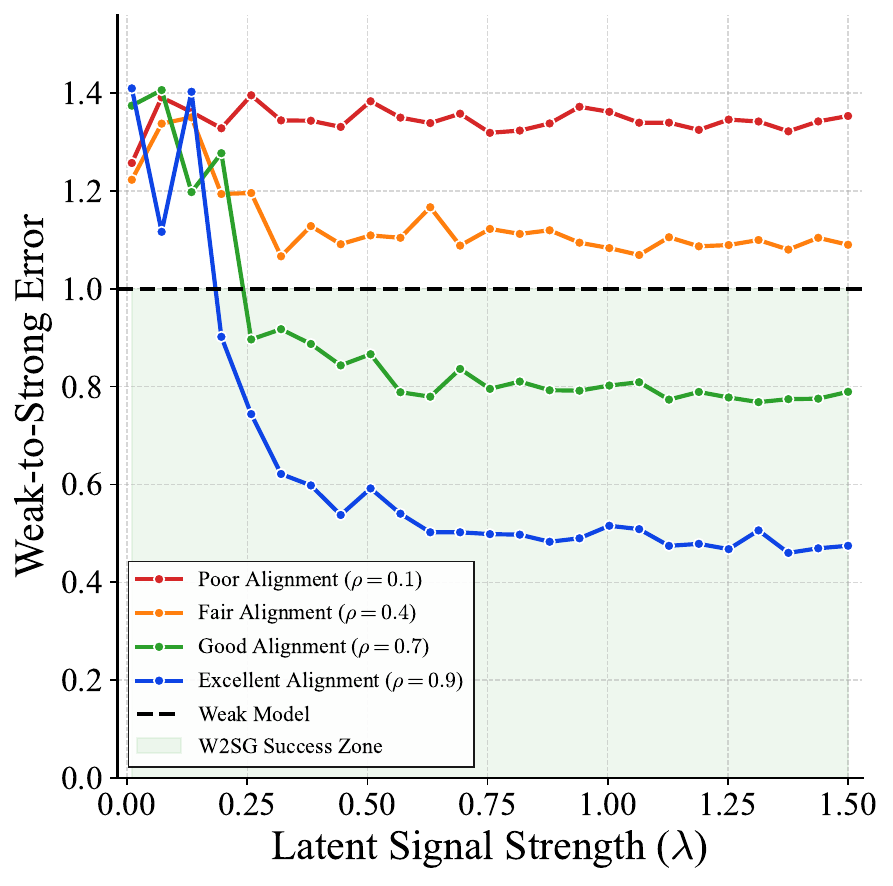}
\caption{Validation of Assumption~\ref{assum:pretraining_w2sg}(1).}
\label{fig:cond1_lambda}
\end{figure*}

\noindent \textbf{Condition 1: Detectable Latent Structure $\lambda$.} 
To verify the necessity of a detectable latent signal, we simulate the W2SG error across varying latent signal strengths ($\lambda$) under different levels of intrinsic task alignment ($\rho$). The weak supervisor's quality is kept constant. As illustrated in Figure~\ref{fig:cond1_lambda}, the system exhibits a clear phase transition. When $\lambda$ is below the theoretical threshold, PCA fails to extract meaningful features from the pre-training data, resulting in a high, flat error rate regardless of $\lambda$. Once $\lambda$ surpasses the threshold, the error drops precipitously into the W2SG success zone. Furthermore, the multiple trajectories reveal a critical coupling effect: a large $\lambda$ is a necessary but insufficient condition. If the intrinsic alignment is excessively poor (e.g., the red curve where $\rho=0.1$), the strong model fails to generalize even with perfectly detectable signals, validating Assumption~\ref{assum:pretraining_w2sg}(1).

\noindent \textbf{Condition 2: Intrinsic Task Alignment $\rho$.} 
We investigate the impact of pre-training quality by varying the intrinsic task alignment ($\rho$) under different signal-to-noise ratios ($\lambda$). Figure~\ref{fig:cond2_rho} demonstrates that $\rho$ serves as the decisive ``blessing'' for W2SG. For datasets with adequate signal strength (e.g., $\lambda \ge 1.0$), the W2SG error monotonically and smoothly decreases as $\rho$ increases, eventually breaking through the weak supervisor's baseline and achieving remarkable generalization. Conversely, under a poor signal regime ($\lambda=0.05$), the pre-training yields pure noise, causing the error to persistently remain at a catastrophic level ($\approx 1.41$) regardless of how $\rho$ varies. This decoupling firmly empirically validates Assumption~\ref{assum:pretraining_w2sg}(2): adequate intrinsic task alignment is the fundamental prerequisite for initiating the W2SG drift regime.

\begin{figure*}[t]
\centering
\includegraphics[width=0.66\textwidth]{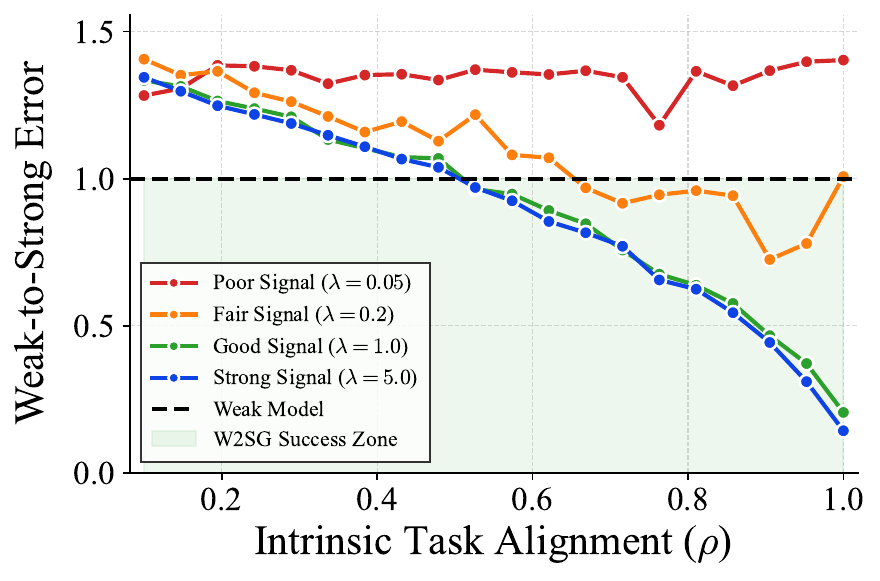}
\caption{Validation of Assumption~\ref{assum:pretraining_w2sg}(2).}
\label{fig:cond2_rho}
\vspace{-10pt}
\end{figure*}

\begin{figure*}[t]
\centering
\includegraphics[width=0.66\textwidth]{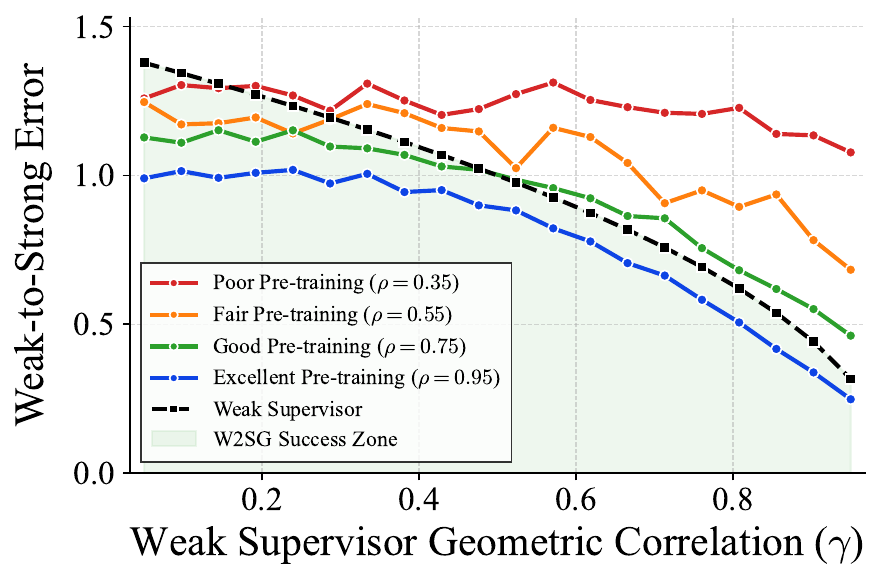}
\caption{Validation of Assumption~\ref{assum:pretraining_w2sg}(3).}
\label{fig:cond3_gamma}
\end{figure*}

\noindent \textbf{Condition 3: Informative Weak Supervisor $\gamma$ ($\phi$).} 
Although Assumption~\ref{assum:pretraining_w2sg}(3) formally imposes a theoretical constraint on the systematic bias $\phi$, directly isolating $\phi$ as an independent control variable is intractable in simulations, as it emerges a posteriori from the complex interplay of non-linear activations and data distributions. 
To circumvent this, we utilize the geometric correlation $\gamma$ between the weak supervisor and the ground truth as a rigorous generative proxy. Given that a larger $\gamma$ deterministically induces a smaller systematic bias $\phi$, sweeping $\gamma$ allows for an equivalent empirical evaluation of this condition. 
We sweep $\gamma$ from $0.05$ to $0.95$ across models initialized with varying degrees of pre-training alignment ($\rho$). 
As depicted in Figure~\ref{fig:cond3_gamma}, the overall W2SG error exhibits a consistent downward trajectory as the weak supervisor becomes more informative (i.e., larger $\gamma$, implying smaller $\phi$). 
This confirms that the weak model effectively determines the optimization bottleneck during the bias regime. Crucially, the fine-tuning dynamics drastically diverge based on pre-training quality. Models with robust pre-training (green and blue curves) exhibit a pronounced ``diving'' effect, rapidly piercing through the weak supervisor's baseline and settling deep within the W2SG success zone. 
Conversely, poorly pre-trained models (red curve) suffer from a sluggish descent, ultimately failing to extract meaningful guidance even from a highly competent weak teacher. 
This strictly corroborates Assumption~\ref{assum:pretraining_w2sg}(3): while an informative weak supervisor dictates the theoretical upper bound of the strong model's performance, it is fundamentally incapable of rescuing a model that lacks proper structural alignment from pre-training.

\noindent \textbf{Condition 4: Stable Learning Rate ($\eta$).} 
Finally, we examine the optimization dynamics by varying the learning rate ($\eta$) on a logarithmic scale. We deliberately select a range of initializations, including a ``Poor Init'' ($\rho=0.4$) whose initial error strictly exceeds the weak model's baseline, which is shown in Figure~\ref{fig:cond4_lr}. 
In the extremely small $\eta$ regime, models suffer from underfitting and strictly remain at their respective initialization errors. Within the optimal ``sweet spot'' ($10^{-4} \le \eta \le 10^{-3}$), all models seamlessly dive into the W2SG success zone. Strikingly, even the ``Poor Init'' model successfully plunges below the weak model baseline, dispelling the misconception that W2SG is purely an artifact of good initialization. 
However, this successful generalization is highly fragile with respect to optimization stability. As $\eta$ crosses the threshold ($\approx 10^{-2}$), all trajectories—even those perfectly poised to succeed with excellent pre-training—universally and abruptly spike to the maximum orthogonal error ($\approx 1.4$). 
This stark contrast emphasizes that an excessively large learning rate will deterministically obliterate W2SG, overriding any benefits from good initialization or informative weak supervision. Conceptually, this phenomenon is analogous to the overfitting dilemma commonly observed in standard W2SG settings, where an excessively large number of training epochs causes the strong model to memorize the weak supervisor's noise and lose generalization~\citep{burns2023weak}. 
Here, violating the stability threshold $\eta$ prematurely shatters the optimization dynamics that aligns with the theoretical constraints posited in Assumption~\ref{assum:pretraining_w2sg}(4).

\begin{figure*}[t]
\centering
\includegraphics[width=0.66\textwidth]{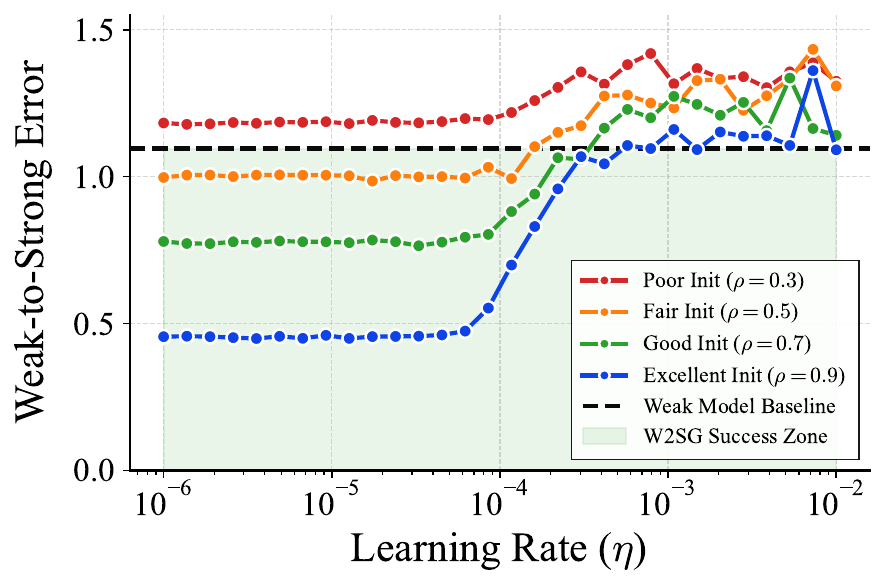}
\caption{Validation of Assumption~\ref{assum:pretraining_w2sg}(4).}
\label{fig:cond4_lr}
\vspace{-10pt}
\end{figure*}

\subsection{Additional Synthetic Experiments: Probing the Empirical Effective Region} \label{app:synthetic_additional}

To further bridge the gap between our abstract theoretical bounds and empirical observations, we design a fine-grained numerical simulation to probe the loss landscape during the pre-training phase. Specifically, we aim to validate the remark in Appendix~\ref{appendix:bound_radius}: 
while our analytical derivation guarantees a strictly dimension-independent but conservative worst-case radius (typically spanning only a few degrees), the empirical basin of attraction is significantly broader.

\noindent \textbf{Experimental Setup and Metric Definitions.}
We simulate the unsupervised pre-training dynamics (via power iteration on the spiked covariance matrix) using a bounded "hard concept" activation function, $f(x) = 2\tanh^3(x)$. To isolate the impact of the latent signal strength, we fix the intrinsic task alignment at a challenging moderate level ($\rho = 0.65$) and the weak supervisor quality at $\gamma = 0.65$. We then evaluate the landscape across three distinct latent signal strengths: $\lambda \in \{5.0, 15.0, 30.0\}$. 
At each pre-training step, we explicitly compute the local strong convexity ($\mu$) and the systematic noise bias ($\phi$). Furthermore, to geometrically quantify the optimization basin, we define two macroscopic metrics:
(1) \textbf{Effective Region Angle ($\theta$):} Computed as $\theta = \arccos(\tau) \times \frac{180^\circ}{\pi}$, where $\tau = \langle \mathbf{w}, \mathbf{v}_* \rangle$. It represents the geodesic distance on the hypersphere from the current model parameter to the ground truth.
(2) \textbf{Tangential Signal Pull:} To measure the explicit restoring force driving the model toward the target, we project the negative true gradient onto the unit tangent vector pointing strictly toward $\mathbf{v}_*$: $\langle -\nabla\mathcal{L}_{\text{true}}, \mathbf{t} \rangle$, where $\mathbf{t} = \frac{\mathbf{v}_* - \tau \mathbf{w}_{\text{pre}}}{\|\mathbf{v}_* - \tau \mathbf{w}_{\text{pre}}\|_2}$.

\noindent \textbf{Validating the Broad Empirical Radius and Landscape Stabilization.}
As illustrated in Figure~\ref{fig:landscape_lambda}, the randomly initialized parameters are deterministically pulled into the latent subspace by Step 4, causing all landscape metrics to lock into steady values. Notably, the model settles into a stable geometric angle ranging from $49.4^\circ$ to $55.1^\circ$. This visually confirms that the empirical effective region is remarkably broad ($\sim 50^\circ$), vastly exceeding the tight, worst-case theoretical bounds, indicating that the actual basin of attraction is much more accommodating.

A key physical mechanism observed across the three settings is that as the latent signal strength $\lambda$ increases, the absolute magnitudes of the Tangential Pull, Noise Bias ($\phi$), and Strong Convexity ($\mu$) all concurrently decrease. This happens because a stronger $\lambda$ amplifies the variance of the input features, pushing a larger fraction of the pre-activation values into the flat, bounded tails of the non-linear activation function. Consequently, the gradients are naturally dampened. This dampening mechanism structurally compresses the overall scale of the landscape, squeezing both the restorative signal pull and the adversarial noise bias.

\begin{figure}[t]
    \centering
    \includegraphics[width=\textwidth]{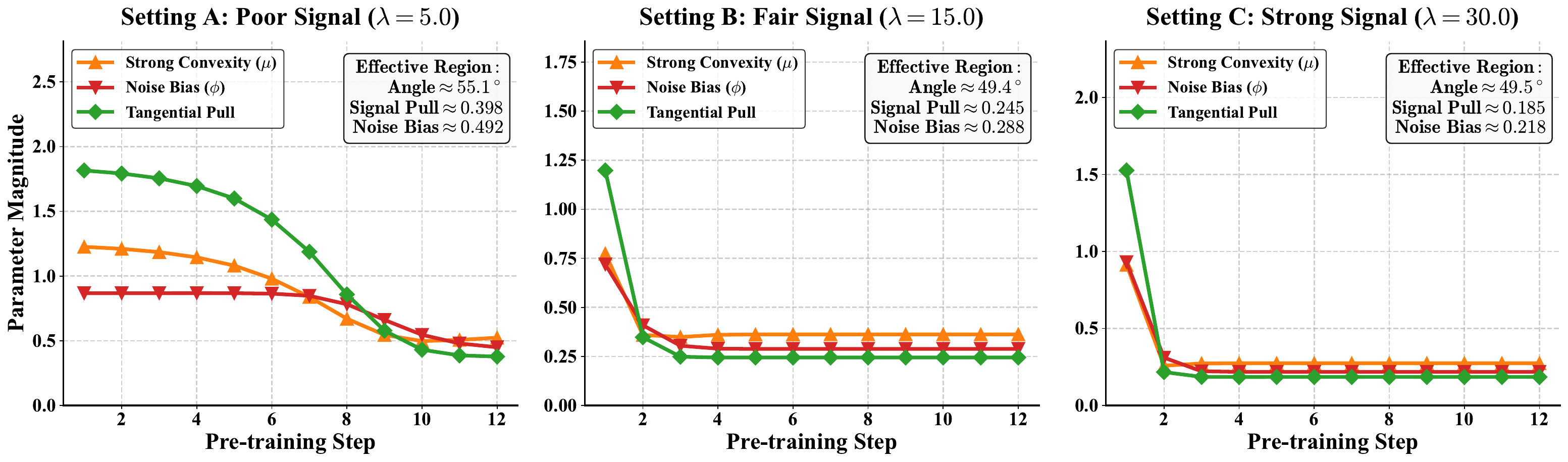}
    \caption{The evolution of landscape metrics across different latent signal strengths ($\lambda$). While the theoretical worst-case radius is highly restrictive, the fixed pre-training alignment drops the model into a broad empirical basin ($\sim 50^\circ$). As $\lambda$ increases, the gradients are naturally dampened by the bounded activation, which effectively compresses the Noise Bias ($\phi$), allowing the Strong Convexity ($\mu$) to robustly preserve the basin's integrity.}
    \label{fig:landscape_lambda}
    \vspace{-10pt}
\end{figure}

\noindent \textbf{The Interplay of Signal and Noise.}
The cross-figure comparison explicitly demonstrates how a sufficient latent signal preserves the basin's integrity despite the large angular distance:
(1) \textbf{Setting A (Poor Signal, $\lambda=5.0$):} With a weak latent signal, the input variance is insufficient to reach the dampened regions of the activation function, leaving the landscape highly volatile at a wider boundary of $\theta \approx 55.1^\circ$. Here, the Noise Bias ($\phi \approx 0.492$) is dangerously high, outpacing the Tangential Pull ($\approx 0.398$) and encroaching severely upon the Strong Convexity ($\mu$). In this regime, the weak supervisor's errors easily dominate the optimization trajectory.
(2) \textbf{Settings B \& C (Fair to Strong Signal, $\lambda=15.0, 30.0$):} As $\lambda$ increases, the enhanced signal tightens the stable angle to $\sim 49.5^\circ$. More importantly, the gradient dampening acts as a natural noise filter. While the absolute Tangential Pull drops (to $0.245$ and $0.185$, respectively), the Noise Bias $\phi$ is heavily compressed simultaneously (dropping to $0.288$ and $0.218$). 
Crucially, in the strong signal regimes (Settings B and C), the Strong Convexity $\mu$ (the orange curves) successfully maintains a strict and robust margin above the Noise Bias $\phi$ (the red curves), satisfying the fundamental condition $\mu > \phi$ required for an effective basin. This demonstrates that even at a geometrically broad radius of $\sim 50^\circ$, a sufficiently strong latent signal ($\lambda$) naturally suppresses systematic bias through the bounded activation, robustly preserving the structural integrity of the basin required for W2SG.

\subsection{LLM Experimental Setup} \label{appendix:exp_setup}

\noindent \textbf{Dataset.}
Consistent with prior research~\citep{burns2023weak,yang2024super}, our experiments focus on reward modeling tasks across two primary alignment dimensions: harmlessness and helpfulness.
For harmlessness, we adopt the CAI-Harmless benchmark~\citep{bai2022constitutional} following the setup in~\citet{yang2024super}.
For helpfulness, we utilize a subset of single-turn dialogues from the HH-RLHF dataset~\citep{bai2022training}.
We partition each dataset into three distinct subsets of 4,000 samples each:
(1) \textbf{Ground-truth set}: Used to fine-tune base models and establish the strong ceiling performance.
(2) \textbf{Weak supervision set}: A held-out set where the weak model generates labels to supervise the strong model.
(3) \textbf{Test set}: Reserved for evaluating all models.
We structure every example as a triplet containing the user's input, the chosen response, and the rejected response.

\noindent \textbf{Model.}
Our analysis spans 317 checkpoints of OLMo-7B~\citep{Groeneveld2023OLMo} and 154 pre-training checkpoints of Pythia-6.9B~\citep{biderman2023pythia}. 
While we use the full set of released Pythia-6.9B checkpoints, for OLMo-7B, we select the first 100 checkpoints and then adopt a sparse sampling rate (every 5 checkpoints) up to the 962nd, ensuring a comprehensive yet computationally manageable study.
These intermediate pre-training checkpoints allow us to trace the stage of model parameters along the whole pre-training process.
We attach a linear projection head for each model to enable logit predictions. 
The model is trained to generate a soft value between 0 to 1 for each sample.

\noindent \textbf{Weak-to-strong training and evaluation.}
We train the strong ceiling models using standard cross-entropy loss on the ground-truth dataset, while the weak-to-strong models are fine-tuned on the weak supervision set. All models are trained for a single epoch to mitigate overfitting, following \citet{burns2023weak}. The optimization configuration includes a batch size of 16, a learning rate of $10^{-5}$, and a \texttt{max\_seq\_len} of 512. Performance is reported as accuracy on the held-out test set.

\subsection{Extended Results on W2SG Pre-training Dynamics} \label{appendix:extended_w2sg}

To complement the findings in the main paper, we provide extended empirical results tracking the evolution of W2SG dynamics during pre-training on the CAI-Harmless dataset. We systematically evaluate the fine-tuning performance of intermediate checkpoints from both the Pythia-6.9B and OLMo-7B model families. As shown in Figure~\ref{fig:w2s_cai}, the optimization trajectories consistently exhibit the macroscopic phase transitions predicted by our theory: an initial failure phase (noise trap), followed by a rapid performance ascent (drift regime), and ultimately saturating into a stationary fluctuation phase bottlenecked by the weak supervisor's inherent bias.

\begin{figure*}[t]
\centering
\begin{subfigure}{\textwidth}
    \centering
    \includegraphics[width=0.75\textwidth]{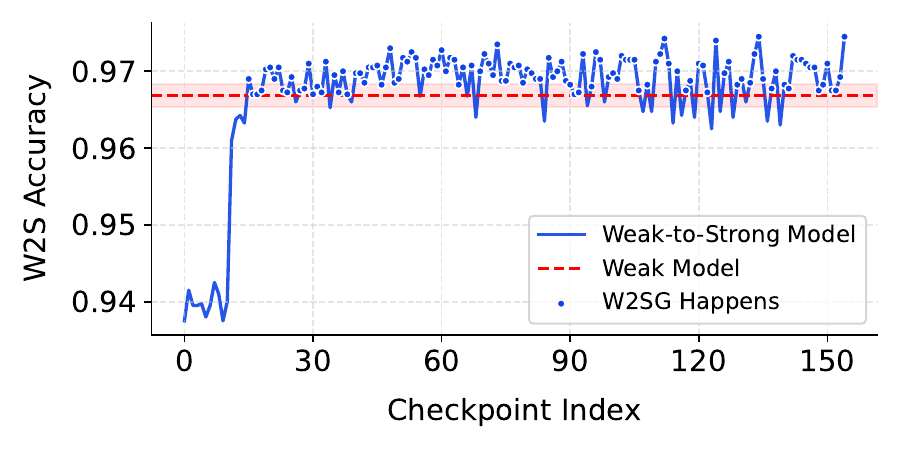}
    \caption{Pythia-6.9B}
\end{subfigure}
\vspace{20pt}
\begin{subfigure}{\textwidth}
    \centering
    \includegraphics[width=0.8\textwidth]{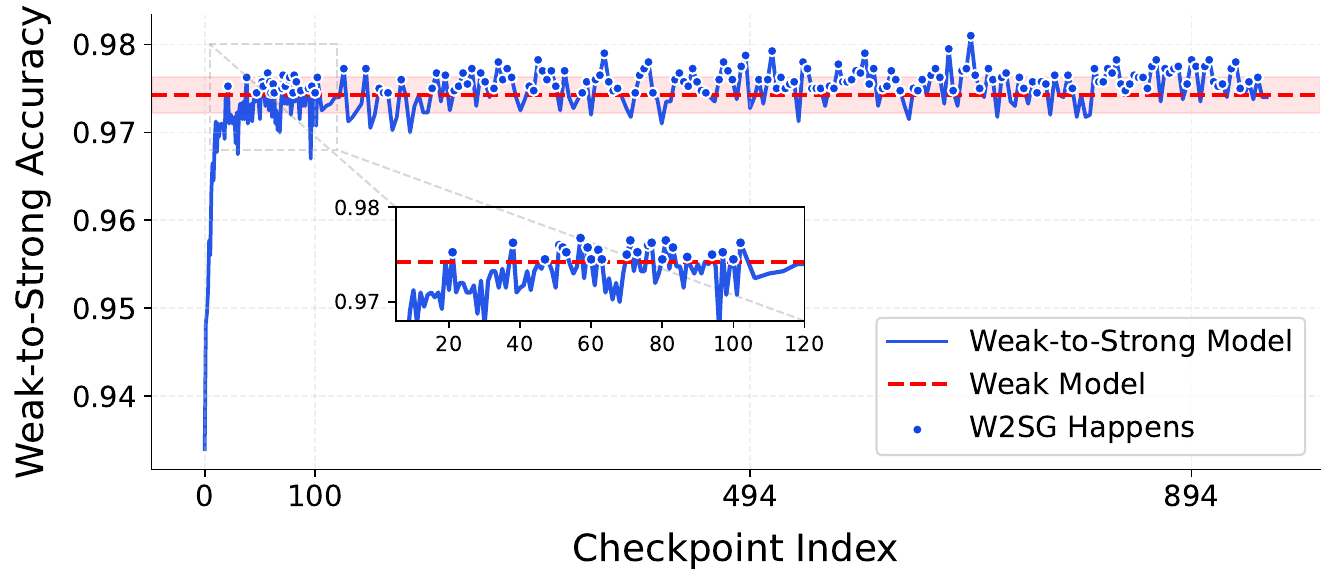}
    \caption{OLMo-7B}
\end{subfigure}
\vspace{-25pt}
\caption{Evolution of W2SG dynamics during pre-training on the CAI-Harmless dataset. 
The plot reports the test accuracy of the strong student model fine-tuned on weak labels. 
The solid curve represents the mean performance aggregated over 5 random seeds, while the shaded region indicates the standard deviation. 
The dashed horizontal line denotes the weak supervisor's performance level. }
\label{fig:w2s_cai}
\vspace{-5pt}
\end{figure*}

\subsection{W2SG and Linear Representation Hypothesis} \label{appendix:lrh}

\subsubsection{Linear Probing Methodology}
To quantify the linear separability of the learned features, we employ the standard linear probing technique~\citep{alain2016understanding,belinkov2022probing}. 
Specifically, given each original dataset consisting of sentences and the corresponding class labels, we feed the sentence into LLMs and collect the corresponding activations of the last token~\cite{li2023inferencetime,gurnee2024language} for each layer. The activation dataset $\mathcal{D}=\{( \mathbf{x}_i ,y_i) \}_{i=1}^N$ is constructed with the activations $\mathbf{x}_i \in \mathbb{R}^d$ and the corresponding binary labels $y_i \in \{0,1\}$.
For each dataset, every layer of each pre-training checkpoint produces an activation dataset. 
Therefore, there are $L \times C$ activation datasets for all $L$ layers across $C$ checkpoints. 
We randomly split each activation dataset into training and test sets by 4:1, and fit a binary linear classifier on the training set. 
We train a classifier for each activation dataset, which yields $L \times C$ classifiers. We compute the accuracy on the test set of these classifiers.
In our analysis, we primarily report the performance of the final layer, as it aggregates the comprehensive contextual information required for prediction~\citep{li2025tracing}.

\subsubsection{Correlation Results and Quantitative Statistics} \label{appendix:exp_correlation}

To investigate the relationship between the linear separability of learned features and W2SG performance, we first examine the probing accuracy trajectories across the pre-training process. As illustrated in Figure~\ref{fig:linear_probe_helpful} (HH-RLHF) and Figure~\ref{fig:linear_probe_cai} (CAI-Harmless), the linear probe accuracy consistently increases during the initial pre-training phase before transitioning into sustained fluctuations. This suggests that models in early pre-training stages are already capable of encoding diverse concepts from the dataset in a linearly decodable manner, aligning with the findings in~\citet{qian-etal-2024-towards}.

Comparing these trajectories with the W2SG evolution reveals a striking macroscopic similarity: both linear probe and weak-to-strong model accuracies exhibit an initial sharp improvement followed by a saturated period. To formally quantify this observation, we visualize the correlation between linear probing accuracy and W2SG performance during the first 50 checkpoints in Figure~\ref{fig:correlation_helpful} (HH-RLHF) and Figure~\ref{fig:correlation_cai} (CAI-Harmless). Across both Pythia and OLMo model families and on both distinct datasets, we observe a robust and consistent positive correlation.

To rigorously substantiate this visual evidence, we provide detailed quantitative statistics in Table~\ref{tab:correlation_stats}, reporting the coefficient of determination ($R^2$), Spearman's correlation coefficient ($\rho$), and the associated p-values. The consistently high $R^2$ values (ranging from $0.44$ to $0.89$) and the extremely low, statistically significant p-values reinforce our correlation hypothesis. Furthermore, to account for potential confounds, we report the correlation between standard evaluation loss and final W2SG accuracy in Table~\ref{tab:correlation_new}. A direct comparison demonstrates that linear probes---which explicitly capture the geometric alignment extracted during pre-training---serve as a substantially more predictive indicator of W2SG performance than standard evaluation loss.

These results suggest that as pre-training progresses, the LLM's internal representations increasingly evolve towards a linearly separable manifold. This structured geometric space serves as the necessary foundation enabling the strong model to generalize effectively from weak supervision. The underlying mechanism can be interpreted through the lens of optimization: W2SG training essentially acts as a high-dimensional, non-linear ``probing'' process that navigates complex parameter geometry, yet remains fundamentally anchored by the linear decodability of the base model's representations. Consequently, if the linear representation hypothesis~\citep{park2024the} holds, metrics derived from mechanistic interpretability could serve as potent tools for demystifying W2SG, potentially enabling the ex-ante estimation of W2SG feasibility before undertaking computationally exhaustive fine-tuning.



\begin{figure*}[t]
\centering
\begin{subfigure}{0.49\textwidth}
    \centering
    \includegraphics[width=\textwidth]{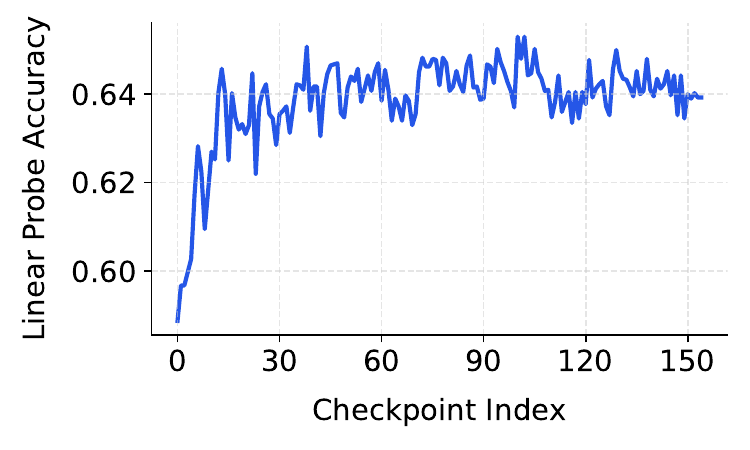}
    \caption{Pythia-6.9B}
\end{subfigure}
\begin{subfigure}{0.49\textwidth}
    \centering
    \includegraphics[width=\textwidth]{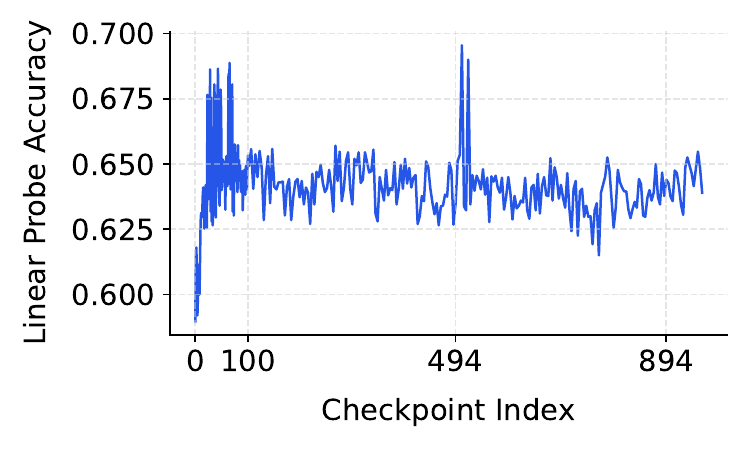}
    \caption{OLMo-7B}
\end{subfigure}
\caption{Last-layer linear probe accuracy of each weak-to-strong model on the HH-RLHF dataset during pre-training.}
\label{fig:linear_probe_helpful}
\end{figure*}

\begin{figure*}[t]
\centering
\begin{subfigure}{0.49\textwidth}
    \centering
    \includegraphics[width=\textwidth]{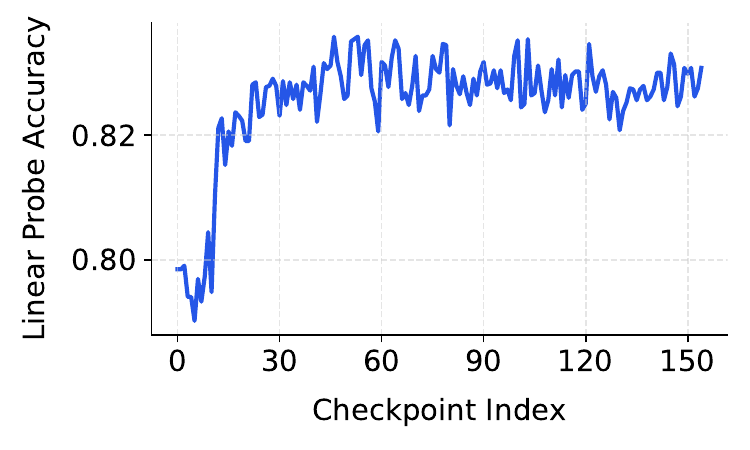}
    \caption{Pythia-6.9B}
\end{subfigure}
\begin{subfigure}{0.49\textwidth}
    \centering
    \includegraphics[width=\textwidth]{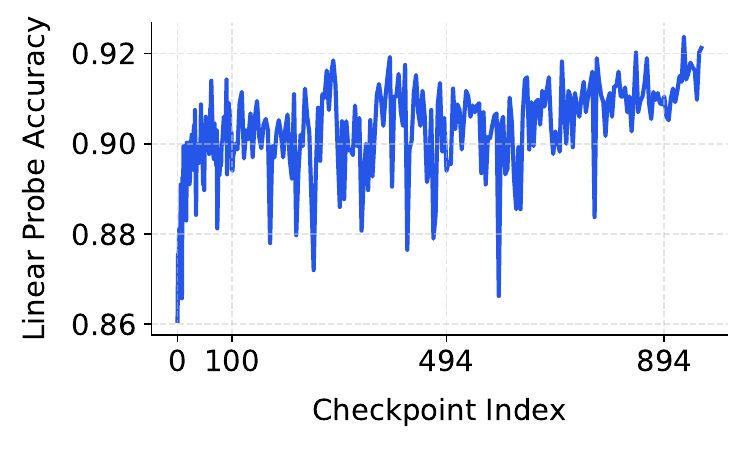}
    \caption{OLMo-7B}
\end{subfigure}
\caption{Last-layer linear probe accuracy of each weak-to-strong model on the CAI-Harmless dataset during pre-training.}
\vspace{-10pt}
\label{fig:linear_probe_cai}
\end{figure*}


\begin{figure*}[t]
\centering
\begin{subfigure}{0.49\textwidth}
    \centering
    \includegraphics[width=\textwidth]{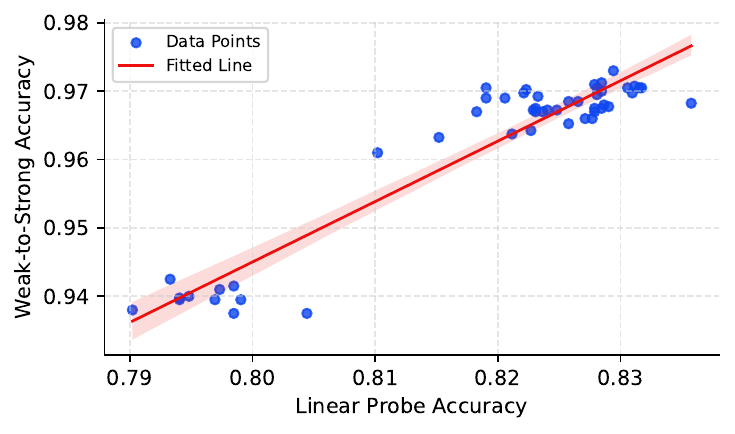}
    \caption{Pythia-6.9B}
\end{subfigure}
\begin{subfigure}{0.49\textwidth}
    \centering
    \includegraphics[width=\textwidth]{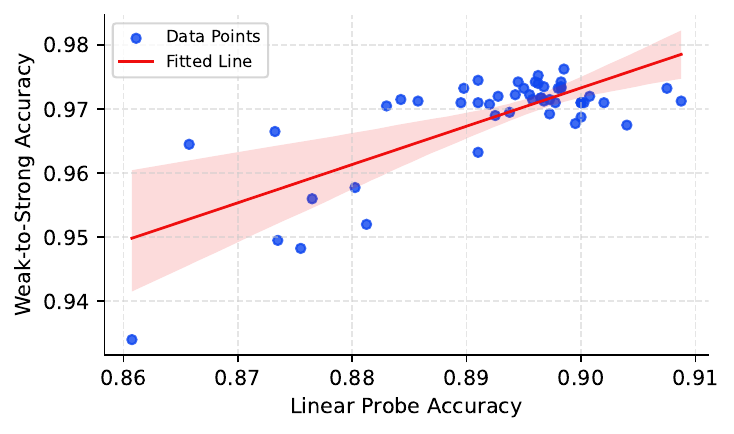}
    \caption{OLMo-7B}
\end{subfigure}
\caption{Correlation between linear probe accuracy and weak-to-strong accuracy on the CAI-Harmless dataset during the first 50 checkpoints. (a) $R^2=0.8983$, Spearman $\rho=0.7552$, $\text{p-value}=1.79 \times 10^{-25}$. (b) $R^2=0.5595$, Spearman $\rho=0.3947$, $\text{p-value}=4.30 \times 10^{-10}$.}
\vspace{-10pt}
\label{fig:correlation_cai}
\end{figure*}

\begin{figure*}[t]
\centering
\begin{subfigure}{0.49\textwidth}
    \centering
    \includegraphics[width=\textwidth]{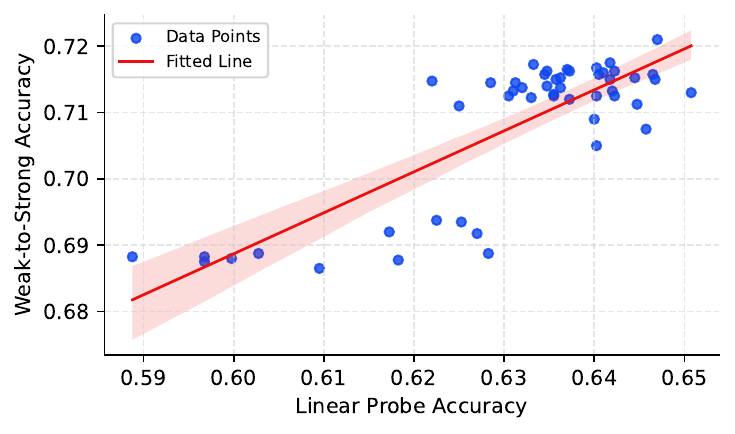}
    \caption{Pythia-6.9B}
\end{subfigure}
\begin{subfigure}{0.49\textwidth}
    \centering
    \includegraphics[width=\textwidth]{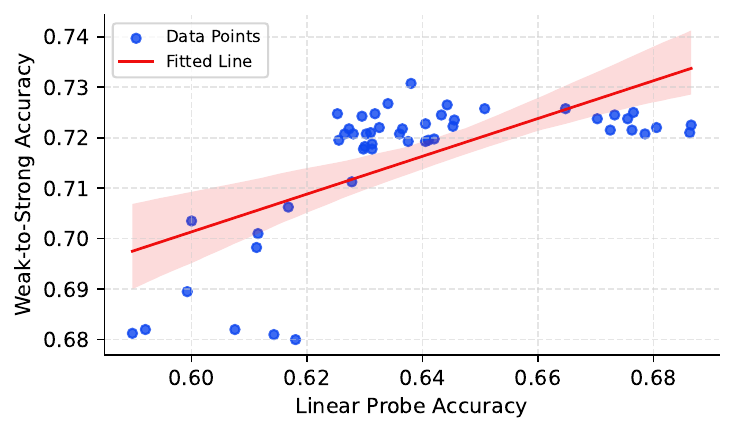}
    \caption{OLMo-7B}
\end{subfigure}
\caption{Correlation between linear probe accuracy and weak-to-strong accuracy on the HH-RLHF dataset during the first 50 checkpoints. The shaded region indicates standard deviation. (a) $R^2=0.6624$, Spearman $\rho=0.6003$, $\text{p-value}=6.67 \times 10^{-13}$. (b) $R^2=0.4403$, Spearman $\rho=0.663$, $\text{p-value}=1.5 \times 10^{-7}$.}
\vspace{-10pt}
\label{fig:correlation_helpful}
\end{figure*}


\begin{table}[t] 
\centering
\caption{Correlation analysis between W2SG performance and linear probe accuracy.}
\label{tab:correlation_stats}
\begin{tabular}{lccc}
\toprule
\multicolumn{1}{c}{\textbf{Model}} & \textbf{$R^2$} & \textbf{Spearman $\rho$} & \textbf{p-value} \\
\midrule
Pythia (HH-RLHF)      & $0.6624$ & $0.6003$ & $6.676 \times 10^{-13}$ \\
Pythia (CAI-Harmless) & $0.8983$ & $0.7552$ & $1.790 \times 10^{-25}$ \\
OLMo (HH-RLHF)        & $0.4403$ & $0.6630$ & $1.500 \times 10^{-7}$ \phantom{0} \\
OLMo (CAI-Harmless)   & $0.5595$ & $0.3947$ & $4.304 \times 10^{-10}$ \\
\bottomrule
\end{tabular}
\end{table}

\begin{table}[t] 
\centering
\caption{Correlation analysis between W2SG performance and evaluation loss.}
\label{tab:correlation_new}
\begin{tabular}{lccc}
\toprule
\multicolumn{1}{c}{\textbf{Model}} & \textbf{$R^2$} & \textbf{Spearman $\rho$} & \textbf{p-value} \\
\midrule
Pythia (HH-RLHF)          & $0.3697$ & $-0.5122$ & $3.647 \times 10^{-4}$ \\
Pythia (CAI-Harmless)     & $0.2610$ & $-0.6210$ & $1.503 \times 10^{-4}$ \\
OLMo (HH-RLHF)            & $0.4484$ & $-0.4910$ & $1.052 \times 10^{-7}$ \\
OLMo (CAI-Harmless)       & $0.3485$ & $-0.4670$ & $6.403 \times 10^{-6}$ \\
\bottomrule
\end{tabular}
\end{table}

\end{document}